%% file: iclr2026_conference.tex
\documentclass{article} % For LaTeX2e
\usepackage{iclr2026_conference,times}
\input{math_commands.tex}

\usepackage{hyperref}
\usepackage{url}
\usepackage{amsmath, amssymb, amsfonts, amsthm}
\usepackage{algorithm}
\usepackage{algpseudocode}
\usepackage{graphicx}

\usepackage{booktabs}
\usepackage{subcaption}

\usepackage{multirow} % For multirow cells

\usepackage{booktabs}   % 用于 \toprule, \midrule, \bottomrule，提供更美观的横线

\usepackage{siunitx}    % 用于 S 列类型，实现小数点对齐和单位管理
\usepackage{adjustbox}

\usepackage{makecell}
\usepackage{titletoc}

\newtheorem{theorem}{Theorem}

\theoremstyle{definition}

\newtheorem{corollary}{Corollary}

\newtheorem{assumption}{Assumption}

\newcommand{\calS}{\mathcal{S}}

\newcommand{\diam}{\mathrm{diam}}

\title{Hinge Regression Tree: A Newton Method for Oblique Regression Tree Splitting}

\author{Hongyi Li, Han Lin \& Jun Xu\thanks{Also with the Shenzhen Key Lab for Advanced Motion Control and Modern Automation Equipments.} \thanks{Corresponding author: Jun Xu.} \\
School of Intelligence Science and Engineering \\
Harbin Institute of Technology, Shenzhen \\
Shenzhen, 518055, China \\
\texttt{\{23b904015, 24s153081\}@stu.hit.edu.cn} \\
\texttt{xujunqgy@hit.edu.cn}
\\\\
Code: \url{https://github.com/Hongyi-Li-sz/Hinge-Regression-Tree}
}

\iclrfinalcopy % Uncomment for camera-ready version, but NOT for submission.
\begin{document}

\maketitle

\begin{abstract}
Oblique decision trees combine the transparency of trees with the power of multivariate decision boundaries—but learning high-quality oblique splits is NP-hard, and practical methods still rely on slow search or theory-free heuristics.
We present the Hinge Regression Tree (HRT), which reframes each split as a non-linear least-squares problem over two linear predictors whose max/min envelope induces ReLU-like expressive power.
The resulting alternating fitting procedure is exactly equivalent to a damped Newton (Gauss–Newton) method within fixed partitions.
We analyze this node-level optimization and, for a backtracking line-search variant, prove that the local objective decreases monotonically and converges; in practice, both fixed and adaptive damping yield fast, stable convergence and can be combined with optional ridge regularization.
We further prove that HRT’s model class is a universal approximator with an explicit $O(\delta^2)$ approximation rate, and show on synthetic and real-world benchmarks that it matches or outperforms single-tree baselines with more compact structures.
\end{abstract}

\section{Introduction}

Decision trees are among the most influential models in supervised learning due to their interpretability and ability to capture nonlinear relationships. The classical CART framework \citep{breiman1984cart} introduced axis-aligned recursive partitioning, which remains a cornerstone of modern tree-based methods. However, such axis-aligned trees often require deep structures to approximate even simple relationships in high-dimensional or correlated settings, limiting efficiency and generalization \citep{hastie2009elements}.

To address these issues, oblique regression trees extend splitting criteria from axis-aligned thresholds to hyperplanes defined by linear combinations of features. This formulation yields more compact structures and improved predictive performance, particularly when features are correlated or rotated \citep{murthy1994oc1}. Nonetheless, finding the optimal oblique hyperplane is NP-hard \citep{laurent1976constructing,hancock1996lower}, and practical algorithms have relied on greedy heuristics, evolutionary methods, or convex surrogates \citep{loh2014fifty}. Recent work explores optimization-based formulations \citep{panda2024vanilla, karthikeyanlearning}, but efficient and theoretically sound solutions remain limited.

We introduce a novel oblique regression tree algorithm, termed the Hinge Regression Tree (HRT), that fundamentally redefines the node splitting problem. Specifically, we propose to learn each node split as a nonlinear least squares optimization problem involving two distinct linear models. The basis function employs a hinge formulation to model this nonlinear optimization, intrinsically endowing HRT with ReLU-like nonlinear expressive power. To improve robustness under multicollinearity, ridge regularization \citep{hoerl1970ridge} is optionally incorporated within the fitting step. The resulting iterative optimization can be interpreted as a damped Newton (Gauss--Newton) method within fixed partitions. Furthermore, we show that, for a backtracking line-search variant, the node-level objective decreases monotonically and converges, and that, when the partition stabilizes, the iterates converge to the corresponding OLS minimizer, while in practice both fixed and adaptive damping exhibit fast and stable convergence. Separately, we establish that the induced piecewise linear model class enjoys strong universal approximation guarantees.

Our contributions can be summarized as follows:
(1) We introduce a novel HRT algorithm that reformulates node splitting as a nonlinear least squares optimization over two linear functions. The resulting HRT model, by hierarchically composing its max/min envelope-based splits, intrinsically gains ReLU-like nonlinear expressive power and supports optional ridge regularization.
(2) We characterize the node-level alternating optimization as a damped Newton/Gauss--Newton method and, for a backtracking line-search variant, prove that the node-level objective decreases monotonically and converges, thereby providing a theoretical foundation for its efficiency and stable behaviour in practice.
(3) We establish that the resulting piecewise linear models constitute a universal approximator with an explicit $O(\delta^2)$ approximation rate, validating HRT's powerful function approximation capabilities.
(4) Through extensive experiments on synthetic and real-world datasets, we demonstrate that HRT achieves competitive performance compared to single-tree baselines while maintaining a more compact structure.
By integrating regression modeling objectives with optimization theory, our work advances oblique regression trees as both a practical and theoretically principled tool for nonlinear function approximation.

\section{Related Work}

\paragraph{Oblique regression trees.}  
The quest for optimal oblique splits dates back decades, driven by their ability to form more compact and expressive decision boundaries compared to axis-aligned trees \citep{murthy1994oc1, loh2014fifty}. Early approaches largely relied on greedy heuristics, such as OC1 \citep{murthy1994oc1} which employed randomized search, or statistical tests for feature selection, like GUIDE \citep{loh2009improving}. However, the NP-hard nature of finding optimal oblique hyperplanes \citep{laurent1976constructing} has motivated a shift towards more sophisticated optimization-based methodologies. These include alternating optimization strategies, which iteratively refine split parameters and data assignments (e.g., TAO \citep{carreira2018alternating}), and gradient-based optimization techniques. Recent differentiable oblique trees, such as DGT \citep{karthikeyanlearning}, train trees end-to-end using gradient descent with approximations like over-parameterization and straight-through estimators. Similarly, DTSemNet \citep{panda2024vanilla} formulates oblique decision trees as neural networks, enabling optimization with standard gradient descent. While these methods have significantly advanced the field, they often rely on heuristics, approximations, or specific neural network architectures. Our work distinguishes itself by reframing the core splitting problem as a direct, nonlinear least squares optimization with a clear equivalence to a damped Newton method, providing a theoretically grounded and efficient alternative.

\paragraph{Piecewise linear models and hinge functions.}  

Regression trees are essentially a piecewise linear approximation that recursively partitions the input space and fits a simple model (typically constant or linear) within each region.
The universal approximation property of piecewise linear functions is a cornerstone of approximation theory, providing a theoretical foundation for the expressive power of such models \citep{cybenko1989approximation,hornik1991approximation}. More recent developments confirm explicit convergence rates for oblique trees \citep{cattaneo2024convergence}, while deep networks provide complementary insights into piecewise linear expressivity \citep{hu2021understanding}.
Hinge functions are indispensable piecewise linear primitives in machine learning models, valued for their non-smoothness and geometric interpretability \citep{ergen2021convex}. 
\textcolor{black}{As a pioneering work, hinging hyperplanes \citep{breiman1993hinging} construct global additive expansions of hinge units. Their influence extends broadly, from inspiring the hinge loss used in SVMs \citep{cortes1995support} to motivating the ReLU activation functions that dominate modern neural networks \citep{nair2010rectified, glorot2011deep}. Owing to their piecewise linear structure, hinge functions are particularly effective for defining efficient and interpretable decision boundaries. However, the reliance on additive combinations in hinging hyperplanes can obscure the interpretability of the resulting models. In contrast, our method directly leverages the characteristics of hinging hyperplanes to formulate node splits, while improving interpretability through explicit expressions of the corresponding subregions.}

\section{Tree Construction and Oblique Split Optimization}
\label{sec:algorithm}

\textcolor{black}{This section describes our procedure for constructing the HRT, an oblique decision tree. We first specify the construction objective, then detail how oblique splits are optimized at each internal node, followed by the recursive routine used to construct the full tree. Our key insight is to formulate each split as a nonlinear least-squares problem over two linear models. This formulation allows the tree to directly learn a locally adaptive piecewise-linear structure, from which the splitting hyperplane arises naturally.}  %Importantly, the global construction objective is also continuously minimized throughout the recursive process.

\subsection{\textcolor{black}{Construction objective of the HRT}}
\label{sec:construction}

\textcolor{black}{Consider a dataset $\mathcal{S} = \{(\vx_j, y_j)\}_{j=1}^N$, where $\vx_j \in \mathbb{R}^{d}$ and $y_j \in \mathbb{R}$, suppose \(\tilde{\vx} \in \R^{d+1}\) is the augmented feature vector, i.e., \(\tilde{\vx} = [x_{1}, \dots, x_{d}, 1]^T\). Assume there are $T$ nodes $\mathbb{D}_t$, indexed in a breadth-first order by $t \in \mathbb{T} = \{1, \ldots, T\}$. The nodes can be classified into two types: internal nodes, which execute branching tests and are denoted by $t \in \mathbb{T}_{\mathrm{I}}$, and leaf nodes, which are utilized for prediction and are denoted by $t \in \mathbb{T}_{\mathrm{L}}$. Starting from the root node, the tree grows through the internal nodes to reach the leaf nodes. For each node $\mathbb{D}_t$, $t \in \mathbb{T}$, there is an associated parameter vector $\vtheta_t \in \mathbb{R}^{d+1}$ describing the corresponding linear relationship. Let $t_{l}(\vx)$ be the leaf index reached by input $\vx$, and let its linear predictor be $\ell_{t_l}(\vx)=\tilde{\vx}^T\vtheta_{t_l(\vx)}$. The tree's output function is thus $\hat{y}(\vx)=\ell_{t_l(\vx)}(\vx)$. The global approximation accuracy refers to the training set is then described as
\begin{equation}\label{eq:global_objective}
   J(\mathbb{D}_t, \vtheta) = \frac{1}{2}\sum\limits_{i=1}^N (y_i-\hat{y}(\vx_i))^2, 
\end{equation}
where $\hat{y}(\vx_i)=\ell_{t_l(\vx_i)}(\vx_i)$, meaning that the input $\vx_i$ belongs to a unique leaf node $\mathbb{D}_{t_l(\vx_i)}$ with $t_l(\vx_i) \in \mathbb{T}_{\mathrm{L}}$. The indices of the internal nodes and leaf nodes, i.e., $\mathbb{T}_{\mathrm{I}}$ and $\mathbb{T}_{\mathbb{L}}$ are dynamically changing during the construction the HRT.}
%and $\mathbb{I}(\vx_i \in \mathbb{D}_{j_i}) = 1$. Here
% $\mathbb{I}(\cdot)$ is an indicator function, i.e.,
% \[
% \mathbb{I}(\vx_i \in \mathbb{D}_j) =
% \begin{cases}
%     1, & \vx_i \in \mathbb{D}_j, \\
%     0, & \vx_i \notin \mathbb{D}_j,
% \end{cases}
% \]
% with $\sum_{j \in \mathbb{T}_{\mathrm{L}}} \mathbb{I}(\vx_i \in \mathbb{D}_j) = 1.$
% $\ell(\cdot)$ is the loss function, 
% \begin{equation}\label{eq:prediction}
% \hat{y}_i = \tilde{\vx}_i^T \left( \sum_{j \in \mathbb{T}_{\mathrm{L}}} \vtheta_j \, \mathbb{I}(\vx_i \in \mathbb{D}_j) \right),
% \end{equation}
% and  

% For each input $\vx_i$, it  Equation (\ref{eq:prediction}) therefore implies that the tree output for $\vx_i$ is given by $\tilde{\vx}_i^T \vtheta_{j_i}$.

\subsection{Optimization of oblique splits}
\subsubsection{Objective Function}\label{sec:local_optimization}
At any internal node $\mathbb{D}_t, t\in \mathbb{T}_{\mathrm{I}}$, our goal is to find two sets of parameters, \(\vtheta_{t_1}, \vtheta_{t_2} \in \R^{d+1}\), and these parameters define two linear functions, \(\ell_{t_1}(\vx) = \tilde{\vx}^T \vtheta_{t_1}\) and \(\ell_{t_2}(\vx) = \tilde{\vx}^T \vtheta_{t_2}\) that best fit the data. Specifically, we aim to minimize the following nonlinear least squares objective function:
\begin{equation}
V({\vtheta}) = \frac{1}{2} \sum_{\vx_j\in \mathbb{D}_t} \left( y_j - h(\vx_j, {\vtheta}) \right)^2
\label{eq:objective_en}
\end{equation}
with respect to \({\vtheta} = [\vtheta_{t_1}^T, \vtheta_{t_2}^T]^T \in \R^{2(d+1)}\), which is the total parameter vector, and \(h(\vx_j, {\vtheta})\) is defined as:
\[
h(\vx_j, {\vtheta}) = \max \left( \tilde{\vx}_j^T \vtheta_{t_1}, \tilde{\vx}_j^T \vtheta_{t_2} \right) \quad \text{or}  \quad h(\vx_j, \vtheta) = \min \left( \tilde{\vx}_j^T \vtheta_{t_1}, \tilde{\vx}_j^T \vtheta_{t_2} \right)
\label{eq:model_func_en}
\]

The basis function \( h(\vx_j, {\vtheta}) \) is a hinge function. This function intrinsically defines a decision boundary in the data space, given by the hyperplane \( \tilde{\vx}^T (\vtheta_{t_1} - \vtheta_{t_2}) = 0 \), where  \(\ell_{t_1}(\vx) =  \ell_{t_2}(\vx)\). Depending on which side of this hyperplane a data point \(\vx_j\) lies (i.e., whether \( \tilde{\vx}_j^T \vtheta_{t_1}  \ge \tilde{\vx}_j^T \vtheta_{t_2} \) or vice versa), the model selects either \( \ell_{t_1}(\vx_j) \) or \( \ell_{t_2}(\vx_j) \) for prediction. This hinge-enabled piecewise linear structure flexibly adapts to both convex and concave local data structures.

From a tree model perspective, minimizing \( V({\vtheta}) \) corresponds to optimizing a structure with one root and two leaves. The root uses the hyperplane \( \tilde{\vx}^T (\vtheta_{t_1} - \vtheta_{t_2}) = 0 \) as its decision boundary to partition the data, i.e., the internal node $\mathbb{D}_{t}$ is partitioned into two leaf nodes $\mathbb{D}_{t_1}$ and $\mathbb{D}_{t_2}$. Each leaf corresponds to a linear function \( \ell_{t_1}(\vx) \) or \( \ell_{t_2}(\vx) \) used for prediction in their respective data subsets.

\subsubsection{Optimization as a Newton Method}
\label{newton}
We address the nonlinear least squares optimization problem by employing an iterative procedure, which we formulate as a Newton method to derive an efficient solution.

Directly minimizing Equation (\ref{eq:objective_en}) is difficult due to the non-differentiable function. Our iterative algorithm operates by defining two dynamic partitions of the data based on the current parameters \(\vtheta\):
\begin{align*}
    \calS_1(\vtheta) &= \{\vx_j \in \mathbb{D}_t \mid \tilde{\vx}_j^T \vtheta_{t_1} \ge \tilde{\vx}_j^T \vtheta_{t_2} \} \\
    \calS_2(\vtheta) &= \mathbb{D}_t \setminus \calS_1(\vtheta)
\end{align*}
For a fixed partition \((\calS_1, \calS_2)\), the objective function is differentiable with respect to \(\vtheta_{t_1}\) and \(\vtheta_{t_2}\) within each respective domain. Under this condition, the Newton update relies on the gradient \(\nabla V\) and the Hessian matrix \(\nabla^2 V\). 
Generally, a Newton update is given by \( \vtheta^{(k+1)} = \vtheta^{(k)} - \mu [\nabla^2 V]^{-1} \nabla V \). In our case, due to the locally linear nature of \(h(\vx_j, \vtheta)\) within each fixed partition, its second derivative is zero, which means the Gauss-Newton approximation to the Hessian is exact. Combining this exact Hessian with the gradient structure (as derived in detail in Appendix \ref{sec:appendix_newton_en}), this update simplifies to:
\begin{equation}
\vtheta^{(k+1)} = \vtheta^{(k)} + \mu (\vtheta_{\text{OLS}}^{(k)} - \vtheta^{(k)})
\label{eq:newton_update_en}
\end{equation}
where \(\vtheta^{(k)}\) is the current parameter, and \(\vtheta_{\text{OLS}}^{(k)}\) represents the optimal Ordinary Least Squares (OLS) solution independently computed for its respective data subset based on the current partitions \(\calS_1(\vtheta^{(k)})\) and \(\calS_2(\vtheta^{(k)})\). The term \( (\vtheta_{\text{OLS}}^{(k)} - \vtheta^{(k)}) \) corresponds to the Newton direction \( -[\nabla^2 V]^{-1} \nabla V \) for the current fixed partition. \(\mu\) is the step size. After \(\vtheta^{(k+1)}\) is computed, the data points are then reassigned to new partitions \(\calS_1(\vtheta^{(k+1)})\) and \(\calS_2(\vtheta^{(k+1)})\) for the next iteration, based on these updated parameters. This alternating procedure of parameter fitting and partition re-assignment forms the core of our optimization. Our iterative procedure employs a step size \(\mu \in (0, 1]\). When \(\mu=1\), the new parameters are directly set to the optimal OLS solution for the current partitions, i.e., \(\vtheta^{(k+1)} = \vtheta_{\text{OLS}}^{(k)}\), which represents a unit-step Newton update. For a detailed derivation of the general case and its equivalence to OLS when \(\mu=1\), please refer to Appendix \ref{sec:appendix_newton_en}. 
\textcolor{black}{We support two step-size strategies in practice, both sharing the same Newton direction but differing in how $\mu^{(k)}$ is chosen:
(i) a \emph{fixed} damping factor $\mu \in (0,1]$, which yields a simple and efficient implementation, and
(ii) an \emph{automatic} backtracking line search (denoted as ``auto'' in our experiments), which selects $\mu^{(k)}$ adaptively to ensure sufficient decrease of the node-level objective.}

% \subsubsection{Non-increase of the global objective}\label{sec:nonincrease}
% We show that after each split, the global objective of the HRT construction is non-increasing. For any internal node $\mathbb{D}_t, t \in \mathbb{T}_{\mathrm{I}}$, as discussed in Section~\ref{sec:local_optimization}, each split is obtained by minimizing the local objective in Equation~(\ref{eq:objective_en}). This implies that, starting from $\vtheta = [\vtheta_t^T, \vtheta_t^T]$, once the optimal parameter $\vtheta^* = [\vtheta_{t_1}^T, \vtheta_{t_2}^T]$ is reached, the difference of the global objective (\ref{eq:global_objective}) before and after splitting is,
% \[
% \frac{1}{2}\sum\limits_{\vx_j \in \mathbb{D}_t}(y_i-\tilde{\vx}_j^T\vtheta)-\frac{1}{2} \sum_{\vx_j\in \mathbb{D}_t} \left( y_j - h(\vx_j, \vtheta^*) \right)^2=V(\vtheta_t)
% \]
% If we keep all other parts of the tree fixed except for the nodes $\mathbb{D}_{t}, \mathbb{D}_{t_1}, \mathbb{D}_{t_2}$, then the change in the global objective in Equation~(\ref{eq:global_objective}) before and after the split is precisely $V(\vtheta) - V(\vtheta^*)$. Therefore, splitting the former leaf node $\mathbb{D}_t$ into the new leaf nodes $\mathbb{D}_{t_1}$ and $\mathbb{D}_{t_2}$ does not increase the global objective.

\subsection{Recursive Tree Construction}
\label{sec:tree_construction}

The core of the HRT construction is a recursive splitting process. Starting from the root node, each internal node undergoes the node-level optimization procedure described in Section \ref{newton}. This procedure iteratively finds the optimal oblique hyperplane, thus creating a linear decision boundary.
Data points are subsequently assigned to the left or right child node depending on whether \( \tilde{\vx}^T \vtheta_{t_1} \ge \tilde{\vx}^T \vtheta_{t_2} \). 
The split is recursively applied to the resulting child nodes until a stopping condition is satisfied, such as reaching the maximum tree depth, falling below a minimum number of samples, or achieving a Root Mean Squared Error (RMSE) below a predefined threshold.
Through this recursive mechanism, the tree naturally composes a hierarchy of hinge operations. This hierarchical structure endows the model with ReLU-like expressivity.

To enhance model robustness, all OLS fitting steps can optionally incorporate ridge regression (L2 regularization). Furthermore, to ensure continuous tree growth, even if the node-level iterative optimization fails to converge within a specified number of iterations, a simple fallback mechanism is employed to determine the split. Detailed information on these mechanisms, including the mathematical formulation of ridge regression and the specific implementation of the fallback strategy, can be found in Appendix \ref{sec:appendix_details}. The detailed pseudocode for the optimal node-splitting mechanism and the overall tree-building procedure, along with their computational complexity analysis, can be found in Appendix \ref{sec:appendix_algorithms} and \ref{sec:appendix_complexity_en}, respectively.

\subsection{Illustration of the ReLU-like expressive power}

Each internal node $\mathbb{D}_t$ carries two linear scores $\ell_{t_1}(\vx)=\tilde{\vx}^T\vtheta_{t_1}$ and $\ell_{t_2}(\vx)=\tilde{\vx}^T\vtheta_{t_2}$. It routes data points by the hinge test $s_t(\vx)=\ell_{t_1}(\vx)-\ell_{t_2}(\vx)\ge 0$ (left child) or $<0$ (right child). A depth-$k$ tree induces a polyhedral partition of the input space by intersecting halfspaces along root-to-leaf paths. According to Section \ref{sec:construction}, for each input $\vx$, the tree's output function is $\hat{y}(\vx)=\ell_{t_l(\vx)}(\vx)$, which is a piecewise linear function defined on at most $2^k$ regions.
Define the ReLU function $\sigma(u)=\max\{u,0\}$. For any scalars $a,b$, we have $\max(a,b)=a+\sigma(b-a)$ and $\min(a,b)=a-\sigma(a-b)$. This implies that a binary hinge operation at a node is essentially a single-unit ReLU gate acting on linear arguments. Composing these gates along the tree structure yields a computational circuit formed by linear maps and ReLU operations, which justifies its ReLU-like expressive power.

\section{Theoretical Analysis}

\subsection{Convergence Analysis}

Our algorithm uses an alternating optimization strategy. We update parameters by
\[
  \vtheta^{(k+1)} = \vtheta^{(k)} + \mu^{(k)} p^{(k)}, 
  \qquad p^{(k)} = \vtheta_{\mathrm{OLS}}^{(k)} - \vtheta^{(k)} ,
\]
where, as shown in Section~3.2, $p^{(k)}$ is a Newton/Gauss--Newton direction under the current partition.

\textcolor{black}{In practice we instantiate this generic update rule in two ways. 
The first uses a \emph{fixed} damping factor $\mu^{(k)} \equiv \mu \in (0,1]$; the value of $\mu$ (e.g., $0.1$, $0.5$, $1$) is treated as a hyperparameter and selected on a validation split.
The second, denoted as \emph{auto} in our experiments, employs a simple monotone backtracking line search: starting from $\mu^{(k)}=1$ we geometrically decrease $\mu^{(k)}$ until a valid split with a strictly smaller node-level RMSE is found, or terminate if no such step exists.}

\textcolor{black}{For our theoretical analysis we focus on this line-search variant. 
In Appendix~\ref{sec:appendix_convergence_linesearch} we show that, under mild assumptions, the resulting damped Newton method yields a monotonically decreasing sequence of node-level objectives that converges, and that when the partition stabilizes the iterates converge to the corresponding OLS minimizer.
Empirically, we observe that both the fixed-step and auto line-search schemes exhibit fast and stable convergence.
Practical safeguards include optional ridge regularization in the OLS fits to improve conditioning, minimum-sample and error-based stopping at nodes, and a simple fallback (median split) if the node-level optimization fails to make progress within $T_{\max}$ iterations. We terminate when the relative change in $V$ or in $\vtheta$ falls below a tolerance, or when the partition stabilizes.}

\subsection{Universal Approximation Theory}

This section establishes the theoretical foundation for HRT's expressive power. We prove that its piecewise linear model class is a universal approximator for continuous functions and derive an explicit approximation rate that quantifies how its precision scales with domain partitioning.

\begin{theorem}[] 
\label{thm:universal_approximation_rate1}
Let \(\mathcal{F}\) be the class of piecewise linear functions represented by finite oblique regression trees with linear models at the leaves. Let \(g: \mathcal{K} \to \R\) be a twice continuously differentiable function (\(g \in C^2(\mathcal{K})\)) on a compact set \(\mathcal{K} \subset \R^d\). Furthermore, assume that for any constructed partition of \(\mathcal{K}\) into regions \(\{\mathcal{R}_i\}\) with diameter \(\le \delta\), each region \(\mathcal{R}_i\) contains \(N_i\) training data points \((\vx_j, g(\vx_j))_{j=1}^{N_i}\). Let \(X_i\) be the \(N_i \times (d+1)\) design matrix whose \(j\)-th row is \(\tilde{\vx}_j^T\). We further assume that \(X_i\) satisfies \(\lambda_{\min}(X_i^T X_i) \ge c N_i\) for some constant \(c > 0\). Also, \(\sup_{\vx \in \mathcal{K}} \|\tilde{\vx}\|_2 \le D_{\mathcal{K}}\) for some bounded constant \(D_{\mathcal{K}}\).
Then, there exists a constant \(C\) such that for any \(\delta > 0\), we can construct a function \(f \in \mathcal{F}\) by partitioning the domain \(\mathcal{K}\) into regions \(\{\mathcal{R}_i\}\) with \(\max_i \diam(\mathcal{R}_i) \le \delta\), satisfying the uniform error bound:
\[ \sup_{\vx \in \mathcal{K}} |f(\vx) - g(\vx)| \le C \delta^2 \]
This approximation rate directly implies the universal approximation property, i.e., for any \(\epsilon > 0\), there exists a function \(f \in \mathcal{F}\) such that \(\sup_{\vx \in \mathcal{K}} |f(\vx) - g(\vx)| < \epsilon\).
\end{theorem}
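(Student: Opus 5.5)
The plan is to build $f$ explicitly: first partition the domain into small regions, then fit an OLS model on each region, and finally bound the error against the Taylor linearization of $g$. Set $M=\sup_{\vx\in\mathcal{K}}\|\nabla^2 g(\vx)\|_2$, which is finite because $g\in C^2$ and $\mathcal{K}$ is compact. (If $\mathcal{K}$ is not convex, apply Taylor's theorem to a $C^2$ extension of $g$ on the convex hull or on a neighborhood of $\mathcal{K}$.)

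\emph{Step 1 (partition realizable by a tree).} Enclose $\mathcal{K}$ in an axis-aligned cube and repeatedly bisect it along coordinate hyperplanes until every cell has diameter at most $\delta$. Each such bisection is a special case of an oblique split, so the resulting cells $\mathcal{R}_i=\mathrm{cell}\cap\mathcal{K}$ are exactly the leaves of a finite oblique tree. Any function that is affine on each leaf therefore belongs to $\mathcal{F}$.

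\emph{Step 2 (local fit).} In each region $\mathcal{R}_i$, fix an anchor point $\vx_i^0\in\mathcal{R}_i$. Let $L_i(\vx)=g(\vx_i^0)+\nabla g(\vx_i^0)^T(\vx-\vx_i^0)=\tilde{\vx}^T\vtheta_i^*$. Taylor's theorem gives $|g(\vx)-L_i(\vx)|\le \tfrac{M}{2}\delta^2$ for all $\vx\in\mathcal{R}_i$. Take the leaf model to be the OLS fit $\hat{\vtheta}_i=(X_i^TX_i)^{-1}X_i^T\vy_i$, which is well defined by the eigenvalue assumption. Write $\vy_i=X_i\vtheta_i^*+\vr_i$, where the residual vector satisfies $\|\vr_i\|_\infty\le \tfrac{M}{2}\delta^2$. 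Then $\hat{\vtheta}_i-\vtheta_i^*=(X_i^TX_i)^{-1}X_i^T\vr_i$.

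\emph{Step 3 (coefficient stability, the main obstacle).} We need a bound on $\|(X_i^TX_i)^{-1}X_i^T\vr_i\|_2$ that does not blow up with $N_i$. The direct bound is
\[
\|(X_i^TX_i)^{-1}X_i^T\vr_i\|_2\le \lambda_{\min}^{-1}\|X_i^T\vr_i\|_2 .
\]
Since $\|X_i^T\vr_i\|_2\le\sum_j\|\tilde{\vx}_j\|_2|r_j|\le N_iD_{\mathcal{K}}\tfrac{M}{2}\delta^2$, we obtain
\[
\|\hat{\vtheta}_i-\vtheta_i^*\|_2\le \frac{N_iD_{\mathcal{K}}M\delta^2}{2cN_i}=\frac{D_{\mathcal{K}}M}{2c}\delta^2 .
\]
The assumption $\lambda_{\min}\ge cN_i$ is exactly what cancels the factor $N_i$, so the bound is uniform in $N_i$ and in $i$. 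I expect this step to be the crux. One should check that the constant $c$ is genuinely uniform in $\delta$, and the theorem's hypothesis is read as granting this. (Geometrically, the eigenvalue tends to shrink for small regions, but we simply use the stated assumption.)

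\emph{Step 4 (combine).} For $\vx\in\mathcal{R}_i$, the triangle inequality gives
\[
|f(\vx)-g(\vx)|\le |\tilde{\vx}^T(\hat{\vtheta}_i-\vtheta_i^*)|+|L_i(\vx)-g(\vx)|\le D_{\mathcal{K}}\frac{D_{\mathcal{K}}M}{2c}\delta^2+\frac{M}{2}\delta^2 .
\]
So the bound holds with $C=\tfrac{M}{2}(1+D_{\mathcal{K}}^2/c)$, independent of $\delta$. Taking the supremum over all leaves gives the uniform bound. Universal approximation then follows by choosing $\delta<\sqrt{\epsilon/C}$.
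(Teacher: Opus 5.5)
Your proposal is correct and follows the paper's proof essentially step for step: a tree-realizable partition into regions of diameter at most $\delta$, a first-order Taylor anchor in each region, the identity $\hat{\vtheta}_i-\vtheta_i^*=(X_i^TX_i)^{-1}X_i^T\mathbf{r}_i$, and the triangle inequality; your cube bisection intersected with $\mathcal{K}$, together with your $C^2$-extension remark (with $M$ then taken over the extension), in fact makes rigorous two points the paper only asserts, namely that $\mathcal{K}$ splits into pieces realizable by a tree and that the Taylor segments stay inside the domain. The only substantive difference is in Step 3, where the paper uses $\|(X_i^TX_i)^{-1}X_i^T\|_2=1/\sigma_{\min}(X_i)\le 1/\sqrt{cN_i}$ together with $\|\mathbf{r}_i\|_2\le\sqrt{N_i}\,\|\mathbf{r}_i\|_\infty$ to obtain $C=\tfrac{M}{2}(1+D_{\mathcal{K}}/\sqrt{c})$; this constant is never larger than your $\tfrac{M}{2}(1+D_{\mathcal{K}}^2/c)$ because $c\le D_{\mathcal{K}}^2/(d+1)$, and it degrades more gracefully if $c$ is allowed to shrink like $\delta^2$ (as it must, since $\lambda_{\min}(X_i^TX_i)\le N_i\delta^2$ for any region of diameter $\delta$), giving an $O(\delta)$ bound where your estimate gives only $O(1)$.
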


For the detailed proof, please refer to Appendix \ref{sec:appendix_universal_approximation}.

\noindent\textbf{Intuition.} Our HRTs serve as universal approximators by recursively partitioning the input space into convex regions and fitting local linear models in each. As partitions refine (region diameter \(\delta\) decreases), the piecewise linear predictor uniformly approximates \(g(\vx)\), achieving an \(O(\delta^2)\) approximation rate under standard smoothness assumptions. We assume an eigenvalue lower bound for each leaf \(i\), \(\lambda_{\min}(X_i^T X_i)\!\ge\! c\,N_i\), which keeps local OLS well posed so that its estimation error matches the \(O(\delta^2)\) approximation term. In practice, strong collinearity or small leaf sizes may violate this condition; we therefore optionally employ ridge regularization, replacing the local closed form with \((X_i^T X_i+\alpha I_0)^{-1}X_i^T y\). 
For any \(\alpha>0\), this raises the smallest eigenvalue in the penalized subspace and stabilizes the normal equations.

\section{Experiments}
\label{sec:experiments}

To validate our algorithm's effectiveness and properties, we conducted experiments addressing three key questions: (1) Does the core splitting algorithm converge efficiently as predicted? (2) Can the resulting tree model effectively approximate complex continuous functions, verifying our universal approximation claim? (3) Does our method perform competitively on real-world regression tasks?

\subsection{Convergence Analysis of the Splitting Algorithm}
\label{subsec:convergence}

\textbf{Objective:} This experiment investigates the convergence dynamics of our splitting algorithm, focusing on the critical role of the step size \(\mu\). We aim to empirically demonstrate the trade-off between convergence speed and stability by testing the algorithm on two distinct synthetic datasets: one challenging and unstable, and another well-behaved yet non-trivial. 
This dual approach validates our theoretical claim that while the algorithm is equivalent to a damped Newton method within fixed partitions, a damped step size (\(\mu < 1\)) is essential for robustness, whereas a unit step can achieve rapid convergence in stable scenarios.

\textbf{Setup:} We generate two datasets, each of \(N=1000\) samples with Gaussian noise (\(\epsilon \sim \mathcal{N}(0,1)\)).
For the unstable case, data are generated from the \texttt{sinc} function, \(y = -\frac{\sin(5\pi x)}{5\pi x}+0.025\epsilon\), on \(x \in [-1.5, 1.5]\). Its multiple local extrema and sharp oscillations create a challenging optimization landscape.
For the stable case, data are generated from a smooth, nonlinear function with a distinct inflection point, \(y = \frac{2}{1+e^{-3x}} - 0.8x+0.025\epsilon\), on \(x \in [-3, 3]\). This \texttt{twisted\_sigmoid} function is an ideal well-behaved target, as it requires a nonlinear fit but lacks the sharp features that cause instability.
On both datasets, we test four step sizes: \(\mu \in \{1.0, 0.5, 0.1, 0.05\}\).

\textbf{Results and Analysis:}
The results in Figure \ref{fig:analysis_sinc} (unstable case) and Figure \ref{fig:appendix_analysis_sigmoid} (stable case), show the role of the step size.
As shown in Figure \ref{fig:analysis_sinc}, the results on the \texttt{sinc} function highlight the necessity of damping for stability. The unit Newton step (\(\mu = 1.0\)) is unstable. Its aggressive updates cause a partition collapse within the first few iterations, making the algorithm revert to a poor global linear fit. The large damped step (\(\mu = 0.5\)), while avoiding outright collapse, becomes trapped in a limit cycle; the model's parameters and the corresponding data partition oscillate between a small set of states without ever converging to a fixed point. Small steps (\(\mu = 0.1, 0.05\)) prove robust, converging to a high-quality piecewise linear model that accurately captures the function's complex structure. This clearly demonstrates that for challenging problems, effective damping is not merely a heuristic but a prerequisite for success.

\begin{figure*}[htbp]
\vspace{-0.1in}
\begin{center}
\centerline{\includegraphics[width=0.99\textwidth]{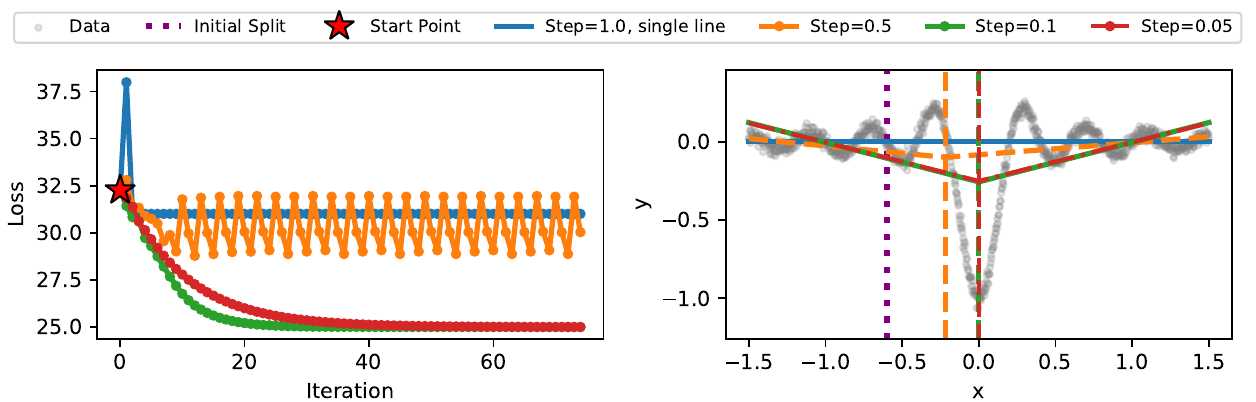}}
\caption{Node-level convergence analysis on the unstable \texttt{sinc} function.
\textbf{Left:} Objective value per iteration for a single internal node (fixed initialization and data subset). The unit step (\(\mu=1.0\), blue) and a large step (\(\mu=0.5\), orange) do not decrease the objective monotonically in this example, and the latter gets trapped in a limit cycle. Smaller damping (\(\mu=0.1\), green; \(\mu=0.05\), red) yields much more regular local Newton dynamics at this node.
\textbf{Right:} Final fitted models for this controlled experiment. With large steps the node effectively collapses to a poor single linear fit, whereas sufficiently damped updates recover a meaningful piecewise linear approximation. Note that this figure illustrates \emph{local} node-level behaviour; full-tree performance with fallback is analysed in Appendix~\ref{apendix_fallback}.}
\label{fig:analysis_sinc}
\end{center}
\vskip -0.2in
\end{figure*}

Results on the \texttt{twisted sigmoid} function (see Figure \ref{fig:appendix_analysis_sigmoid} for visualization and detailed analysis), illustrate the efficiency benefits of the Newton updates. Here, all step sizes converge to the same solution around the function's inflection point. Crucially, the unit Newton step (\(\mu = 1.0\)) exhibits the fastest convergence, reaching the optimal solution in just a few iterations. As the step size decreases, the number of iterations required for convergence predictably increases, showcasing the classic speed-stability trade-off.

\begin{figure*}[h] % Use [h] or [htbp] for appendix figures
\vskip -0.1in
\begin{center}
\centerline{\includegraphics[width=0.99\textwidth]{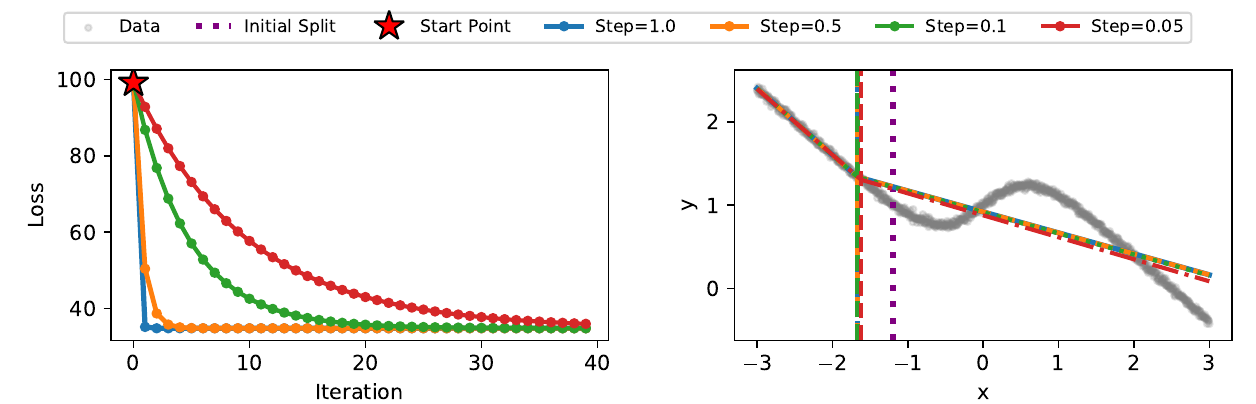}}
\caption{Node-level convergence analysis on the well-behaved \texttt{twisted\_sigmoid} function. 
\textbf{Left:} For this node, all step sizes lead to monotone decrease of the objective. The unit Newton step (\(\mu=1.0\), blue) reaches the local minimum in the fewest iterations.
\textbf{Right:} All step sizes arrive at essentially the same high-quality piecewise linear fit around the function's inflection point. This illustrates that, on stable problems, even aggressive Newton steps can behave well at the node level. Global training behaviour across all nodes and datasets is reported in Tables~\ref{tab:ablation_sinc}--\ref{tab:ablation_delta_ailerons}.}
\label{fig:appendix_analysis_sigmoid} 
\end{center}
\vskip -0.2in
\end{figure*}

 Taken together, these experiments demonstrate our algorithm's behavior. It functions as a damped Newton method. On complex, unstable problems, a small step size (\(\mu < 1\)) is essential to ensure robust convergence by preventing partition collapse. On well-behaved problems, a unit step size (\(\mu = 1\)) leverages the full power of the Newton update, achieving extremely rapid convergence. This confirms the algorithm's theoretical foundation and provides a clear practical guideline: the step size \(\mu\) serves as a crucial hyperparameter to balance convergence speed against stability, depending on the nature of the problem. 
Additional ablation studies on the fallback mechanism are reported in Appendix~\ref{apendix_fallback}.

\subsection{Function Approximation on Synthetic Data}
\label{subsec:approximation}

\textbf{Objective:}
This experiment validates the piecewise linear approximation of our proposed oblique regression tree and connect it with the theoretical Theorem~\ref{thm:universal_approximation_rate1}, demonstrating its superiority in fitting complex functions.

\textbf{Setup:}
We evaluate our method on both 2D and 3D regression tasks.
For 2D tasks, we use two classic test functions: the \texttt{sinc} function \(y = -\frac{\sin(5\pi x)}{5\pi x}\)  on \(x \in [-1.5, 1.5]\) and the \texttt{twisted\_sigmoid} function \(y = \frac{2}{1+e^{-3x}} - 0.8x\)  on \(x \in [-3, 3]\).
For 3D tasks, we evaluate our method's performance on four oscillatory surface functions with complex characteristics (detailed equations available in Appendix \ref{sec:appendix_synthetic_functions}), with inputs \(x_1, x_2\) both ranging from \([-3, 3]\).
All synthetic data are generated by adding independent and identically distributed zero-mean Gaussian noise to the true function values. Specifically, the noise standard deviation is \(0.025\) for 2D tasks and \(0.05\) for 3D tasks.
A detailed rationale for the selection of these synthetic functions and noise levels is provided in Appendix \ref{sec:appendix_synthetic_functions}.
For each task, we generate a sufficient number of data points (1000 total for 2D tasks, 10000 total for 3D tasks) and split them into 70\% for training and 30\% for testing.

We compare our piecewise linear regression tree with two representative baseline methods, CART and XGBoost.
These baseline models are implemented using the \texttt{scikit-learn} library.
Our proposed piecewise linear regression tree is trained using Algorithm~\ref{alg:build_en}. Hyperparameters for all models are optimized via five-fold cross-validation and grid search on the training set, with final performance reported on the testing set.. 
For detailed hyperparameter configurations, please refer to Appendix \ref{sec:appendix_hyperparameters}.

\begin{table}[htbp]

\tiny
%\scriptsize
\caption{Performance comparison on 2D and 3D synthetic functions. All values are reported as mean \( \pm \) standard deviation over 10 repetitions for 2D functions and five repetitions for 3D functions. For RMSE and MAE, lower is better (\(\downarrow\)). For R², higher is better (\(\uparrow\)). Best results are bolded. Significant improvements over the best baseline are marked with \dag{} (\(p<0.05\)).}
\label{tab:synthetic_performance_combined}
\centering
\setlength{\tabcolsep}{3pt}
\begin{tabular}{lccc|lccc}
\toprule
\multicolumn{4}{c|}{\texttt{sinc}} & \multicolumn{4}{c}{\texttt{twisted\_sigmoid}} \\
Model & RMSE($\downarrow$) & MAE($\downarrow$) & R²($\uparrow$)&Model& RMSE($\downarrow$) & MAE($\downarrow$) & R²($\uparrow$)\\
\midrule
CART & 0.0325 \( \pm \) 0.0012 & 0.0254 \( \pm \) 0.0009 & 0.9831 \( \pm \) 0.0049
& CART & 0.0312 \( \pm \) 0.0011 & 0.0249 \( \pm \) 0.0009 & 0.9976 \( \pm \) 0.0002 \\
XGB  & 0.0289 \( \pm \) 0.0010 & 0.0228 \( \pm \) 0.0009 & 0.9868 \( \pm \) 0.0031
& XGB  & 0.0286 \( \pm \) 0.0005 & 0.0226 \( \pm \) 0.0007 & 0.9980 \( \pm \) 0.0001 \\
Ours & \textbf{0.0280 \( \pm \) 0.0009} & \textbf{0.0222 \( \pm \) 0.0008} & \textbf{0.9876 \( \pm \) 0.0028}
& Ours & \textbf{0.0258 \( \pm \) 0.0004\dag{}} & \textbf{0.0205 \( \pm \) 0.0003\dag{}} & \textbf{0.9983 \( \pm \) 0.0001\dag{}} \\
\midrule
\multicolumn{4}{c|}{$f_{1}(x_1,x_2)$} & \multicolumn{4}{c}{$f_{2}(x_1,x_2)$} \\
CART & 0.8552 \( \pm \) 0.0148 & 0.6580 \( \pm \) 0.0127 & 0.9945 \( \pm \) 0.0001
& CART & 0.2721 \( \pm \) 0.0028 & 0.1972 \( \pm \) 0.0024 & 0.9301 \( \pm \) 0.0011 \\
XGB  & 0.3018 \( \pm \) 0.0088 & 0.2317 \( \pm \) 0.0069 & 0.9993 \( \pm \) 0.0000
& XGB  & 0.0859 \( \pm \) 0.0015 & 0.0681 \( \pm \) 0.0013 & 0.9930 \( \pm \) 0.0003 \\
Ours & \textbf{0.1646 \( \pm \) 0.0796\dag{}} & \textbf{0.1080 \( \pm \) 0.0362\dag{}} & \textbf{0.9998 \( \pm \) 0.0003\dag{}}
& Ours & \textbf{0.0757 \( \pm \) 0.0020\dag{}} & \textbf{0.0587 \( \pm \) 0.0016\dag{}} & \textbf{0.9946 \( \pm \) 0.0003\dag{}} \\
\midrule
\multicolumn{4}{c|}{$f_{3}(x_1,x_2)$} & \multicolumn{4}{c}{$f_{4}(x_1,x_2)$} \\
CART & 0.0752 \( \pm \) 0.0011 & 0.0587 \( \pm \) 0.0007 & 0.9832 \( \pm \) 0.0005
& CART & 0.0789 \( \pm \) 0.0014 & 0.0575 \( \pm \) 0.0005 & 0.9945 \( \pm \) 0.0002 \\
XGB  & 0.0537 \( \pm \) 0.0003 & 0.0428 \( \pm \) 0.0001 & 0.9914 \( \pm \) 0.0002
& XGB  & 0.0556 \( \pm \) 0.0005 & 0.0438 \( \pm \) 0.0003 & \textbf{0.9973 \( \pm \) 0.0001} \\
Ours & \textbf{0.0528 \( \pm \) 0.0004\dag{}} & \textbf{0.0420 \( \pm \) 0.0003\dag{}} & \textbf{0.9917 \( \pm \) 0.0002\dag{}}
& Ours & \textbf{0.0555 \( \pm \) 0.0007} & \textbf{0.0436 \( \pm \) 0.0004} & \textbf{0.9973 \( \pm \) 0.0001} \\
\bottomrule
\end{tabular}

\end{table}

\textbf{2D and 3D Experimental Results}
 The results, summarized in Table~\ref{tab:synthetic_performance_combined}, demonstrate our method's ability to effectively approximate complex functions across both 2D and 3D tasks. For a detailed visual comparison of 2D approximation performance and visualizations for $f_1$, $f_2$, $f_3$ and $f_4$, please refer to Appendix \ref{sec:appendix_synthetic_functions}. 
The superior performance of our method across various synthetic functions arises from two fundamental design principles.
First, the combination of piecewise linear modeling and Newton iterative optimization allows for precise and efficient local approximation. In contrast to traditional piecewise constant tree models, we explicitly fit linear or planar functions within each partition and employ Newton updates to quickly converge to high-quality solutions.
Second, flexible oblique splits significantly enhance the adaptability of feature-space partitioning. By overcoming the limitations of axis-parallel splits, they yield decision boundaries that better align with the intrinsic geometry of the data, enabling more efficient and accurate region segmentation.

\subsection{Performance on Real-World Regression Datasets}
\label{subsec:real_world}

\textbf{Objective:} To evaluate the practical performance of our proposed model on standard benchmark regression datasets and compare it against other well-established methods.

\textbf{Setup:} 
To assess the practical performance of the proposed method, we evaluate it on a diverse suite of publicly available regression datasets, including all seven benchmarks from \citet{zharmagambetov2021non} and several large-scale industrial datasets such as \emph{YearPred}. These datasets cover a wide spectrum of feature dimensionalities (\(N_f\)) and sample sizes (\(N_s\)), from low-dimensional, small-sample tasks to high-dimensional, large-scale scenarios (see Table~\ref{tab:data_comparison}).

We compare against the following strong baselines: DTSemNet \citep{panda2024vanilla}, DGT \citep{karthikeyanlearning}, TAO (oblique and axis-aligned) \citep{carreira2018alternating, zharmagambetov2021non}, CART \citep{breiman1984cart}, \textcolor{black}{M5 model trees \citep{wang1997inducing}, the linear trees implemented in the \texttt{linear-tree} library \citep{cerliani2022lineartree}}, 
and XGBoost \citep{chen2016xgboost} (an ensemble method).

For the seven datasets from \citet{zharmagambetov2021non}, we directly cite performance from that work. For the remaining datasets without published results, we reproduce results. The only exception is TAO, which is not publicly available; its reported numbers are directly taken from the original paper.
For reproduced experiments, hyperparameters are tuned via five-fold cross-validation on the training set, with final performance reported on the testing set. Specifically, for datasets with predefined train/test splits, we adhere to these original partitions. For datasets without such predefined splits, we first randomly divide the entire dataset into a 0.5:0.5 train/test ratio. For detailed hyperparameter configurations, please refer to Appendix \ref{sec:appendix_hyperparameters}.

\begin{table}[htbp]
%\small
%\footnotesize
%\scriptsize
\tiny
    \centering
    \caption{Average RMSE results (lower is better) on regression tasks, ± std over five runs. The first column lists the dataset name, followed by the number of features \(N_f\) and the number of samples \(N_s\). The first seven datasets are from \cite{zharmagambetov2021non}. The reported results for TAO (including TAO-A: axis-aligned and TAO-O: oblique) and CART are taken from \cite{zharmagambetov2021non}; DGT results from \cite{karthikeyanlearning}; DTSemNet results from \cite{panda2024vanilla}; XGBoost results from \cite{zharmagambetov2020smaller}. For datasets with reported results in the original papers, we directly cite their results. For datasets without reported results, we reproduce the experiments under the same settings. The only exception is TAO, which is not publicly available. Note that for the Ailerons, D-Elevators, and D-Ailerons datasets, the reported RMSE values are scaled by \(\times 10^{-4}\), \(\times 10^{-3}\), and \(\times 10^{-4}\) respectively. A dash (–) indicates runtime exceeding 10 hours. Significant improvements over the best baseline are marked with \dag{} (\(p<0.05\)). } % 
    \label{tab:data_comparison} 
    \resizebox{\textwidth}{!}{
    \begin{tabular}
{@{}l@{\hspace{5pt}}c@{\hspace{5pt}}c@{\hspace{5pt}}c@{\hspace{5pt}}c@{\hspace{5pt}}c@{\hspace{5pt}}c@{\hspace{5pt}}c@{\hspace{5pt}}c@{\hspace{5pt}}c}
        \toprule
        Dataset (\(N_f, N_s\)) & DTSemNet & DGT  & TAO-A  & TAO-O & CART & XGB & \textcolor{black}{M5} & \textcolor{black}{Linear tree} & HRT (ours)\\ 
        \midrule
        Abalone (8, 4k) & 2.14 \(\pm\) 0.03 & 2.15 \(\pm\) 0.03  & 2.32 \(\pm\) 0.58 & 2.18 \(\pm\) 0.05 & 2.34 \(\pm\) 0.59 & 2.20 \(\pm\) 0.00 & 2.18 \(\pm\) 0.04& 2.22 \(\pm\) 0.05 & \textbf{2.11 \(\pm\) 0.05}\\ 
        CPUact (21, 8k) & 2.65 \(\pm\) 0.18 & 2.91 \(\pm\) 0.15  & 3.26 \(\pm\) 0.51 & 2.71 \(\pm\) 0.04 & 3.28 \(\pm\) 0.44 & 2.57 \(\pm\) 0.00 & 4.16 \(\pm\) 2.72 & 3.05 \(\pm\) 0.66 & \textbf{2.56 \(\pm\) 0.05} \\ 
        Ailerons (40, 14k) & {1.66 \(\pm\) 0.01} & 1.72 \(\pm\) 0.02  & 2.55 \(\pm\) 0.00 & 1.76 \(\pm\) 0.02 & 2.85 \(\pm\) 0.57 & 1.72 \(\pm\) 0.00 & 1.68 \(\pm\) 0.00 & 1.75 \(\pm\) 0.00  & \textbf{1.64 \(\pm\) 0.00\dag{}}\\ 
        CTSlice (384, 54k) & 1.45 \(\pm\) 0.12 & 2.30 \(\pm\) 0.17  & 2.66\(\pm\) 0.04 & {1.54 \(\pm\) 0.05} & 2.69 \(\pm\) 0.03 & \textbf{1.18 \(\pm\) 0.00} & 3.81 \(\pm\) 0.28 & 2.89 \(\pm\) 0.67 &1.41 \(\pm\) 0.15\\ 
        YearPred (90, 515k) & 8.99 \(\pm\) 0.01 & 9.05 \(\pm\) 0.01  & 9.76\(\pm\)0.11 & 9.11 \(\pm\) 0.05 & 9.79 \(\pm\) 0.54 & 9.01 \(\pm\) 0.00 & 9.43 \(\pm\) 0.03  & 9.15 \(\pm\) 0.01 & \textbf{8.97 \(\pm\) 0.02}\\ 
        Concrete (8, 1k)  & 7.96 \(\pm\) 0.33 & 7.43 \(\pm\) 0.24 & 7.20\(\pm\)3.17 & 7.17 \(\pm\) 0.43 & 7.22 \(\pm\) 3.13 & 6.75 \(\pm\) 0.21 & \textbf{6.65 \(\pm\) 0.12} & 6.93 \(\pm\) 0.74  & 6.92 \(\pm\) 0.24\\
        Airfoil (5, 2k) & 3.83 \(\pm\) 0.16 & 3.72 \(\pm\) 0.10 & 2.73\(\pm\)0.62 & 3.13 \(\pm\) 0.38 & 2.75 \(\pm\) 0.62 & 2.77 \(\pm\) 0.10 & 3.21 \(\pm\) 0.06  & 2.81 \(\pm\) 0.12 & \textbf{2.63 \(\pm\) 0.10}\\
        \midrule
        Fried (10, 41k) &1.51 \(\pm\) 0.00 &  2.27 \(\pm\) 0.09& N/A & N/A & 1.96 \(\pm\) 0.01 & \textbf{1.09 \(\pm\) 0.00} & 1.59 \(\pm\) 0.02 & 1.10 \(\pm\) 0.00 &\textbf{1.09 \(\pm\) 0.01\dag{}}\\
        D-Elevators (6, 10k)& 1.46 \(\pm\) 0.00 &  1.46 \(\pm\) 0.01& N/A & N/A& 1.50 \(\pm\) 0.02 & 1.46 \(\pm\) 0.02 &1.43 \(\pm\) 0.02 & 1.45 \(\pm\) 0.02 &\textbf{1.43 \(\pm\) 0.01}\\
        D-Ailerons (5, 7k)& 1.76 \(\pm\) 0.00 & 1.75 \(\pm\) 0.02 & N/A & N/A & 1.83 \(\pm\) 0.04 & 1.72 \(\pm\) 0.05 & \textbf{1.65 \(\pm\) 0.03} &  1.72 \(\pm\) 0.02& 1.68 \(\pm\) 0.02\\
        Kinematics (8, 8k) & 0.168 \(\pm\) 0.000 & 0.135 \(\pm\) 0.003& N/A & N/A& 0.200 \(\pm\) 0.003 & 0.125 \(\pm\) 0.001  & 0.175 \(\pm\) 0.004  & 0.141 \(\pm\) 0.003 & \textbf{0.102\(\pm\) 0.003\dag{}}\\
         \textcolor{black}{C\&C} (127, 2k) & 0.201 \(\pm\) 0.000 & 0.227 \(\pm\) 0.000 & N/A & N/A & 0.160 \(\pm\) 0.005 & 0.142 \(\pm\) 0.004 & 0.145 \(\pm\) 0.004 & 0.195 \(\pm\) 0.009 & \textbf{0.140 \(\pm\) 0.004} \\
        \textcolor{black}{Blog} (280, 60k) & 32.30 \(\pm\) 0.00 & 28.35 \(\pm\) 1.57& N/A & N/A & 26.78 \(\pm\) 2.39 & 23.02 \(\pm\) 0.00 & 22.81 \(\pm\) 0.11 & - & 25.41 \(\pm\) 1.83 \\
        \bottomrule
    \end{tabular}}
\end{table}

\textbf{Results and Analysis:} 
As shown in Table~\ref{tab:data_comparison}, our method achieves the best or highly competitive RMSE across a broad set of datasets. Among all single-tree models, our method demonstrates superior performance on the vast majority of datasets. In most datasets, the proposed approach yields notable reductions in RMSE. Even for large-scale datasets like \emph{YearPred}, our performance remains on par with or occasionally superior to the best existing results (including ensemble methods), demonstrating the scalability of our approach. For hyperparameter configurations, please refer to Appendix \ref{sec:appendix_hyperparameters}.

In addition to RMSE, Table~\ref{tab:tree_complexity} assesses the structural complexity of the learned models in terms of \emph{average depth} (\(\Delta\)) and \emph{average number of leaves} (L) over five runs. Our method typically produces significantly shallower trees with fewer leaves compared to other single-tree baselines. For instance, on \emph{Concrete}, our model achieves competitive RMSE with a depth of only $3$ and $5.8$ leaves, demonstrating superior performance compared to CART, which requires a depth of $11.2$ and $113$ leaves to attain higher error. This indicates that the proposed method attains a favorable trade-off between \emph{predictive performance} and \emph{transparency}. Regarding training time, as shown in Table~\ref{tab:training_time_comparison}, our method also shows efficient training performance on multiple datasets.
\textcolor{black}{While all experiments in this section focus on regression, a preliminary extension of HRT to binary classification is reported in Appendix~\ref{apdx:binary_classification}, where the same framework achieves competitive AUC and F1 scores with substantially shallower trees.}

\begin{table}[htbp]
%\small
%\footnotesize
%\scriptsize
\tiny
    \centering
    \caption{
Average depths \((\Delta)\) and average number of leaves (L) over five repetitions for regression datasets. Only single-tree models are included in this comparison; XGBoost is excluded, as its depth and leaf statistics are not comparable to those of single decision trees. 
A dash (–) indicates that the corresponding result was not reported in the original source.
Datasets and results for TAO (including TAO-A: axis-aligned and TAO-O: oblique) and CART are from \cite{zharmagambetov2021non}, DGT results are from \cite{karthikeyanlearning}, and DTSemNet results are from \cite{panda2024vanilla}.
}
    \label{tab:tree_complexity}
    \begin{tabular}{@{}l@{\hspace{5pt}}|c@{\hspace{5pt}}c@{\hspace{5pt}}|c@{\hspace{5pt}}c@{\hspace{5pt}}|c@{\hspace{5pt}}c@{\hspace{5pt}}|c@{\hspace{5pt}}c@{\hspace{5pt}}|c@{\hspace{5pt}}c@{\hspace{5pt}}|c@{\hspace{5pt}}c@{}}
        \toprule
        Dataset (\(N_f, N_s\)) & \multicolumn{2}{c}{DTSemNet} & \multicolumn{2}{c}{DGT} & \multicolumn{2}{c}{TAO-A} & \multicolumn{2}{c}{TAO-O} & \multicolumn{2}{c}{CART} & \multicolumn{2}{c}{Ours} \\
        \cmidrule(lr){2-3} \cmidrule(lr){4-5} \cmidrule(lr){6-7} \cmidrule(lr){8-9} \cmidrule(lr){10-11} \cmidrule(lr){12-13}
        & \(\Delta\) & L & \(\Delta\) & L & \(\Delta\) & L & \(\Delta\) & L & \(\Delta\) & L & \(\Delta\) & L \\
        \midrule
        Abalone (8, 4k)          & 5.0 & -- & 6.0  & -- & 5.0  & 12.8  & 6.0 & 58.6  & 5.0  & 12.8  & \textbf{2.0} & \textbf{4.0} \\
        CPUact (21, 8k)           & 5.0 & -- & 6.0 & -- & 9.0  & 57.2  & 6.0 & 52.7  & 9.0  & 57.2  & \textbf{2.0} & \textbf{4.0} \\
        Ailerons (40, 14k)         & 5.0 & -- & 6.0 & -- & 7.0  & 15.0  & 6.0 & 60.2  & 7.0  & 15.0  & \textbf{2.0} & \textbf{4.0} \\
        CTSlice (384, 54k)          & 5.0 & -- & 10.0 & -- & 36.0 & 700.0 & \textbf{7.0} & \textbf{74.8} & 36.0 & 700.0 & \textbf{7.0} & 93.2 \\
        YearPred (90, 515k)         & 6.0 & -- & 8.0 & -- & 12.0 & 135.0 & 8.0 & 157.9 & 12.0 & 135.0 & \textbf{4.0} & \textbf{16.0} \\
        Concrete (8, 1k)         & 5.0  & -- & 6.0  & -- & 11.2 & 113.0 & 9.0 & 192.0 & 11.2 & 113.0 & \textbf{3.0} & \textbf{5.8} \\
        Airfoil (5, 2k)          & \textbf{5.0}  & -- & 6.0  & -- & 15.0 & 479.8 & 8.0 & 147.1 & 15.0 & 479.8 & \textbf{5.0} & \textbf{25.0} \\
        \bottomrule
    \end{tabular}
\end{table}

\begin{table}[htbp]
 %\small
%\footnotesize
%\scriptsize
\tiny
    \centering
    \caption{\textcolor{black}{Training time (in seconds, lower is better) on four regression datasets. The first column lists the dataset name, followed by the number of features \(N_f\) and the number of samples \(N_s\). The numbers in parentheses represent the average number of iterations for HRT (ours).}}
    \label{tab:training_time_comparison}
    \begin{tabular}
    {@{}l@{\hspace{5pt}}c@{\hspace{5pt}}c@{\hspace{5pt}}c@{\hspace{5pt}}c@{\hspace{5pt}}c@{\hspace{5pt}}c@{\hspace{5pt}}c}
        \toprule
        Dataset (\(N_f, N_s\)) & DTSemNet & DGT & CART & XGB & M5 & Linear tree & HRT (ours)\\ 
        \midrule
        D-Ailerons (5, 7k)& 9.557 & 60.850 & 0.008 & 2.757 & 0.344 & 3.601 & 0.183 (6.7) \\
        Fried (10, 41k)   & 69.979 & 395.243  & 0.171 & 9.722 & 8.322 & 25.743 & 2.807 (45.7) \\
        Kinematics (8, 8k)   & 16.950 & 93.690 & 0.028 & 14.064 & 1.447 & 7.960 & 0.790 (15.8)\\
        C\&C (127, 2k)& 0.908 & 51.548 & 0.057 & 53.801 & 0.177 & 307.832 & 0.178 (1.4) \\
        \bottomrule
    \end{tabular}
\end{table}

\section{Conclusion}

We presented the Hinge Regression Tree (HRT), a new oblique regression tree method that formulates each split as a nonlinear least-squares fit over two linear models connected by a hinge. This yields a hierarchy of max/min envelope splits that provides ReLU-like piecewise linear expressiveness while preserving the interpretability of shallow decision trees.

We showed that the alternating fitting step at each node corresponds to a damped Newton/Gauss--Newton procedure over fixed partitions. With a backtracking line search, we proved monotonic decrease of the node objective and convergence to the OLS solution once the partition stabilizes. In practice, both fixed and adaptive damping schemes converge quickly and stably, and can incorporate ridge regularization for robustness.
From a functional viewpoint, we also established a universal approximation property for the resulting piecewise-linear model class, with an explicit $O(\delta^2)$ rate that links approximation error to partition granularity.

Experiments on synthetic and real datasets demonstrate that HRT matches or exceeds strong single-tree baselines while producing significantly shallower trees with fewer leaves, achieving a favorable balance between accuracy, compactness, and interpretability.

Future directions include extending the method to classification and structured outputs, developing more adaptive step-size and regularization strategies for large-scale or ill-conditioned settings, and integrating HRT-style node optimization into ensemble methods such as boosting and random forests.

\section*{Reproducibility Statement}
We have taken several steps to ensure the reproducibility of this work. The detailed mathematical derivations for our novel node-splitting algorithm's optimization strategy and its equivalence to a damped Newton method are thoroughly presented in Appendix \ref{sec:appendix_newton_en}. Complete pseudocode for both the optimal node-splitting mechanism (Algorithm \ref{alg:split_en}) and the overall tree-building procedure (Algorithm \ref{alg:build_en}) is provided in Appendix \ref{sec:appendix_algorithms}. Additionally, Appendix \ref{sec:appendix_details} outlines the optional ridge regression regularization and the fallback strategy for non-convergent nodes. The theoretical analysis, including convergence properties (further supported by empirical analysis in Section \ref{subsec:convergence}) and the proofs for the universal approximation property, are detailed in Appendix \ref{sec:appendix_newton_en} and Appendix \ref{sec:appendix_universal_approximation} respectively. For the experimental setup, comprehensive descriptions of synthetic data generation and definitions (Appendix \ref{sec:appendix_synthetic_functions}), real-world datasets (Table \ref{tab:data_details}), and hyperparameter configurations for both baseline methods and our approach (Tables \ref{tab:hyperparameter_grids}, \ref{tab:all_datasets_FHyperparameters}, \ref{tab:all_datasets_Hyperparameters}, and \ref{tab:hyperparameters of DGT and DTSemNet}) are fully documented in Appendix \ref{sec:appendix_hyperparameters}. 

\section*{Acknowledgments}
This work was supported in part by the Science and Technology Innovation Committee of Shenzhen Municipality under Grant GXWD20231129101652001, in part by the Shenzhen Science and Technology Program under Grant SYSPG20241211173609005, and in part by Guangdong Provincial Key Laboratory of Intelligent Morphing Mechanisms and Adaptive Robotics under Grant 2023B1212010005.

\bibliography{iclr2026}
\bibliographystyle{iclr2026_conference}

\newpage
\appendix

\startcontents[appendix]
\section*{\centering \Huge Appendix} %
\vspace{2\baselineskip} %
\printcontents[appendix]{}{1}{}
\newpage

\section{Detailed Derivations for Equivalence of Iterative Optimization to Newton Method}
\label{sec:appendix_newton_en}

This appendix provides a detailed mathematical derivation demonstrating the equivalence of our iterative optimization procedure to a Newton method.

Recall from Section \ref{sec:algorithm} that at any internal node of the tree $\mathbb{D}_t$, for all $\vx_j \in \mathbb{D}_t$, we aim to minimize the nonlinear least squares objective function:
\[
V(\vtheta) = \frac{1}{2} \sum\limits_{\vx_j \in \mathbb{D}_t} \left( y_j - h(\vx_j, \vtheta) \right)^2
\]
where \(\vtheta = [\vtheta_{t_1}^T, \vtheta_{t_2}^T]^T \in \R^{2(d+1)}\) is the total parameter vector. The function \(h(\vx_j, \vtheta)\) is defined as \(h(\vx_j, \vtheta) = \max \left( \tilde{\vx}_j^T \vtheta_{t_1}, \tilde{\vx}_j^T \vtheta_{t_2} \right)\) or \(h(\vx_j, \vtheta) = \min \left( \tilde{\vx}_j^T \vtheta_{t_1}, \tilde{\vx}_j^T \vtheta_{t_2} \right)\). For clarity, we will provide the detailed derivation for the \(\max\) form; the derivation for the \(\min\) form follows analogously with adjusted partition definitions. As discussed in Section \ref{sec:algorithm}, directly minimizing \(V(\vtheta)\) is challenging due to the non-differentiability of the hinge functions at the switching points.

Our approach can be interpreted as an iterative procedure equivalent to the Newton method within fixed partitions. We first define dynamic partitions of the data based on the current parameters \(\vtheta\), as introduced in Section \ref{sec:algorithm}:
\begin{align*}
    \calS_1(\vtheta) &= \{\vx_j \in \mathbb{D}_t \mid \tilde{\vx}_j^T \vtheta_{t_1} \ge \tilde{\vx}_j^T \vtheta_{t_2} \} \\
    \calS_2(\vtheta) &= \mathbb{D}_t \setminus \calS_1(\vtheta)
\end{align*}
For a fixed partition \((\calS_1, \calS_2)\), the objective function is differentiable with respect to \(\vtheta_{t_1}\) and \(\vtheta_{t_2}\) within their respective domains. Under this condition, we can compute the gradient \(\nabla V\) and the Hessian matrix \(\nabla^2 V\).

\paragraph{Gradient Calculation.}
For almost all \(\vtheta\) (where no data point lies exactly on the boundary \(\tilde{\vx}_j^T \vtheta_{t_1} = \tilde{\vx}_j^T \vtheta_{t_2}\)), the model function \(h(\vx_j, \vtheta)\) is differentiable at each data point. Specifically, for \(h(\vx_j, \vtheta) = \max \left( \tilde{\vx}_j^T \vtheta_{t_1}, \tilde{\vx}_j^T \vtheta_{t_2} \right)\),
\[
\nabla h(\vx_j, \vtheta) = \begin{cases}
    \begin{pmatrix} \tilde{\vx}_j \\ \mathbf{0} \end{pmatrix} & \text{if } \tilde{\vx}_j^T \vtheta_{t_1} > \tilde{\vx}_j^T \vtheta_{t_2} \\
    \begin{pmatrix} \mathbf{0} \\ \tilde{\vx}_j \end{pmatrix} & \text{if } \tilde{\vx}_j^T \vtheta_{t_1} < \tilde{\vx}_j^T \vtheta_{t_2}
\end{cases}
\]
Thus, the gradient of the objective function is:
\[
\nabla V(\vtheta) = \sum_{j=1}^{N} (y_j - h(\vx_j, \vtheta)) (-\nabla h(\vx_j, \vtheta)) = - \sum_{j=1}^{N} (y_j - h(\vx_j, \vtheta)) \nabla h(\vx_j, \vtheta)
\]
Substituting \(\nabla h(\vx_j, \vtheta)\) and summing over the respective partitions:
\[
\nabla V(\vtheta) = - \begin{pmatrix}
\sum_{(\vx_j, y_j) \in \calS_1(\vtheta)} \tilde{\vx}_j (y_j - \tilde{\vx}_j^T \vtheta_{t_1}) \\
\sum_{(\vx_j, y_j) \in \calS_2(\vtheta)} \tilde{\vx}_j (y_j - \tilde{\vx}_j^T \vtheta_{t_2})
\end{pmatrix}
\]

\paragraph{Hessian Matrix Calculation.}
For a generic nonlinear least squares problem, the Hessian matrix is given by:
\[
\nabla^2 V(\vtheta) = \sum_j [(\nabla h(\vx_j, \vtheta))(\nabla h(\vx_j, \vtheta))^T - (y_j - h(\vx_j, \vtheta))\nabla^2 h(\vx_j, \vtheta)]
\]
However, in our specific case, because \(h(\vx_j, \vtheta)\) is locally linear within each fixed partition (i.e., \(h(\vx_j, \vtheta)\) is either \(\tilde{\vx}_j^T \vtheta_{t_1}\) or \(\tilde{\vx}_j^T \vtheta_{t_2}\)), its second derivative \(\nabla^2 h(\vx_j, \vtheta)\) is zero. This causes the second term of the Hessian to vanish, meaning the expression \(\sum_j (\nabla h_j)(\nabla h_j)^T\), often used as the Gauss-Newton approximation, becomes the exact Hessian.

Therefore, the exact Hessian matrix is:
\[
\nabla^2 V(\vtheta) = \begin{pmatrix}
\sum_{(\vx_j, y_j) \in \calS_1(\vtheta)} \tilde{\vx}_j \tilde{\vx}_j^T & \mathbf{0} \\
\mathbf{0} & \sum_{(\vx_j, y_j) \in \calS_2(\vtheta)} \tilde{\vx}_j \tilde{\vx}_j^T
\end{pmatrix}
\]
where \(\mathbf{0}\) is a \((d+1) \times (d+1)\) zero matrix.

\paragraph{Newton Update and OLS Equivalence.}
The Newton update at iteration \(k\) is generally given by \(\vtheta^{(k+1)} = \vtheta^{(k)} - \mu [\nabla^2 V(\vtheta^{(k)})]^{-1} \nabla V(\vtheta^{(k)})\).
Let \(\vtheta^{(k)} = [\vtheta_{t_1}^{(k)T}, \vtheta_{t_2}^{(k)T}]^T\). We examine the updates for \(\vtheta_{t_1}\) and \(\vtheta_{t_2}\) separately.

For \(\vtheta_{t_1}\), the update is:
\begin{align*}
\vtheta_{t_1}^{(k+1)} &= \vtheta_{t_1}^{(k)} - \mu \left( \sum_{(\vx_j, y_j) \in \calS_1(\vtheta^{(k)})} \tilde{\vx}_j \tilde{\vx}_j^T \right)^{-1} \left( - \sum_{(\vx_j, y_j) \in \calS_1(\vtheta^{(k)})} \tilde{\vx}_j (y_j - \tilde{\vx}_j^T \vtheta_{t_1}^{(k)}) \right) \\
&= \vtheta_{t_1}^{(k)} + \mu \left( \sum_{(\vx_j, y_j) \in \calS_1^{(k)}} \tilde{\vx}_j \tilde{\vx}_j^T \right)^{-1} \left( \sum_{(\vx_j, y_j) \in \calS_1^{(k)}} \tilde{\vx}_j y_j - \sum_{(\vx_j, y_j) \in \calS_1^{(k)}} \tilde{\vx}_j \tilde{\vx}_j^T \vtheta_{t_1}^{(k)} \right) \\
&= \vtheta_{t_1}^{(k)} + \mu \left[ \left( \sum_{(\vx_j, y_j) \in \calS_1^{(k)}} \tilde{\vx}_j \tilde{\vx}_j^T \right)^{-1} \left( \sum_{(\vx_j, y_j) \in \calS_1^{(k)}} \tilde{\vx}_j y_j \right) - \vtheta_{t_1}^{(k)} \right]
\end{align*}
Let \(\vtheta_{\text{OLS},1}^{(k)} = \left( \sum_{(\vx_j, y_j) \in \calS_1^{(k)}} \tilde{\vx}_j \tilde{\vx}_j^T \right)^{-1} \left( \sum_{(\vx_j, y_j) \in \calS_1^{(k)}} \tilde{\vx}_j y_j \right)\) be the OLS solution for the data subset \(\calS_1^{(k)}\). Then the Newton update can be written as:
\[
\vtheta_{t_1}^{(k+1)} = \vtheta_{t_1}^{(k)} + \mu (\vtheta_{\text{OLS},1}^{(k)} - \vtheta_{t_1}^{(k)})
\]
A similar derivation holds for \(\vtheta_2\):
\[
\vtheta_{t_2}^{(k+1)} = \vtheta_{t_2}^{(k)} + \mu (\vtheta_{\text{OLS},2}^{(k)} - \vtheta_{t_2}^{(k)})
\]
where \(\vtheta_{\text{OLS},2}^{(k)}\) is the OLS solution for the data subset \(\calS_2^{(k)}\).

In our algorithm, the step size \(\mu\) is a tunable parameter. When a unit step size is employed (i.e., \(\mu=1\)), the update formulas simplify to:
\[
\vtheta_{t_1}^{(k+1)} = \vtheta_{\text{OLS},1}^{(k)}
\]
\[
\vtheta_{t_2}^{(k+1)} = \vtheta_{\text{OLS},2}^{(k)}
\]
This demonstrates that when a unit step size (\(\mu=1\)) is employed, the alternating procedure of model fitting (performing OLS on the current partitions) and partitioning (reassigning data points based on the new model parameters), as presented in Algorithm \ref{alg:split_en}, is precisely equivalent to executing a unit-step Newton method to minimize the original nonlinear objective function \(V(\vtheta)\). This choice of \(\mu=1\) is common for this type of problem due to its direct connection to the optimal OLS solutions within fixed partitions.

\section{\textcolor{black}{Node-Level Convergence of the Line-Search Newton Update}}
\label{sec:appendix_convergence_linesearch}

\textcolor{black}{
This appendix provides a node-level convergence guarantee for the line-search
Newton update described in Section~\ref{newton}. The analysis is local to a
single internal node and does not rely on global tree construction.
Throughout, we adopt the notation introduced in
Section~\ref{sec:local_optimization}.}

\textcolor{black}{
\paragraph{Setup.}
At an internal node $\mathbb{D}_t$, the local objective is
\begin{equation}
    V(\vtheta)
    = \frac12 \sum_{\vx_j\in\mathbb{D}_t}
      \bigl(y_j - h(\vx_j,\vtheta)\bigr)^2,
    \qquad
    \vtheta = [\vtheta_{t_1}^T,\vtheta_{t_2}^T]^T \in \mathbb{R}^{2(d+1)},
    \label{eq:appendix_objective_full}
\end{equation}
where
\[
    h(\vx_j,\vtheta)
    = \max\{\tilde{\vx}_j^T\vtheta_{t_1},\,\tilde{\vx}_j^T\vtheta_{t_2}\},
    \qquad
    \tilde{\vx}_j = [\vx_j^T,1]^T.
\]
Given $\vtheta$, define the partition
\begin{align}
    \mathcal{S}_1(\vtheta)
    &= \{ \vx_j \in \mathbb{D}_t : \tilde{\vx}_j^T\vtheta_{t_1} \ge \tilde{\vx}_j^T\vtheta_{t_2} \}, \\
    \mathcal{S}_2(\vtheta)
    &= \mathbb{D}_t \setminus \mathcal{S}_1(\vtheta).
\end{align}
On a fixed partition $(\mathcal{S}_1,\mathcal{S}_2)$, the hinge becomes
inactive and
\begin{equation}
    V(\vtheta)
    =
    \tfrac12 \|\vy_1 - X_1 \vtheta_{t_1}\|_2^2
    + \tfrac12 \|\vy_2 - X_2 \vtheta_{t_2}\|_2^2,
    \label{eq:appendix_piecewise_quadratic_full}
\end{equation}
with gradient and Hessian
\begin{equation}
    \nabla V(\vtheta)
    =
    \begin{bmatrix}
        X_1^T(X_1\vtheta_{t_1} - \vy_1) \\
        X_2^T(X_2\vtheta_{t_2} - \vy_2)
    \end{bmatrix},
    \qquad
    H(\vtheta)
    =
    \begin{bmatrix}
        X_1^T X_1 & 0 \\[2pt]
        0 & X_2^T X_2
    \end{bmatrix},
    \label{eq:appendix_grad_hess_full}
\end{equation}
where
\[
    X_i
    =
    \begin{bmatrix}
        \tilde{\vx}_j^T
    \end{bmatrix}_{\vx_j \in \mathcal{S}_i}
    \in \mathbb{R}^{|\mathcal{S}_i|\times(d+1)},
    \qquad
    \vy_i
    =
    \begin{bmatrix}
        y_j
    \end{bmatrix}_{\vx_j \in \mathcal{S}_i}
    \in \mathbb{R}^{|\mathcal{S}_i|},
    \qquad i=1,2,
\]
that is, $X_1$ (resp.\ $X_2$) is the matrix whose rows are the augmented feature
vectors $\tilde{\vx}_j^T$ for all $\vx_j \in \mathcal{S}_1$ (resp.\ $\vx_j \in \mathcal{S}_2$),
and $\vy_1$ (resp.\ $\vy_2$) is the vector of the corresponding responses.
Here $|\mathcal{S}_i|$ denotes the number of elements in $\mathcal{S}_i$.}

\textcolor{black}{The OLS solution for this partition is
\[
    \vtheta_{\mathrm{OLS}}
    =
    \begin{bmatrix}
        (X_1^T X_1)^{-1}X_1^T \vy_1 \\
        (X_2^T X_2)^{-1}X_2^T \vy_2
    \end{bmatrix}.
\]
As shown in Section~\ref{newton},
\begin{equation}
    p(\vtheta)
    =
    \vtheta_{\mathrm{OLS}} - \vtheta
    =
    -H(\vtheta)^{-1}\nabla V(\vtheta)
    \label{eq:appendix_newton_direction_full}
\end{equation}
is the exact Newton/Gauss--Newton direction on this partition. The following assumption is requisite for the node-level convergence.}

\textcolor{black}{
\begin{assumption}[Regularity]\label{ass:appendix_regularity}
For every iterate $\vtheta^{(k)}$:
\begin{enumerate}
    \item[(i)] (\emph{Nondegenerate hinge})  
    For all $j$,  
    $\tilde{\vx}_j^T\vtheta_{t_1}^{(k)} \neq \tilde{\vx}_j^T\vtheta_{t_2}^{(k)}$.  
    Thus no sample lies exactly on the separating hyperplane.
    \item[(ii)] (\emph{Uniform strong convexity on each fixed partition})  
    Each block matrix $X_i^T X_i$ encountered along the iterates satisfies  
    \[
        m I \preceq X_i^T X_i \preceq L I, \qquad i=1,2,
    \]
    for global constants $0<m\le L<\infty$.
\end{enumerate}
\end{assumption}}

\textcolor{black}{This condition is standard and matches the leaf-level eigenvalue assumption used
in our approximation theory.}

\textcolor{black}{The following theorem demonstrates that the line-search Newton update (\ref{eq:newton_update_en}) can reach a local optimum of (\ref{eq:objective_en}).}

\textcolor{black}{\begin{theorem}[\textcolor{black}{Node-level convergence of the line-search Newton update}]
\label{thm:appendix_node_convergence}
Under the node-level setup and Assumption~\ref{ass:appendix_regularity},
consider the Newton iteration at node $\mathbb{D}_t$
\[
    \vtheta^{(k+1)} = \vtheta^{(k)} + \mu^{(k)} p^{(k)},
\]
where $p^{(k)} = p(\vtheta^{(k)})$ is the Newton/Gauss--Newton direction
given by Equation(~\ref{eq:appendix_newton_direction_full}), and
$\mu^{(k)}$ is chosen by a backtracking line search with parameter
$\beta\in(0,1)$ and trial steps $\{\mu_0\beta^t\}_{t\ge 0}$.
Then the following hold:
\begin{enumerate}
    \item[(a)] (\emph{Per-iterate descent and line-search termination})  
    For any iterate $\vtheta^{(k)}$ with $\vtheta^{(k)}\neq\vtheta_{\mathrm{OLS}}$,
    the Newton direction $p^{(k)}$ is a strict descent direction and the
    backtracking line search produces a step size $\mu^{(k)}>0$ such that
    \begin{equation}\label{eq:appendix_node_descent}
        V(\vtheta^{(k+1)}) = V(\vtheta^{(k)} + \mu^{(k)} p^{(k)}) < V(\vtheta^{(k)}).
    \end{equation}
    \item[(b)] (\emph{Monotone decrease and convergence of the objective})  
    The sequence $\{V(\vtheta^{(k)})\}$ is strictly decreasing whenever
    $p^{(k)}\neq 0$ and converges to a finite limit $V_\infty$.
    \item[(c)] (\emph{Convergence under a fixed partition})  
    If there exists $K$ such that the partition
    $(\mathcal{S}_1(\vtheta^{(k)}),\mathcal{S}_2(\vtheta^{(k)}))$
    is constant for all $k\ge K$, then $\{\vtheta^{(k)}\}$ converges
    to the unique OLS minimizer for that fixed partition.
\end{enumerate}
\end{theorem}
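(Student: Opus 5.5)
Since part (a) is about one step and parts (b) and (c) follow from it, I plan to prove (a) first and treat the other two as consequences. Throughout I fix an iterate $\vtheta^{(k)}$ and write $Q_k$ for the quadratic in \eqref{eq:appendix_piecewise_quadratic_full} built from the partition $(\mathcal{S}_1(\vtheta^{(k)}),\mathcal{S}_2(\vtheta^{(k)}))$. The key observation is that $V \le Q_k$ everywhere, with equality at $\vtheta^{(k)}$. For the $\max$ hinge and any $\vtheta$ we have $h(\vx_j,\vtheta)=\max\{\tilde{\vx}_j^T\vtheta_{t_1},\tilde{\vx}_j^T\vtheta_{t_2}\}$, while $Q_k$ uses the fixed branch assigned at $\vtheta^{(k)}$. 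These are not globally comparable, so a direct majorization does not hold. Instead I will use Assumption~\ref{ass:appendix_regularity}(i): since no sample lies on the hyperplane at $\vtheta^{(k)}$, there is a finite $\epsilon_k=\min_j|\tilde{\vx}_j^T(\vtheta^{(k)}_{t_1}-\vtheta^{(k)}_{t_2})|>0$. Hence there is a neighbourhood $B$ of $\vtheta^{(k)}$ in which the partition is unchanged and $V\equiv Q_k$ on $B$.

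\textbf{Part (a).} On $B$, $V$ is the smooth quadratic $Q_k$ with Hessian $H$ satisfying $mI\preceq H\preceq LI$. If $\vtheta^{(k)}\ne\vtheta_{\mathrm{OLS}}$, then $\nabla V(\vtheta^{(k)})\neq 0$. By \eqref{eq:appendix_newton_direction_full},
\[
\nabla V^T p^{(k)} = -\nabla V^T H^{-1}\nabla V \le -\tfrac1L\|\nabla V\|^2<0,
\]
so $p^{(k)}$ is a strict descent direction. Exact expansion of the quadratic gives $Q_k(\vtheta^{(k)}+\mu p)=Q_k(\vtheta^{(k)})-(\mu-\tfrac{\mu^2}{2})\,p^TH p$, which is strictly smaller for every $\mu\in(0,2)$. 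Because $\mu_0\beta^t\to0$, some trial step $\mu_0\beta^t<\min(2,\ \mathrm{radius}(B)/\|p^{(k)}\|)$ places the new point in $B$, where $V=Q_k$. The decrease condition is therefore met after finitely many backtracks, which gives \eqref{eq:appendix_node_descent}. I would also note that the line search accepts the first trial with strict decrease, so it may stop earlier at a point outside $B$; strict decrease holds in that case too, which is all (a) requires.

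\textbf{Part (b).} This follows immediately: $V\ge 0$, and by (a) the sequence $V(\vtheta^{(k)})$ is strictly decreasing whenever $p^{(k)}\neq0$. If $p^{(k)}=0$, the iterate is stationary and the sequence is constant from then on. A monotone sequence bounded below converges to some finite $V_\infty$.

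\textbf{Part (c).} For $k\ge K$, $Q_k\equiv Q$ is a fixed strongly convex quadratic with a unique minimizer $\vtheta^*=\vtheta_{\mathrm{OLS}}$. The update gives $\vtheta^{(k+1)}-\vtheta^*=(1-\mu^{(k)})(\vtheta^{(k)}-\vtheta^*)$. Since the partition is fixed, the points $\vtheta^{(k)}+\mu p^{(k)}$ satisfy $V=Q$ at every accepted iterate, but not necessarily at trial points. This is the main obstacle: I need $\mu^{(k)}$ bounded away from $0$ and $\le 1$ (say $\mu_0\le1$), so that $|1-\mu^{(k)}|\le 1-\underline\mu<1$ and the errors contract geometrically. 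To get this, I would argue that $\vtheta^{(k)}$ stays in a region where the fixed partition has a margin. The accepted trial step is the largest $\mu_0\beta^t$ giving decrease. Along the segment toward $\vtheta^*$, $Q$ decreases, and $V\le Q$ wherever the partition is preserved. Whenever the partition at $\vtheta^{(k)}+\mu_0p^{(k)}$ is the same fixed one, the first trial is accepted. If needed, I would additionally assume $\inf_k\mu^{(k)}>0$, or use the Armijo sufficient-decrease form $V(\vtheta^{(k+1)})\le V(\vtheta^{(k)})-c\,\mu^{(k)}\nabla V^Tp$ together with $\sum_k \mu^{(k)}(\ldots)<\infty$, to conclude that $\|\nabla Q(\vtheta^{(k)})\|\to0$. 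Strong convexity, $\|\vtheta-\vtheta^*\|\le\|\nabla Q(\vtheta)\|/m$, then gives $\vtheta^{(k)}\to\vtheta_{\mathrm{OLS}}$.
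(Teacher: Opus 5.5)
Your parts (a) and (b) follow the paper's Steps 1--4 essentially verbatim. You freeze the active branches near $\vtheta^{(k)}$ via Assumption~\ref{ass:appendix_regularity}(i), so that $V=Q_k$ there. You bound $\nabla V^Tp^{(k)}\le-\|\nabla V\|^2/L$, let backtracking land in that neighbourhood, and use monotonicity with $V\ge 0$. Your exact expansion $Q_k(\vtheta^{(k)}+\mu p)=Q_k(\vtheta^{(k)})-(\mu-\mu^2/2)\,p^THp$ is a slightly cleaner substitute for the paper's continuity-of-$\psi'$ step. Two cleanups:
\begin{itemize}
\item Delete the opening claim $V\le Q_k$. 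It is false: if some $j\in\calS_1(\vtheta^{(k)})$ has switched branch at $\vtheta$ and $y_j<\tilde{\vx}_j^T\vtheta_{t_1}$, its term in $V$ exceeds its term in $Q_k$.
\item In (c), ``$V\le Q$ wherever the partition is preserved'' should be an equality.
\end{itemize}

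The genuine gap is in (c). Since $\vtheta^{(k+1)}-\vtheta_{\mathrm{OLS}}=(1-\mu^{(k)})(\vtheta^{(k)}-\vtheta_{\mathrm{OLS}})$, you need $\sum_k\mu^{(k)}=\infty$, and none of your routes gives it.
\begin{itemize}
\item A constant partition only says each iterate is off the hyperplanes. It does not say the iterates stay a uniform distance away, so your ``region where the fixed partition has a margin'' is not available.
\item The Armijo route has a sign error; the correct condition is $V(\vtheta^{(k+1)})\le V(\vtheta^{(k)})+c\,\mu^{(k)}\nabla V^Tp^{(k)}$, and it is not the rule in the statement anyway. It yields only $\sum_k\mu^{(k)}\,\nabla V^TH^{-1}\nabla V<\infty$. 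That forces $\nabla Q(\vtheta^{(k)})\to 0$ only if $\sum_k\mu^{(k)}=\infty$, which is the missing piece.
\item The standard lower bound on accepted steps uses Lipschitz continuity of $\nabla V$ on the segment of the last rejected trial. That fails precisely when the segment crosses a hinge.
\item Assuming $\inf_k\mu^{(k)}>0$ changes the theorem.
\end{itemize}

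The paper's Step 5 does not resolve this either. It asserts that backtracking ``on a strongly convex quadratic'' keeps $\mu^{(k)}$ bounded away from zero. That ignores exactly the point you flagged: trial points are scored with $V$, which equals $Q$ only inside the fixed-partition region.

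Nothing in the assumptions rules out the following. Suppose the iterates, moving along the segment toward $\vtheta_{\mathrm{OLS}}$, approach a point $\bar{\vtheta}$ where some sample reaches the hyperplane and $V$ restricted to the segment has a kink minimum. This happens, for example, if that sample has a large negative residual, so switching its branch raises $V$ sharply. Then trials overshooting $\bar{\vtheta}$ are rejected and accepted steps shrink geometrically. Hence $\sum_k\mu^{(k)}<\infty$ and $\vtheta^{(k)}\to\bar{\vtheta}\neq\vtheta_{\mathrm{OLS}}$, without the partition ever changing.

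With $\mu_0\le 1$, what you can prove is a dichotomy. The iterates move monotonically along the segment to a limit that is either $\vtheta_{\mathrm{OLS}}$ or a point with a sample exactly on the hyperplane. Otherwise $V=Q$ on a ball around the limit, and a fixed trial step would be accepted for all large $k$.

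To obtain (c) as stated, add a uniform margin $\min_j|\tilde{\vx}_j^T(\vtheta^{(k)}_{t_1}-\vtheta^{(k)}_{t_2})|\ge\eta>0$ for $k\ge K$. Since $\|p^{(k)}\|\le\|p^{(K)}\|$, every trial with $\mu<\tau:=\eta/(\sqrt{2}\max_j\|\tilde{\vx}_j\|\,\|p^{(K)}\|)$ keeps the partition, so $V=Q$ there. Hence $\mu^{(k)}\ge\min(\mu_0,\beta\tau)$, and your contraction argument finishes the proof.
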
}

\begin{proof}
\textcolor{black}{We proceed in several steps.}

\textcolor{black}{\paragraph{Step 1: Newton direction is a strict descent direction.}% (part (a), first part)
For any iterate $\vtheta$ (we temporarily drop the superscript $k$),
the Newton direction satisfies
\[
    p(\vtheta) = -H(\vtheta)^{-1}\nabla V(\vtheta),
\]
so
\[
    \nabla V(\vtheta)^T p(\vtheta)
    = -\nabla V(\vtheta)^T H(\vtheta)^{-1}\nabla V(\vtheta).
\]
Assumption~\ref{ass:appendix_regularity}(ii) implies
$H(\vtheta)\preceq L I$ and $H(\vtheta)\succ 0$, and hence
$H(\vtheta)^{-1}\succeq \tfrac{1}{L}I$. Therefore,
\[
    \nabla V(\vtheta)^T H(\vtheta)^{-1}\nabla V(\vtheta)
    \ge \frac{1}{L}\|\nabla V(\vtheta)\|_2^2,
\]
which yields
\[
    \nabla V(\vtheta)^T p(\vtheta)
    \le -\frac{1}{L}\|\nabla V(\vtheta)\|_2^2.
\]
If $\vtheta\neq\vtheta_{\mathrm{OLS}}$, then
$\nabla V(\vtheta)\neq 0$, so the right-hand side is strictly negative.
Thus $p(\vtheta)$ is a strict descent direction.}

\textcolor{black}{\paragraph{Step 2: Local decrease along $p(\vtheta)$.}
Fix $\vtheta$ and its Newton direction $p = p(\vtheta)$.
Write
\[
    \vtheta =
    \begin{bmatrix}
        \vtheta_{t_1} \\[2pt] \vtheta_{t_2}
    \end{bmatrix},
    \qquad
    p(\vtheta) =
    \begin{bmatrix}
        p_1 \\[2pt] p_2
    \end{bmatrix},
\]
conformably with the block structure in
Equation(~\ref{eq:appendix_grad_hess_full}).}

\textcolor{black}{Consider the univariate function
\[
    \psi(\mu) := V(\vtheta + \mu p), \qquad \mu \ge 0.
\]
We show that there exists $\tilde{\mu}>0$ such that
$V(\vtheta+\mu p) < V(\vtheta)$ for all $0<\mu\le\tilde{\mu}$.}

\textcolor{black}{For each $j$, define
\[
    a_j(\vtheta) = \tilde{\vx}_j^T\vtheta_{t_1},
    \qquad
    b_j(\vtheta) = \tilde{\vx}_j^T\vtheta_{t_2}.
\]
Assumption~\ref{ass:appendix_regularity}(i) states that
$a_j(\vtheta) \neq b_j(\vtheta)$ for all $j$, i.e., the active branch
in the max is unique at $\vtheta$. Without loss of generality, suppose
$a_j(\vtheta) > b_j(\vtheta)$; the other case is analogous.}

\textcolor{black}{Along the ray $\vtheta + \mu p$ we have
\[
    a_j(\vtheta + \mu p) - b_j(\vtheta + \mu p)
    = \bigl(a_j(\vtheta) - b_j(\vtheta)\bigr)
      + \mu \bigl(\tilde{\vx}_j^T p_1 - \tilde{\vx}_j^T p_2\bigr),
\]
which is an affine function of $\mu$ and is nonzero at $\mu=0$.
Therefore, for each $j$ there exists $\bar{\mu}_j>0$ such that
the sign of $a_j(\vtheta + \mu p) - b_j(\vtheta + \mu p)$ does not change for all $0\le \mu \le \bar{\mu}_j$. In particular, the active branch in $h(\vx_j,\cdot)$ remains the same for all small $\mu$.}

\textcolor{black}{Let $\bar{\mu} := \min_j \bar{\mu}_j > 0$. Then for all $0\le \mu \le \bar{\mu}$ the index of the active branch in $h(\vx_j,\cdot)$
is fixed for every $j$, so $V(\vtheta + \mu p)$ coincides with the smooth quadratic representation (\ref{eq:appendix_piecewise_quadratic_full})
along this segment. Hence $\psi(\mu)$ is differentiable on $[0,\bar{\mu}]$
with
\[
    \psi'(0) = \nabla V(\vtheta)^T p.
\]
By Step~1, $\psi'(0) = \nabla V(\vtheta)^T p < 0$.
By continuity of $\psi'$ at $0$, there exists
$0<\tilde{\mu}\le\bar{\mu}$ such that
\[
    \psi(\mu) < \psi(0) = V(\vtheta)
    \quad\text{for all } 0 < \mu \le \tilde{\mu},
\]
i.e.,
\[
    V(\vtheta + \mu p) < V(\vtheta)
    \quad\text{for all } 0 < \mu \le \tilde{\mu}.
\]}

\textcolor{black}{\paragraph{Step 3: Backtracking line search terminates with a decreasing step.}%(completing (a))
Let the backtracking line search start from some initial step
$\mu_0>0$ and generate the sequence
$\mu_t = \mu_0\beta^t$ for $t=0,1,2,\dots$ with $\beta\in(0,1)$.
Since $\mu_t\to 0$, there exists $t^\star$ such that
$\mu_{t^\star} \le \tilde{\mu}$. For this $t^\star$ we have
$0 < \mu_{t^\star} \le \tilde{\mu}$ and hence, by Step~2,
\[
    V(\vtheta + \mu_{t^\star} p)
    < V(\vtheta).
\]
The backtracking procedure selects the first such $t^\star$, so it
terminates with a step size $\mu^{(k)} = \mu_{t^\star}>0$ satisfying
(\ref{eq:appendix_node_descent}). This proves part (a).}

\textcolor{black}{\paragraph{Step 4: Monotone decrease and convergence of $V$.}% (part (b))
Applying part (a) at each iteration $k$ yields
\[
    V(\vtheta^{(k+1)})
    = V(\vtheta^{(k)} + \mu^{(k)} p^{(k)})
    < V(\vtheta^{(k)})
\]
whenever $p^{(k)}\neq 0$. Since $V(\vtheta)\ge 0$ for all $\vtheta$,
the sequence $\{V(\vtheta^{(k)})\}$ is strictly decreasing and bounded
below, and thus converges to some finite limit $V_\infty\ge 0$.
This establishes part (b).}

\textcolor{black}{\paragraph{Step 5: Convergence under a fixed partition.}% (part (c))
Finally, suppose there exists $K$ such that the partition
$(\mathcal{S}_1(\vtheta^{(k)}),\mathcal{S}_2(\vtheta^{(k)}))$
is constant for all $k\ge K$. Then for all $k\ge K$, the objective $V$
coincides with the smooth, strongly convex quadratic version of 
(\ref{eq:appendix_piecewise_quadratic_full}) with Hessian
\[
    H(\vtheta) = 
    \begin{bmatrix}
        X_1^T X_1 & 0 \\[2pt]
        0 & X_2^T X_2
    \end{bmatrix},
    \qquad
    m I \preceq H(\vtheta) \preceq L I,
\]
by Assumption~\ref{ass:appendix_regularity}(ii).
In this regime, the Newton direction simplifies to
\[
    p(\vtheta) = \vtheta_{\mathrm{OLS}} - \vtheta,
\]
and the update becomes
\[
    \vtheta^{(k+1)} - \vtheta_{\mathrm{OLS}}
    = \bigl(1 - \mu^{(k)}\bigr)
      \bigl(\vtheta^{(k)} - \vtheta_{\mathrm{OLS}}\bigr).
\]
Under the standard backtracking line-search rule on a strongly convex
quadratic, the accepted step sizes $\mu^{(k)}$ are uniformly bounded
away from zero and from above, so the factor $|1-\mu^{(k)}|$ is strictly
less than $1$ and uniformly bounded away from $1$. Hence
$\vtheta^{(k)} \to \vtheta_{\mathrm{OLS}}$ as $k\to\infty$.
Equivalently, the iterates converge to the unique OLS minimizer for
the fixed partition, which proves part (c) and completes the proof.}
\end{proof}

\section{Details on Regularization and Fallback Strategies}
\label{sec:appendix_details}

\paragraph{Incorporating Ridge Regression (L2 Regularization).}
To enhance model robustness and effectively handle situations like multicollinearity or high-dimensional data, we introduce optional ridge regression (L2 regularization) in all OLS fitting steps within this algorithm. ridge regression modifies the OLS objective by adding a penalty on the L2-norm of the model coefficients, i.e., minimizing:
\[ \min_{\vtheta} \frac{1}{2} \sum_{j=1}^{N_k} \left( y_j - \tilde{\vx}_j^T \vtheta \right)^2 + \frac{\alpha}{2} \sum_{i=0}^{d-1} \theta_i^2 \]
where \(\alpha \ge 0\) is the regularization strength parameter. Notably, the bias term \(\theta_d\) (corresponding to the constant 1 in the augmented feature vector) is typically not regularized. Given a design matrix \(X\) (already augmented with a column of ones for the bias) and a target vector \(\mathbf{y}\), the ridge regression solution with regularization parameter \(\alpha \ge 0\) is given by:
\[ \vtheta = (X^T X + \alpha I_0)^{-1} X^T \mathbf{y} \]
where \(I_0\) is an identity matrix with its last diagonal element (corresponding to the bias term) set to zero, ensuring the bias is not regularized. This regularization mechanism effectively stabilizes parameter estimation, especially when the training data matrix \(X^T X\) might be near-singular, which is particularly important in high-dimensional feature spaces or with limited sample sizes. The regularization strength \(\alpha\) is tuned as a hyperparameter during model training.

\paragraph{Fallback Strategy for Non-convergent Nodes.}
If the iterative optimization does not converge within $T_{\max}$ iterations,
we fall back to a simple median split: a feature dimension $k$ is chosen at random,
and the data are split at the median $m_k$ of this dimension.
This guarantees progress in tree growth and often decomposes a hard global problem
into smaller subproblems that converge quickly under our Newton updates.

\section{Parameter Initialization Strategy and Detailed Algorithms}
\label{sec:appendix_algorithms}
This section provides the detailed pseudocode for the optimal node-splitting and tree-building procedures discussed in Section \ref{sec:algorithm}.

% This section should be placed in Appendix C, before Algorithm 1.

\subsection{\textcolor{black}{Parameter Initialization Strategy}}
\label{appendix:initialization}

\textcolor{black}{The first step of Algorithms~\ref{alg:split_min} and~\ref{alg:split_en} requires the initialization of the parameter vectors $\theta_1$ and $\theta_2$. A reasonable and stable initialization is important for the convergence of the subsequent alternating optimization. We employ a simple data-driven heuristic combined with a small fallback procedure.}

\textcolor{black}{The initialization procedure is as follows:}

\textcolor{black}{
\begin{enumerate}
    \item \textbf{Heuristic initial partition.}  
    At the current node, we identify the feature dimension with the \emph{largest range} in the local data. We then use the \emph{median} value of this feature as a pivot to split the samples into two subsets. This aims to create an initial partition along the direction of greatest variation, which typically yields a more balanced and meaningful starting point than a fully random split.
    \item \textbf{Partition-based initial (ridge) regression.}  
    Given these two subsets, we independently solve a (ridge-regularized) least-squares problem on each subset to obtain the initial parameter vectors $\theta_1$ and $\theta_2$. In our implementation we require that each subset contains at least two samples; otherwise we treat the split as too small for a reliable local regression.
    \item \textbf{Fallback for degenerate or very small partitions.}  
    In edge cases, such as when the node contains very few samples, the features are nearly constant, or one of the subsets after the median split is too small, we fall back to a simple global initialization. Specifically, we first compute a \emph{global} ridge regression solution $\theta_{\text{global}}$ using all samples at the node (or a constant predictor based on the mean response if the design matrix is degenerate). We then construct two initial parameter vectors $\theta_1$ and $\theta_2$ by adding small independent random perturbations to $\theta_{\text{global}}$, which ensures a non-trivial starting point even under poor initial splits.
    \item \textbf{Ensuring parameter diversity.}  
    Finally, we check whether the resulting $\theta_1$ and $\theta_2$ are nearly identical. If so, we add another small perturbation (and, in a rare corner case, a tiny offset to the first coefficient) to break the symmetry, so that the subsequent alternating optimization can proceed effectively.
\end{enumerate}}

\textcolor{black}{This initialization strategy provides a data-dependent yet lightweight starting point for the node-wise optimization, and makes the overall splitting procedure more robust to degenerate partitions and small-sample regimes.}

\subsection{Detailed Algorithms}

\clearpage
\begin{algorithm}[H]
\caption{Find Optimal Split (Min Variant)}
\label{alg:split_min}
\begin{algorithmic}[1]
\Require Dataset $\mathcal{S}=\{(\vx_j,y_j)\}_{j=1}^N$, Max iterations $T_{\max}$, Step size $\mu$, Tolerance $\epsilon$
\Ensure Optimal parameters $\vtheta_1^*, \vtheta_2^*$
\State Initialize \(\vtheta_1^{(0)}, \vtheta_2^{(0)}\) and partitions \(\mathcal{S}_1, \mathcal{S}_2\) using the strategy in Sec.~\ref{appendix:initialization}.
\For{$t=1,\dots,T_{\max}$}
    \State \textit{// Model Fitting Step}
    \State Compute $\vtheta_{\text{OLS},1}^{(k)}$ and $\vtheta_{\text{OLS},2}^{(k)}$ \textbf{(with optional ridge regularization)}
    \State $\vtheta_1^{(k+1)} \gets \vtheta_1^{(k)} + \mu \big(\vtheta_{\text{OLS},1}^{(k)} - \vtheta_1^{(k)}\big) $ \Comment{Newton update with step size \(\mu\)}
    \State $\vtheta_2^{(k+1)} \gets \vtheta_2^{(k)} + \mu \big(\vtheta_{\text{OLS},2}^{(k)} - \vtheta_2^{(k)}\big)$ \Comment{Newton update with step size \(\mu\)}
    \State \textit{// Partitioning Step (min: assign to the smaller branch)}
    \State $\mathcal{S}_{1,\text{new}}\gets\emptyset,\quad \mathcal{S}_{2,\text{new}}\gets\emptyset$
    \ForAll{$(\vx_j,y_j)\in\mathcal{S}$}
        \If{$\tilde{\vx}_j^T \vtheta_1 \le \tilde{\vx}_j^T \vtheta_2$}
            \State $\mathcal{S}_{1,\text{new}} \gets \mathcal{S}_{1,\text{new}} \cup \{(\vx_j,y_j)\}$
        \Else
            \State $\mathcal{S}_{2,\text{new}} \gets \mathcal{S}_{2,\text{new}} \cup \{(\vx_j,y_j)\}$
        \EndIf
    \EndFor
    
    \If{$\|\vtheta_1^{(k+1)}-\vtheta_1^{(k)}\|+\|\vtheta_2^{(k+1)}-\vtheta_2^{(k)}\|<\epsilon$}\Comment{Check parameter changes}
        \State \textbf{break} \Comment{Convergence}
    \EndIf
    \State $\mathcal{S}_1 \gets \mathcal{S}_{1,\text{new}},\quad \mathcal{S}_2 \gets \mathcal{S}_{2,\text{new}}$
\EndFor
\State \textbf{return} $\vtheta_1,\vtheta_2$
\end{algorithmic}
\end{algorithm}

\begin{algorithm}[H]
\caption{Find Optimal Split (Max Variant)}
\label{alg:split_en}
\begin{algorithmic}[1]
\Require Dataset \(\calS = \{(\vx_j, y_j)\}_{j=1}^N\), Max iterations \(T_{max}\), Step size \(\mu\)
\Ensure Optimal parameters \(\vtheta_1^*, \vtheta_2^*\)
\State Initialize \(\vtheta_1^{(0)}, \vtheta_2^{(0)}\) and partitions \(\mathcal{S}_1, \mathcal{S}_2\) using the strategy in Sec.~\ref{appendix:initialization}.
\For{$t = 1, \dots, T_{max}$}
    \State // \textit{Model Fitting Step }
    \State Compute \(\vtheta_{\text{OLS},1}^{(k)}\) and  \(\vtheta_{\text{OLS},2}^{(k)}\) (\textbf{with optional ridge regularization})
    \State $ \vtheta_1^{(k+1)} \gets \vtheta_1^{(k)} + \mu (\vtheta_{\text{OLS},1}^{(k)} - \vtheta_1^{(k)})$ \Comment{Newton update with step size \(\mu\)}
    \State $ \vtheta_2^{(k+1)} \gets \vtheta_2^{(k)} + \mu (\vtheta_{\text{OLS},2}^{(k)} - \vtheta_2^{(k)})$ \Comment{Newton update with step size \(\mu\)}
    \State // \textit{Partitioning Step (max: assign to the larger branch)}
    \State $\calS_{1, \text{new}} \gets \emptyset, \quad \calS_{2, \text{new}} \gets \emptyset$
    \ForAll{$(\vx_j, y_j) \in \calS$}
        \If{$\tilde{\vx}_j^T\vtheta_1 \ge \tilde{\vx}_j^T\vtheta_2$}
            \State $\calS_{1, \text{new}} \gets \calS_{1, \text{new}} \cup \{(\vx_j, y_j)\}$
        \Else
            \State $\calS_{2, \text{new}} \gets \calS_{2, \text{new}} \cup \{(\vx_j, y_j)\}$
        \EndIf
    \EndFor
    \If{$||\vtheta_1^{(k+1)}-\vtheta_1^{(k)}||+||\vtheta_2^{(k+1)}-\vtheta_2^{(k)}||<\epsilon$} \Comment{Check parameter changes}
        \State \textbf{break} \Comment{Convergence}
    \EndIf
    \State $\calS_1 \gets \calS_{1, \text{new}}, \quad \calS_2 \gets \calS_{2, \text{new}}$
\EndFor

\State \textbf{return} $\vtheta_1, \vtheta_2$
\end{algorithmic}
\end{algorithm}

\begin{algorithm}[H]
\caption{Find Optimal Split (Min-or-Max by RMSE)}
\label{alg:select_minmax}
\begin{algorithmic}[1]
\Require Dataset $\mathcal{S}=\{(\vx_j,y_j)\}_{j=1}^N$, $T_{\max}$, Step size $\mu$, Tolerance $\epsilon$
\Ensure Best parameters $(\vtheta_1^*,\vtheta_2^*)$, model type $\in\{\text{min},\text{max}\}$
\State $(\vtheta^{\max}_1,\vtheta^{\max}_2) \gets$ run Algorithm \ref{alg:split_en} (max variant) on $\mathcal{S}$ with $(T_{\max},\mu,\epsilon)$
\State $(\vtheta^{\min}_1,\vtheta^{\min}_2) \gets$ run Algorithm \ref{alg:split_min} (min variant) on $\mathcal{S}$ with $(T_{\max},\mu,\epsilon)$
\State Compute $\mathrm{RMSE}_{\max} \gets \sqrt{\frac{1}{N}\sum_{j=1}^N \big(y_j - \max(\tilde{\vx}_j^T\vtheta^{\max}_1,\ \tilde{\vx}_j^T\vtheta^{\max}_2)\big)^2}$
\State Compute $\mathrm{RMSE}_{\min} \gets \sqrt{\frac{1}{N}\sum_{j=1}^N \big(y_j - \min(\tilde{\vx}_j^T\vtheta^{\min}_1,\ \tilde{\vx}_j^T\vtheta^{\min}_2)\big)^2}$
\If{$\mathrm{RMSE}_{\min} < \mathrm{RMSE}_{\max}$}
    \State \textbf{return} $\vtheta^{\min}_1,\vtheta^{\min}_2$
\Else
    \State \textbf{return} $\vtheta^{\max}_1,\vtheta^{\max}_2$
\EndIf
\end{algorithmic}
\end{algorithm}

\clearpage

\begin{algorithm}[H]
\caption{Build Oblique Regression Tree }
\label{alg:build_en}
\begin{algorithmic}[1]
\Require Dataset \(\calS\), current depth \(d\), hyperparameters (max depth \(D_{max}\), min samples \(N_{min}\), RMSE threshold \(\tau_{RMSE}\), step size \(\mu\))
\Ensure Root node of a (sub)tree

\State // \textit{Check stopping criteria for creating a leaf}
\State Fit a single linear model for the current node: \(\vtheta_{leaf} \gets \argmin_{\vtheta} \sum_{(\vx,y) \in \calS} (y - \tilde{\vx}^T\vtheta)^2\) \Comment{Solve via OLS (\textbf{with optional ridge regularization})}
\State Calculate RMSE: \( \text{RMSE} = \sqrt{\frac{1}{|\calS|} \sum_{(\vx,y) \in \calS} (y - \tilde{\vx}^T\vtheta_{leaf})^2} \)

\If{\(d \ge D_{max}\) or \(|\calS| < N_{min}\) or \(\text{RMSE} < \tau_{RMSE}\)} \Comment{Check depth, samples, or RMSE}
    \State Create a leaf node.
    \State Store \(\vtheta_{leaf}\) in the node and \textbf{return} the node.
\EndIf
\State

\State // \textit{Proceed with splitting the node}
\State \(\vtheta_1^*, \vtheta_2^* \gets \texttt{FindOptimalSplit}(\calS, \mu)\)
\State \(\calS_L \gets \{ (\vx,y) \in \calS \mid \tilde{\vx}^T\vtheta_1^* \ge \tilde{\vx}^T\vtheta_2^* \}\)
\State \(\calS_R \gets \calS \setminus \calS_L\)
\State

\If{\(|\calS_L| < N_{min}\) or \(|\calS_R| < N_{min}\)} \Comment{Ineffective split, create leaf}
    \State Create a leaf node.
    \State Store the pre-computed \(\vtheta_{leaf}\) from line 3 and \textbf{return} the node.
\EndIf
\State

\State Create an internal node, store split rule \((\vtheta_1^*, \vtheta_2^*)\)
\State \(N.\text{left\_child} \gets \texttt{BuildTree}(\calS_L, d+1, D_{max}, N_{min}, \tau_{RMSE}, \mu)\)
\State \(N.\text{right\_child} \gets \texttt{BuildTree}(\calS_R, d+1, D_{max}, N_{min}, \tau_{RMSE}, \mu)\)
\State \textbf{return} the internal node
\end{algorithmic}
\end{algorithm}

\clearpage

\section{Time Complexity Analysis}
\label{sec:appendix_complexity_en}

\subsection{Algorithm 1: Find Optimal Split}
\begin{itemize}
    \item \textbf{Overview:} Algorithm 1 is an iterative algorithm designed to find the optimal splitting parameters \(\vtheta_1\) and \(\vtheta_2\). It alternates between model fitting and partition updating, running for at most \(T_{\text{max}}\) iterations.
    \item \textbf{Per Iteration:}
    \begin{itemize}
        \item \textbf{Model Fitting:} Solves OLS for two partitions \(\calS_1\) and \(\calS_2\). These OLS solutions can incorporate ridge regression. The complexity of each OLS problem is \(O(N_k \cdot d^2)\), where \(N_k\) is the number of samples in the partition and \(d\) is the feature dimension. Since \(N = N_1 + N_2\), the total complexity is \(O(N \cdot d^2)\).
        \item \textbf{Partition Updating:} For each of the \(N\) samples, computes two linear functions \(\tilde{\vx}_j^T \vtheta_1\) and \(\tilde{\vx}_j^T \vtheta_2\) (each \(O(d)\)) and reassigns partitions, yielding a complexity of \(O(N \cdot d)\).
        \item \textbf{Total Per Iteration:} \(O(N \cdot d^2) + O(N \cdot d) = O(N \cdot d^2)\).
    \end{itemize}
    \item \textbf{Total Iterations:} At most \(T_{\text{max}}\), a user-defined maximum.
    \item \textbf{Total Time Complexity:} \(O(T_{\text{max}} \cdot N \cdot d^2)\).
    \item \textbf{Note:} In practice, convergence may occur before \(T_{\text{max}}\), but the worst-case complexity assumes the full number of iterations.
\end{itemize}

\subsection{Algorithm 2: Build Oblique Regression Tree}
\begin{itemize}
    \item \textbf{Overview:} Algorithm 2 recursively constructs an oblique regression tree with a maximum depth \(D_{\text{max}}\) and a minimum of \(N_{\text{min}}\) samples per leaf. Each internal node invokes Algorithm 1, while leaf nodes fit a final linear model.
    \item \textbf{Internal Nodes:}
    \begin{itemize}
        \item \textbf{Splitting:} Each internal node calls Algorithm 1, with complexity \(O(T_{\text{max}} \cdot N_{\text{node}} \cdot d^2)\), where \(N_{\text{node}}\) is the number of samples at the node.
        \item \textbf{Worst-Case Assumption:} The tree is balanced, with each split dividing the data approximately in half.
        \item \textbf{Level Analysis:} At level \(k\), there are \(2^k\) nodes, each with approximately \(N / 2^k\) samples. The total complexity per level is:
        \[
        2^k \cdot O(T_{\text{max}} \cdot (N / 2^k) \cdot d^2) = O(T_{\text{max}} \cdot N \cdot d^2).
        \]
        \item \textbf{Total Levels:} With \(D_{\text{max}}\) levels, the complexity is \(O(D_{\text{max}} \cdot T_{\text{max}} \cdot N \cdot d^2)\).
    \end{itemize}
    \item \textbf{Leaf Nodes:}
    \begin{itemize}
        \item \textbf{Fitting:} Each leaf fits an OLS model, which can also incorporate ridge regression, with complexity \(O(N_{\text{leaf}} \cdot d^2)\), where \(N_{\text{leaf}}\) is the number of samples in the leaf.
        \item \textbf{Total Leaves:} Up to \(O(2^{D_{\text{max}}})\) leaves, but the total sample count across all leaves is \(N\). Thus, the total complexity is \(O(N \cdot d^2)\).
    \end{itemize}
    \item \textbf{Total Time Complexity:} \(O(D_{\text{max}} \cdot T_{\text{max}} \cdot N \cdot d^2) + O(N \cdot d^2) = O(D_{\text{max}} \cdot T_{\text{max}} \cdot N \cdot d^2)\).
    \item \textbf{Note:} In practical settings, \(T_{\text{max}}\) and \(D_{\text{max}}\) are often fixed constants, simplifying the complexity to \(O(N \cdot d^2)\). In theory, if \(D_{\text{max}}\) scales with \(N\), the complexity increases accordingly.
\end{itemize}

\section{Proof of Universal Approximation and Approximation Rate}
\label{sec:appendix_universal_approximation}

\setcounter{theorem}{0}
\begin{theorem}[] 
Let \(\mathcal{F}\) be the class of piecewise linear functions represented by finite oblique regression trees with linear models at the leaves. Let \(g: \mathcal{K} \to \R\) be a twice continuously differentiable function (\(g \in C^2(\mathcal{K})\)) on a compact set \(\mathcal{K} \subset \R^d\). Furthermore, assume that for any constructed partition of \(\mathcal{K}\) into regions \(\{\mathcal{R}_i\}\) with diameter \(\le \delta\), each region \(\mathcal{R}_i\) contains \(N_i\) training data points \((\vx_j, g(\vx_j))_{j=1}^{N_i}\). Let \(X_i\) be the \(N_i \times (d+1)\) design matrix whose \(j\)-th row is \(\tilde{x}_j^T\). We further assume that \(X_i\) satisfies \(\lambda_{\min}(X_i^T X_i) \ge c N_i\) for some constant \(c > 0\). Also, \(\sup_{\vx \in \mathcal{K}} \|\tilde{\vx}\|_2 \le D_{\mathcal{K}}\) for some bounded constant \(D_{\mathcal{K}}\).
Then, there exists a constant \(C\) such that for any \(\delta > 0\), we can construct a function \(f \in \mathcal{F}\) by partitioning the domain \(\mathcal{K}\) into regions \(\{\mathcal{R}_i\}\) with \(\max_i \diam(\mathcal{R}_i) \le \delta\), satisfying the uniform error bound:
\[ \sup_{\vx \in \mathcal{K}} |f(\vx) - g(\vx)| \le C \delta^2 \]
This approximation rate directly implies the universal approximation property, i.e., for any \(\epsilon > 0\), there exists a function \(f \in \mathcal{F}\) such that \(\sup_{\vx \in \mathcal{K}} |f(\vx) - g(\vx)| < \epsilon\).
\end{theorem}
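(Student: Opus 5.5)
The plan is to construct $f$ explicitly from a fine polyhedral partition and then bound the error leaf by leaf. First I will realise any such partition as a finite oblique tree in $\mathcal{F}$. Enclose the compact set $\mathcal{K}$ in a bounded box and cut it by axis-aligned hyperplanes into cubes of side $\delta/\sqrt{d}$, so each cell has diameter at most $\delta$. Axis-aligned cuts are special oblique splits $\tilde{\vx}^T(\vtheta_{t_1}-\vtheta_{t_2})\ge 0$, so a finite tree of depth $O(d\log(1/\delta))$ realises this partition. Intersecting the cells with $\mathcal{K}$ gives regions $\mathcal{R}_i$ with $\max_i\diam(\mathcal{R}_i)\le\delta$. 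At leaf $i$ I put the OLS fit $\hat{\vtheta}_i=(X_i^TX_i)^{-1}X_i^Tg_i$ to the data in $\mathcal{R}_i$, which is well defined by the eigenvalue hypothesis. The resulting $f$ is the tree output $\hat y(\vx)=\tilde{\vx}^T\hat{\vtheta}_{t_l(\vx)}$ from Section~\ref{sec:construction}.

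The error on a region splits into an approximation term and an estimation term. Fix $i$ and a point $\vx_i^0\in\mathcal{R}_i$, and let $\vtheta_i^\star$ encode the first-order Taylor polynomial $T_i(\vx)=g(\vx_i^0)+\nabla g(\vx_i^0)^T(\vx-\vx_i^0)=\tilde{\vx}^T\vtheta_i^\star$. Since $g\in C^2(\mathcal{K})$ and $\mathcal{K}$ is compact, $M=\sup\|\nabla^2 g\|$ is finite. Taylor's theorem then gives $|g(\vx)-T_i(\vx)|\le \tfrac{M}{2}\delta^2$ on $\mathcal{R}_i$. I will apply it along the segment to $\vx_i^0$, enlarging $\mathcal{K}$ to a convex compact neighbourhood if needed, or assuming $g$ extends as a $C^2$ function. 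For $\vx\in\mathcal{R}_i$ this gives
\[
|f(\vx)-g(\vx)|\le |\tilde{\vx}^T(\hat{\vtheta}_i-\vtheta_i^\star)|+\tfrac{M}{2}\delta^2 .
\]

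The main step is to bound $\hat{\vtheta}_i-\vtheta_i^\star$. Write $g_i=X_i\vtheta_i^\star+r_i$, where the residual vector satisfies $\|r_i\|_\infty\le \tfrac{M}{2}\delta^2$ and hence $\|r_i\|_2\le \tfrac{M}{2}\delta^2\sqrt{N_i}$. Then $\hat{\vtheta}_i-\vtheta_i^\star=(X_i^TX_i)^{-1}X_i^Tr_i$. Because $\|X_i\|_2\le D_{\mathcal{K}}\sqrt{N_i}$ (each row has norm at most $D_{\mathcal{K}}$) and $\|(X_i^TX_i)^{-1}\|\le 1/(cN_i)$, we get
\[
\|\hat{\vtheta}_i-\vtheta_i^\star\|_2\le \frac{D_{\mathcal{K}}\sqrt{N_i}\cdot \tfrac{M}{2}\delta^2\sqrt{N_i}}{cN_i}=\frac{MD_{\mathcal{K}}}{2c}\delta^2 .
\]
By Cauchy–Schwarz with $\|\tilde{\vx}\|\le D_{\mathcal{K}}$, the estimation term is at most $\tfrac{MD_{\mathcal{K}}^2}{2c}\delta^2$.

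Combining the two terms gives $C=\tfrac{M}{2}\bigl(1+D_{\mathcal{K}}^2/c\bigr)$, which is independent of $\delta$ and $i$. Taking the supremum over $i$ yields the uniform bound $C\delta^2$, and choosing $\delta<\sqrt{\epsilon/C}$ gives universal approximation. The obstacle I expect is keeping $C$ uniform in $N_i$. This is exactly what the normalised hypothesis $\lambda_{\min}\ge cN_i$ provides, since the factors of $N_i$ cancel. The other delicate point is the Taylor step when $\mathcal{K}$ or $\mathcal{R}_i\cap\mathcal{K}$ is not convex, which I handle by the convex-extension remark above.
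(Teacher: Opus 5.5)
Your proposal is correct and follows the paper's proof essentially step for step. Both arguments use a fine polyhedral partition realised by an oblique tree, a first-order Taylor benchmark on each leaf with remainder $\tfrac{M}{2}\delta^2$, and the OLS perturbation identity $\hat{\vtheta}_i-\vtheta_i^\star=(X_i^TX_i)^{-1}X_i^Tr_i$, whose $N_i$ factors cancel via $\lambda_{\min}(X_i^TX_i)\ge cN_i$. The differences are minor. The paper bounds $\|(X_i^TX_i)^{-1}X_i^T\|_2=1/\sigma_{\min}(X_i)\le 1/\sqrt{cN_i}$, which gives the slightly sharper $C=\tfrac{M}{2}(1+D_{\mathcal{K}}/\sqrt{c})$ (since $c\le D_{\mathcal{K}}^2$). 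Your submultiplicative bound gives $\tfrac{M}{2}(1+D_{\mathcal{K}}^2/c)$ instead. Your explicit cube partition and $C^2$-extension remark also handle the convexity of the Taylor segment more carefully than the paper does.
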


\begin{proof}
This proof relies on constructing a first-order Taylor approximation within each partition and then quantifying the impact of OLS estimation. The condition that \(g\) is twice continuously differentiable (\(g \in C^2(\mathcal{K})\)) on the compact set \(\mathcal{K}\) is crucial throughout this proof.

We adopt the following notation for clarity and consistency:
 \(\vx\) (or \(\vx_j\)) denotes a $d$-dimensional original feature vector. \(\tilde{\mathbf{x}}\) (or \(\tilde{\mathbf{x}}_j\)) denotes the $d+1$-dimensional augmented feature vector \([\vx^T, 1]^T\) (or \([\vx_j^T, 1]^T\) respectively), which includes a bias term.

\textbf{1. Domain Partitioning.}
As is standard, the compact set \(\mathcal{K}\) (representing the domain of original feature vectors) can be covered by a finite collection of disjoint convex polytopes \(\{\mathcal{R}_i\}_{i=1}^M\) such that \(\bigcup_{i=1}^M \mathcal{R}_i = \mathcal{K}\) and the diameter of each polytope \(\diam(\mathcal{R}_i) \le \delta\). Since oblique decision trees perform recursive partitioning with hyperplanes, such a partition can always be realized by a tree structure where the leaf nodes correspond to these regions.

\textbf{2. Local Taylor Linear Approximation.}
This step introduces a theoretical linear approximation. For each region \(\mathcal{R}_i\), we select a center point \(\mathbf{c}_i \in \mathcal{R}_i\) and define \(f_{\text{Taylor},i}(\vx)\) as the first-order Taylor expansion of \(g\) around \(\mathbf{c}_i\):
\[ f_{\text{Taylor},i}(\vx) = g(\mathbf{c}_i) + \nabla g(\mathbf{c}_i)^T (\vx - \mathbf{c}_i) \quad \text{for all } \vx \in \mathcal{R}_i \]
This \(f_{\text{Taylor},i}(\vx)\) is a linear function of \(\vx\). It serves as an analytical benchmark for how well \(g(\vx)\) can be approximated locally by a linear function.

\textbf{3. Bounding the Taylor Approximation Error (Approximation Error).}
By Taylor's theorem with the Lagrange remainder, for any \(\vx \in \mathcal{R}_i\), there exists a point \(\boldsymbol{\xi}\) on the line segment between \(\vx\) and \(\mathbf{c}_i\) such that:
\[ g(\vx) = g(\mathbf{c}_i) + \nabla g(\mathbf{c}_i)^T (\vx - \mathbf{c}_i) + \frac{1}{2} (\vx - \mathbf{c}_i)^T \nabla^2 g(\boldsymbol{\xi}) (\vx - \mathbf{c}_i) \]
where \(\nabla^2 g(\boldsymbol{\xi})\) is the Hessian matrix of \(g\) at \(\boldsymbol{\xi}\). The Taylor approximation error is therefore the magnitude of the remainder term:
\[ |f_{\text{Taylor},i}(\vx) - g(\vx)| = \left| \frac{1}{2} (\vx - \mathbf{c}_i)^T \nabla^2 g(\boldsymbol{\xi}) (\vx - \mathbf{c}_i) \right| \]
Since \(g \in C^2(\mathcal{K})\), its Hessian is bounded on the compact set \(\mathcal{K}\). Let \(M = \sup_{\mathbf{z} \in \mathcal{K}} \|\nabla^2 g(\mathbf{z})\|_2\) be the supremum of the spectral norm of the Hessian. We can then bound the error:
\[ |f_{\text{Taylor},i}(\vx) - g(\vx)| \le \frac{1}{2} M \|\vx - \mathbf{c}_i\|_2^2 \]
As both \(\vx\) and \(\mathbf{c}_i\) are in \(\mathcal{R}_i\), the distance between them is bounded by the diameter, \(\|\vx - \mathbf{c}_i\|_2 \le \delta\). This yields:
\[ |f_{\text{Taylor},i}(\vx) - g(\vx)| \le \frac{1}{2} M \delta^2 \]
Letting \(C_A = M/2\), this term is bounded by \(C_A \delta^2\).

\textbf{4. Quantification of OLS Estimation Error and its Impact on Convergence Rate.}
The preceding analysis assumes we can construct \(f_{\text{Taylor},i}(\vx)\) exactly. In practice, however, we only have access to values of \(g\) at a finite set of data points. To construct a function \(f \in \mathcal{F}\) (as described in the theorem), we use actual data to fit local linear models. Within each region \(\mathcal{R}_i\), we employ OLS to estimate a linear model from a set of \(N_i\) training data points \(\{(\vx_j, y_j)\}_{j=1}^{N_i}\), where \(y_j = g(\vx_j)\). Let \(f_{\text{OLS},i}(\mathbf{x})\) denote this OLS-estimated linear function, which takes the augmented input \(\tilde{\mathbf{x}}\).

To quantify the impact of OLS estimation, we decompose the total error for an original input \(\vx\) into two parts:
\[ |g(\vx) - f_{\text{OLS},i}(\mathbf{x})| \le |g(\vx) - f_{\text{Taylor},i}(\vx)| + |f_{\text{Taylor},i}(\vx) - f_{\text{OLS},i}(\mathbf{x})| \]
The first term is the Taylor approximation error, which we have bounded by \(C_A \delta^2\). We now need to bound the second term, representing the difference between the OLS-estimated function and the theoretical Taylor function.
Letting  \(f_{\text{Taylor},i}(\vx) = \mathbf{w}_i^{*T} \tilde{\mathbf{x}}\), where \(\mathbf{w}_i^*\) is the corresponding theoretical coefficient vector (i.e., \(\mathbf{w}_i^* = [\nabla g(\mathbf{c}_i)^T, g(\mathbf{c}_i) - \nabla g(\mathbf{c}_i)^T \mathbf{c}_i]^T\)). The OLS-estimated linear function is \(f_{\text{OLS},i}(\mathbf{x}) = \hat{\mathbf{w}}_i^T \tilde{\mathbf{x}}\), where \(\hat{\mathbf{w}}_i\) is its coefficient vector.

We know that \(g(\vx_j) = f_{\text{Taylor},i}(\vx_j) + R(\vx_j)\), where \(|R(\vx_j)| \le C_A \delta^2\) is the Taylor remainder. Utilizing the linear model representation, this can be written as \(y_j = g(\vx_j) = \mathbf{w}_i^{*T} \tilde{\mathbf{x}}_j + R(\vx_j)\) for each data point \((\vx_j, y_j)\) in \(\mathcal{R}_i\).

In matrix-vector notation, for all \(N_i\) training data points in \(\mathcal{R}_i\), we have:
\[ \mathbf{y}_i = X_i \mathbf{w}_i^* + \mathbf{R}_i \]
where \(\mathbf{y}_i = [y_1, \ldots, y_{N_i}]^T\) is the vector of observed values, \(X_i\) is the \(N_i \times (d+1)\) design matrix with rows \(\tilde{\mathbf{x}}_j^T\), and \(\mathbf{R}_i = [R(\vx_1), \ldots, R(\vx_{N_i})]^T\) is the vector of Taylor remainders.

The OLS estimator \(\hat{\mathbf{w}}_i\) minimizes the empirical loss \(\sum_{j=1}^{N_i} (g(\vx_j) - \hat{\mathbf{w}}_i^T \tilde{\mathbf{x}}_j)^2\). The solution to this minimization is given by the normal equations:
\[ X_i^T X_i \hat{\mathbf{w}}_i = X_i^T \mathbf{y}_i \]
Because \(X_i^T X_i\) is invertible (guaranteed by the data assumption in the theorem for sufficiently diverse data), we can solve for \(\hat{\mathbf{w}}_i\):
\[ \hat{\mathbf{w}}_i = (X_i^T X_i)^{-1} X_i^T \mathbf{y}_i \]
Substituting the expression for \(\mathbf{y}_i\) from above into the OLS solution:
\begin{align*}
\hat{\mathbf{w}}_i &= (X_i^T X_i)^{-1} X_i^T (X_i \mathbf{w}_i^* + \mathbf{R}_i) \\
&= (X_i^T X_i)^{-1} X_i^T X_i \mathbf{w}_i^* + (X_i^T X_i)^{-1} X_i^T \mathbf{R}_i
\end{align*}
Since \((X_i^T X_i)^{-1} X_i^T X_i = I\) (the identity matrix), the equation simplifies to:
\[ \hat{\mathbf{w}}_i = \mathbf{w}_i^* + (X_i^T X_i)^{-1} X_i^T \mathbf{R}_i \]
Rearranging this, we obtain the desired expression for the difference:
\[ \hat{\mathbf{w}}_i - \mathbf{w}_i^* = (X_i^T X_i)^{-1} X_i^T \mathbf{R}_i \]

Under the data assumption stated in the theorem, we can bound the uniform error of the OLS estimate:
\[ \sup_{\vx \in \mathcal{R}_i} |f_{\text{Taylor},i}(\vx) - f_{\text{OLS},i}(\mathbf{x})| = \sup_{\vx \in \mathcal{R}_i} |\tilde{\mathbf{x}}^T (\hat{\mathbf{w}}_i - \mathbf{w}_i^*)| \]
\[ \le \sup_{\vx \in \mathcal{R}_i} \|\tilde{\mathbf{x}}\|_2 \cdot \|\hat{\mathbf{w}}_i - \mathbf{w}_i^*\|_2 \]
\[ \le D_{\mathcal{K}} \|(X_i^T X_i)^{-1} X_i^T \mathbf{R}_i\|_2 \]
\[ \le D_{\mathcal{K}} \|(X_i^T X_i)^{-1} X_i^T\|_2 \|\mathbf{R}_i\|_2 \]
From the theorem's assumption, \(\lambda_{\min}(X_i^T X_i) \ge c N_i\) for some constant \(c > 0\). This implies that the smallest singular value of \(X_i\), \(\sigma_{\min}(X_i) = \sqrt{\lambda_{\min}(X_i^T X_i)} \ge \sqrt{c N_i}\). The spectral norm of the pseudoinverse \(X_i^+ = (X_i^T X_i)^{-1} X_i^T\) is given by \(\|X_i^+\|_2 = \frac{1}{\sigma_{\min}(X_i)}\). Therefore, we have:
\[ \|(X_i^T X_i)^{-1} X_i^T\|_2 \le \frac{1}{\sqrt{c N_i}} \]
Substituting this into the bound:
\[ \sup_{\vx \in \mathcal{R}_i} |f_{\text{Taylor},i}(\vx) - f_{\text{OLS},i}(\mathbf{x})| \le D_{\mathcal{K}} \cdot \frac{1}{\sqrt{c N_i}} \cdot \|\mathbf{R}_i\|_2 \]
Since \(\|\mathbf{R}_i\|_2 \le \sqrt{N_i} \sup_j |R(\vx_j)| \le \sqrt{N_i} C_A \delta^2\), we obtain:
\[ \sup_{\vx \in \mathcal{R}_i} |f_{\text{Taylor},i}(\vx) - f_{\text{OLS},i}(\mathbf{x})| \le D_{\mathcal{K}} \frac{1}{\sqrt{c N_i}} \sqrt{N_i} C_A \delta^2 = \frac{D_{\mathcal{K}} C_A}{\sqrt{c}} \delta^2 \]
This term is of order \(O(\delta^2)\), with a constant factor \(C_O = D_{\mathcal{K}} C_A / \sqrt{c}\). This indicates that as long as the data density and diversity are sufficient (ensuring \(N_i\) is large enough and \(c\) is well above zero), OLS can converge to the theoretically optimal local linear approximation, and the additional error it introduces does not degrade the \(O(\delta^2)\) convergence rate.

\textbf{Combining Total Error:}
Combining both error components, for each region \(\mathcal{R}_i\) and any original feature vector \(\vx \in \mathcal{R}_i\), we have:
\[ |g(\vx) - f_{\text{OLS},i}(\mathbf{x})| \le C_A \delta^2 + C_O \delta^2 = (C_A + C_O) \delta^2 \]
Let \(C = C_A + C_O = C_A (1 + D_{\mathcal{K}}/\sqrt{c})\). We define the constructed piecewise linear function \(f \in \mathcal{F}\) such that for \(\vx \in \mathcal{R}_i\), \(f(\vx) = f_{\text{OLS},i}(\mathbf{x})\). Thus, this bound holds for the entire domain \(\mathcal{K}\):
\[ \sup_{\vx \in \mathcal{K}} |g(\vx) - f(\vx)| \le C \delta^2 \]
This demonstrates that even with OLS for local linear fitting, provided sufficient data \(N_i\) in each region to stabilize the OLS estimates, the \(O(\delta^2)\) convergence rate is maintained. The universal approximation property also follows.
\end{proof}

\begin{corollary}[Curse of Dimensionality]
\label{cor:curse_of_dimensionality}
To achieve a partition with a maximum region diameter of \(\delta\), the depth of a well-balanced tree, \(D_{\max}\), scales with the dimension \(d\). To halve a region's diameter, one must ideally cut it along all \(d\) orthogonal directions, which requires approximately \(d\) splits, or \(d\) levels in the tree. This leads to the approximate relationship:
\[ \delta(D_{\max}) \approx \diam(\mathcal{K}) \cdot 2^{-D_{\max}/d} \]
This implies that achieving a fixed error level (i.e., a fixed \(\delta\)) requires the tree depth \(D_{\max}\) to grow linearly with dimension \(d\) (\(D_{\max} \propto d \log(1/\delta)\)), a manifestation of the curse of dimensionality inherent to space-partitioning methods.
\end{corollary}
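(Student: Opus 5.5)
The plan is to turn the heuristic into two matching estimates. First, an explicit balanced tree achieves $\delta(D_{\max}) \lesssim \diam(\mathcal{K})\, 2^{-D_{\max}/d}$. Second, a volume-counting argument shows that no tree with hinge (hyperplane) splits can do essentially better. Together these give $D_{\max} \asymp d \log(1/\delta)$. Both estimates then plug directly into Theorem~\ref{thm:universal_approximation_rate1}. That theorem turns a target accuracy $\epsilon$ into a target diameter $\delta = \sqrt{\epsilon / C}$, so the depth needed for accuracy $\epsilon$ is $D_{\max} \asymp \tfrac{d}{2}\log(C/\epsilon)$.

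\textbf{Upper bound (construction).} Enclose $\mathcal{K}$ in an axis-aligned cube $Q$ of side $s$. Axis-aligned splits are a special case of oblique hinge splits: take $\vtheta_{t_1} - \vtheta_{t_2}$ proportional to $\ve_k$ minus a threshold. So I would build the tree that splits coordinate $1, 2, \dots, d$ at the midpoint, cycling through the coordinates. After $D_{\max} = d m$ levels every leaf cell is a cube of side $s 2^{-m}$ and diameter $s\sqrt{d}\, 2^{-m}$. When $D_{\max}$ is not a multiple of $d$, I use $\lfloor D_{\max}/d \rfloor$ in place of $m$. Intersecting the cells with $\mathcal{K}$ only shrinks the diameters. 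This gives $\delta(D_{\max}) \le s\sqrt{d}\, 2^{-\lfloor D_{\max}/d\rfloor}$, which is the stated relation up to the constant replacing $\diam(\mathcal{K})$ by $\diam(Q)$. This step is routine.

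\textbf{Lower bound (necessity of linear growth in $d$).} Assume $\mathcal{K}$ has nonempty interior, so $\mathrm{vol}(\mathcal{K}) > 0$. A tree of depth $D_{\max}$ has at most $2^{D_{\max}}$ leaves, regardless of whether its splits are oblique. By the isodiametric inequality, each leaf region of diameter at most $\delta$ has volume at most $\omega_d (\delta/2)^d$. Covering $\mathcal{K}$ therefore forces
\[
2^{D_{\max}} \ge \frac{\mathrm{vol}(\mathcal{K})}{\omega_d (\delta/2)^d},
\]
that is, $D_{\max} \ge d \log_2(1/\delta) + \log_2\bigl(\mathrm{vol}(\mathcal{K})\, 2^d / \omega_d\bigr)$. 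Combined with the upper bound, this shows $D_{\max} \propto d\log(1/\delta)$ up to lower-order terms. Obliqueness cannot circumvent this, since the leaf count is what limits the partition.

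The main obstacle is making the "$\approx$" honest. The constants in the two bounds differ: there is a $\sqrt{d}$ factor above, and $\omega_d^{1/d} \sim \sqrt{2\pi e/d}$ appears below. These affect only the additive $O(\log d)$ term in $D_{\max}$, not the leading $d\log(1/\delta)$. So I would state the corollary's relation as holding up to such factors. I would also remark that the heuristic ``$d$ cuts halve the diameter'' is exactly the cycling construction above.
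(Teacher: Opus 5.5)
The paper gives no separate proof. The corollary carries its own heuristic: halving a cell's diameter costs about $d$ cuts, one per coordinate direction. That heuristic is exactly your cycling midpoint construction. You go genuinely further on the necessity side. The leaf count $2^{D_{\max}}$ combined with the isodiametric inequality replaces the paper's unproved claim that ``one must cut along all $d$ orthogonal directions.'' Unlike that axis-based heuristic, your argument applies verbatim to oblique splits. Both of your bounds are correct as written, but two parts of the write-up need fixing.

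\textbf{The constant bookkeeping is off by a factor of $d$.} Depth enters as $d\log_2(\cdot/\delta)$, so a multiplicative gap $A/B$ between the diameters in your two bounds costs $d\log_2(A/B)$ levels, not $\log_2(A/B)$. Concretely:
\begin{itemize}
\item The $\sqrt d$ in the construction costs $\tfrac d2\log_2 d$ levels.
\item The constant in your own lower bound is $\log_2(2^d/\omega_d)=d+\tfrac d2\log_2\tfrac{d}{2\pi e}+O(\log d)$, which is $\Theta(d\log d)$ rather than $O(\log d)$.
\end{itemize}
With $d$ fixed and $\delta\to0$, the two bounds still agree up to an $O_{d,\mathcal K}(1)$ term. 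That suffices for $D_{\max}=d\log_2(1/\delta)+O(1)$. They do not agree in the regime the corollary actually emphasizes, fixed $\delta$ with $d$ growing. There the construction only certifies depth $d\log_2(1/\delta)+\tfrac d2\log_2 d+O(d)$. For the unit ball, by contrast, your lower bound is exactly $d\log_2(2/\delta)$. So the ``requires linear growth'' claim should rest on the lower bound alone. You also need to fix how $\mathcal K$ is normalized across dimensions. Your bound is stated in terms of $\mathrm{vol}(\mathcal K)$, and relating it to $\diam(\mathcal K)$ needs a fatness assumption.

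\textbf{The translation to an accuracy target works in one direction only.} Theorem~\ref{thm:universal_approximation_rate1} says that diameter $\le\delta$ suffices for error $\le C\delta^2$. It says nothing about necessity. Your depth lower bound for a diameter target therefore does not transfer to an accuracy target. In fact $D_{\max}\asymp\tfrac d2\log(C/\epsilon)$ is false in general. For $g(\vx)=x_1^2$ on $[0,1]^d$, splitting only in $x_1$ reaches error $\epsilon$ at depth about $\tfrac12\log_2(1/(8\epsilon))$, independent of $d$.

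A two-sided statement in $\epsilon$ does hold for a specific hard function, such as $g(\vx)=\|\vx\|_2^2$ on a convex $\mathcal K$. Restrict $g$ to a segment realizing a cell's diameter $L$. On that segment every linear fit has sup error at least $L^2/8$. Hence accuracy $\epsilon$ forces every cell diameter below $\sqrt{8\epsilon}$, and your volume bound then applies. The paper makes the same conflation with ``i.e., a fixed $\delta$,'' but your $\asymp$ claim states it explicitly as a two-sided result.
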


\section{\textcolor{black}{Ablation Study on Step Size: Stability, Efficiency, and Fallback}}
\label{apendix_fallback}

\textcolor{black}{
The step size $\mu$ (damping factor) is a key hyperparameter in our node-splitting algorithm. It controls how aggressively each Newton update moves toward the local OLS solution and therefore directly affects stability, efficiency, and final performance. To examine these effects empirically, we perform an ablation over $\mu$ on four datasets with different characteristics: the oscillatory \texttt{sinc} function, the smoother \texttt{twisted\_sigmoid} function, and the real-world \texttt{Kinematics} and \texttt{D-Ailerons} datasets. For the two real-world datasets, we additionally evaluate an automatic line-search step-size strategy (denoted by \texttt{auto}). Results are reported in Tables~\ref{tab:ablation_sinc}--\ref{tab:ablation_delta_ailerons}. }

\paragraph{1. Damping and the fallback mechanism.}
On the synthetic datasets, smaller step sizes lead to very few fallbacks, while more aggressive steps can trigger the fallback mechanism more often. For \texttt{sinc}, the fallback rate is essentially negligible at $\mu=0.01$ (\textbf{0.39\%}), modest at $\mu=0.05$ (\textbf{4.74\%}), and rises to \textbf{32.86\%}, \textbf{60.94\%}, and \textbf{29.55\%} at $\mu=0.10$, $0.50$, and $1.00$, respectively. A similar but milder trend appears on \texttt{twisted\_sigmoid}: the fallback rate increases from \textbf{0.67\%} at $\mu=0.01$ to \textbf{34.67\%} at $\mu=0.50$, before dropping to \textbf{19.33\%} at $\mu=1.00$. On the real-world datasets, fallback rates are moderate and relatively stable across $\mu$ for \texttt{Kinematics} (between \textbf{6.87\%} and \textbf{10.82\%}, with \textbf{9.51\%} for \texttt{auto}), and identically zero on \texttt{D-Ailerons} for all settings, including \texttt{auto}. Overall, these results indicate that damping does help keep the fallback mechanism rare on more challenging synthetic objectives, while some benign regression problems remain stable even for relatively aggressive step sizes.

\textcolor{black}{
\paragraph{2. Step size, iterations, and training time.}
Larger step sizes do not automatically translate into faster convergence in wall-clock time. On the \texttt{Kinematics} datasets, small or moderately damped step sizes typically lead to stable and fairly regular Newton dynamics, so the node-wise optimisation terminates after only a few iterations on most splits. In contrast, aggressive step sizes can induce oscillatory or irregular behaviour at some nodes: the objective value may fail to decrease monotonically, and the parameters and partitions can keep moving without settling down quickly. Such oscillatory runs require many more iterations before the stopping criterion is met, which substantially increases both the average iteration count and the total training time. As a result, the dependence of fit time on~\(\mu\) is non-monotone: large steps can be very fast when they converge cleanly, but they may be markedly less efficient once the cost of these unstable cases is taken into account.}

\paragraph{3. Effect on predictive accuracy.}
The impact of $\mu$ on RMSE is dataset dependent but generally modest, with a few notable cases. On \texttt{sinc}, the smallest step size $\mu=0.01$ achieves the lowest error (\textbf{0.02796}), while $\mu=0.10$ is clearly suboptimal (\textbf{0.05310}); larger steps ($\mu=0.50$ and $1.00$) recover competitive errors but at the cost of higher fallback rates and iteration counts. On \texttt{twisted\_sigmoid}, all choices of $\mu$ yield similar performance, with a slight advantage around $\mu=0.50$ and $1.00$ (RMSE $\approx \textbf{0.0258}$--\textbf{0.0283}). On \texttt{Kinematics}, performance improves as $\mu$ increases up to $0.50$ (RMSE decreasing from \textbf{0.13055} at $\mu=0.01$ to \textbf{0.10266} at $\mu=0.50$), and the \texttt{auto} strategy yields the lowest error (\textbf{0.10185}) while using far fewer iterations than $\mu=0.50$. On \texttt{D-Ailerons}, all step-size choices, including \texttt{auto}, achieve essentially identical RMSE (about \textbf{$1.68\times10^{-4}$}), indicating that accuracy is largely insensitive to the damping factor on this dataset.

\paragraph{4. Overall takeaway.}
Taken together, these experiments suggest that the damping factor $\mu$ mainly controls a trade-off between stability, efficiency, and the need for fallbacks, with only moderate influence on accuracy in most settings. Small or moderate fixed step sizes tend to keep fallbacks rare and iterations moderate, especially on synthetic problems such as \texttt{sinc} and \texttt{twisted\_sigmoid}. Extremely aggressive fixed steps can lead to increased fallback usage and substantially more iterations, particularly on more structured or high-dimensional data such as \texttt{Kinematics}. The automatic line-search step size (\texttt{auto}), which corresponds to the theoretically analysed variant in Appendix~\ref{sec:appendix_convergence_linesearch}, consistently provides a robust choice on the real-world datasets: it attains strong accuracy, moderate iteration counts, and stable fallback behaviour without requiring manual tuning of~$\mu$.

It is important to note that Figures~\ref{fig:analysis_sinc} and \ref{fig:appendix_analysis_sigmoid} focus on the dynamics of a single internal node under different step sizes, with a fixed initialization. In contrast, Tables~\ref{tab:ablation_sinc}--\ref{tab:ablation_delta_ailerons} report averages over all nodes in a full tree, including the effect of the fallback mechanism and stochastic variation across runs. As a result, local node-level stability and global tree-level efficiency are related but not identical quantities, and their trends need not coincide pointwise for every value of~\(\mu\).

\begin{table}[htbp]
\scriptsize
\centering
\caption{Ablation study on the \texttt{sinc} dataset. Metrics are averaged over 10 runs.}
\label{tab:ablation_sinc}
\resizebox{\columnwidth}{!}{%
\begin{tabular}{@{}cccccccc@{}} 
\toprule
Step size (\(\mu\)) & RMSE \(\downarrow\) & Leaves & Avg. Iters & Fit Time (s) & Avg. Fallbacks & Avg. Splits & Fallback Rate (\%) \\ % 
\midrule
0.01 & 0.02796 & 26.7 & 2.56 & 0.166 & 0.1 & 25.7 & 0.39\% \\
0.05 & 0.02885 & 24.2 & 4.46 & 0.105 & 1.1 & 23.2 & 4.74\% \\
0.10 & 0.05310 & 22.3 & 5.97 & 0.107 & 7.0 & 21.3 & 32.86\% \\
0.50 & 0.03069 & 26.6 & 4.30 & 0.112 & 15.6 & 25.6 & 60.94\% \\
1.00 & 0.02926 & 23.0 & 6.65 & 0.141 & 6.5 & 22.0 & 29.55\% \\
\bottomrule
\end{tabular}%
}
\end{table}

\begin{table}[htbp]
\scriptsize
\centering
\caption{Ablation study on the \texttt{twisted\_sigmoid} dataset. Metrics are averaged over 10 runs.}
\label{tab:ablation_twisted_sigmoid}
\resizebox{\columnwidth}{!}{%
\begin{tabular}{@{}cccccccc@{}} %
\toprule
Step size (\(\mu\)) & RMSE \(\downarrow\) & Leaves & Avg. Iters & Fit Time (s) & Avg. Fallbacks & Avg. Splits & Fallback Rate (\%) \\ % 
\midrule
0.01 & 0.02653 & 16.0 & 3.17 & 0.045 & 0.1 & 15.0 & 0.67\% \\
0.05 & 0.03011 & 16.0 & 4.44 & 0.083 & 0.6 & 15.0 & 4.00\% \\
0.10 & 0.02831 & 15.6 & 5.28 & 0.063 & 3.0 & 14.6 & 20.55\% \\
0.50 & 0.02578 & 16.0 & 6.13 & 0.107 & 5.2 & 15.0 & 34.67\% \\
1.00 & 0.02577 & 16.0 & 7.54 & 0.121 & 2.9 & 15.0 & 19.33\% \\
\bottomrule
\end{tabular}%
}
\end{table}

\begin{table}[htbp]
\scriptsize
\centering
\caption{\textcolor{black}{Ablation study on the \texttt{Kinematics} dataset. Metrics are averaged over five runs.}}
\label{tab:ablation_kinematics}
\resizebox{\columnwidth}{!}{%
\begin{tabular}{@{}cccccccc@{}} 
\toprule
Step size (\(\mu\)) & RMSE \(\downarrow\) & Leaves & Avg. Iters & Fit Time (s) & Avg. Fallbacks & Avg. Splits & Fallback Rate (\%) \\ % 
\midrule
\textcolor{black}{0.01} & 0.13055 & 57.0 & \textcolor{black}{10.89} & \textcolor{black}{0.49} & 4.6 & 56.0 & 8.21\% \\
0.05 & 0.11367 & 59.2 & 19.53 & 0.62 & 4.0 & 58.2 & 6.87\% \\
0.10 & 0.10631 & 57.2 & 26.65 & 0.72 & 6.0 & 56.2 & 10.68\% \\
0.50 & 0.10266 & 56.6 & 59.77 & 1.53 & 5.8 & 57.2 & 10.14\% \\
\textcolor{black}{1.00} & 0.10702 & 48.6 & \textcolor{black}{83.43} & \textcolor{black}{1.88} & 5.8 & 53.6 & 10.82\% \\
\textcolor{black}{auto} & 0.10185 & 48.6 & \textcolor{black}{15.76} & \textcolor{black}{0.79} & 5.4 & 56.8 & 9.51\% \\
\bottomrule
\end{tabular}%
}
\end{table}

\begin{table}[htbp]
\scriptsize
\centering
\caption{Ablation study on the \texttt{D-Ailerons} dataset. Metrics are averaged over five runs. The reported RMSE values are scaled by \(\times 10^{-4}\).}
\label{tab:ablation_delta_ailerons} 
\resizebox{\columnwidth}{!}{%
\begin{tabular}{@{}cccccccc@{}} 
\toprule
Step size (\(\mu\)) & RMSE \(\downarrow\) & Leaves & Avg. Iters & Fit Time (s) & Avg. Fallbacks & Avg. Splits & Fallback Rate (\%) \\ 
\midrule
0.01 & 1.68 & 8 & 1.57 & 0.105 & 0 & 7 & 0\% \\
0.05 & 1.68 & 8 & 32.43 & 0.178 & 0 & 7 & 0\% \\
0.10 & 1.68 & 8 & 30.09 & 0.179 & 0 & 7 & 0\% \\
0.50 & 1.68 & 8 & 57.69 & 0.345 & 0& 7 & 0\% \\
1.00 & 1.68 & 8 & 92.60 & 0.525 & 0 & 7 & 0\% \\
auto & 1.68 & 8 & 6.71 & 0.183 & 0 & 7 & 0\% \\
\bottomrule
\end{tabular}%
}
\end{table}

\section{Synthetic Data Description}
\label{sec:appendix_synthetic_functions}

This appendix provides the detailed mathematical definitions of the 3D synthetic functions used in Section \ref{subsec:approximation}, the rationale for the selection of these synthetic functions and noise levels, and additional visualizations of our method's 2D and 3D function approximation capabilities (Figures \ref{fig:model_comparison} and \ref{fig:3d_model_comparison}).

For all 3D functions, the inputs \(x_1, x_2\) range from \([-3, 3]\).

\paragraph{3D Oscillatory Surface Functions:}
\begin{itemize}
    \item \( f_1(x_1,x_2)=\left( 0.5\, x_{1}^{3} - 2\, x_{1} x_{2}^{2} + 3 \sin\left( 4 x_{1} \right) \cos\left( 2 x_{2} \right) + 0.1\, e^{-\left( x_{1}^{2} + x_{2}^{2} \right)} \right) \)
    \item \( f_2(x_1,x_2)=\sin(3x_{1}) + \cos(2x_{2}) + 0.5 \, \sin(5x_{1}) \cos(4x_{2}) \)
    \item \( f_3(x_1, x_2) = \frac{x_1^2 - x_2^2}{0.5 + r^2} + \sin(r) e^{-r} \), where \(r = \sqrt{x_1^2 + x_2^2} + 10^{-6}\).
    \item \( f_4(x_1, x_2) = 2 \cdot \exp\!\left(-\frac{(x_1 - 1)^2 + (x_2 - 1)^2}{0.5}\right) - 3 \cdot \exp\!\left(-\frac{(x_1 + 1)^2 + (x_2 + 1.5)^2}{0.3}\right) + 0.5 \, x_1 \)
\end{itemize}

\textbf{Rationale for Selection of Synthetic Functions:}
 These functions exhibit diverse nonlinear characteristics, including high-frequency oscillations (\texttt{sinc}, 3D oscillatory functions), sharp changes, and inflection points (\texttt{twisted\_sigmoid}). Successfully approximating such a wide range of nonlinearities, as demonstrated by our empirical results, provides strong empirical support for the universal approximation capability of our piecewise linear HRT model, which is theoretically proven in Theorem \ref{thm:universal_approximation_rate1}.
 The \texttt{sinc} function, with its numerous local extrema and oscillations, presents a particularly challenging landscape for optimization algorithms. It helps to expose potential instabilities (e.g., non-monotonic convergence, limit cycles) in the splitting mechanism, thereby highlighting the necessity and effectiveness of damped Newton updates for robust convergence, as analyzed in Section \ref{subsec:convergence}. Conversely, the \texttt{twisted\_sigmoid} function, while non-trivial, is smoother and less prone to pathological behaviors, allowing us to demonstrate the efficiency of our Newton updates in a more stable environment.

\textbf{Rationale for Noise Levels:}
Introducing independent and identically distributed zero-mean Gaussian noise (with standard deviations of \(0.025\) for 2D tasks and \(0.05\) for 3D tasks) simulates this inherent uncertainty. This ensures that our model's performance is evaluated not just on ideal, noiseless functions, but on data that more closely mimic practical applications.
The selected noise levels are not so high that the primary challenge for the models remains the accurate approximation of the underlying functional relationship, rather than solely distinguishing signal from overwhelming noise. This allows us to clearly observe and compare the models' capacities to fit the true, underlying nonlinear structures. If noise were excessively high, it would obscure the approximation performance, making it difficult to assess the inherent expressive power of the models.

\begin{figure*}[htbp]
    \centering
    \begin{subfigure}{0.48\textwidth}
        \centering
        \includegraphics[width=\textwidth]{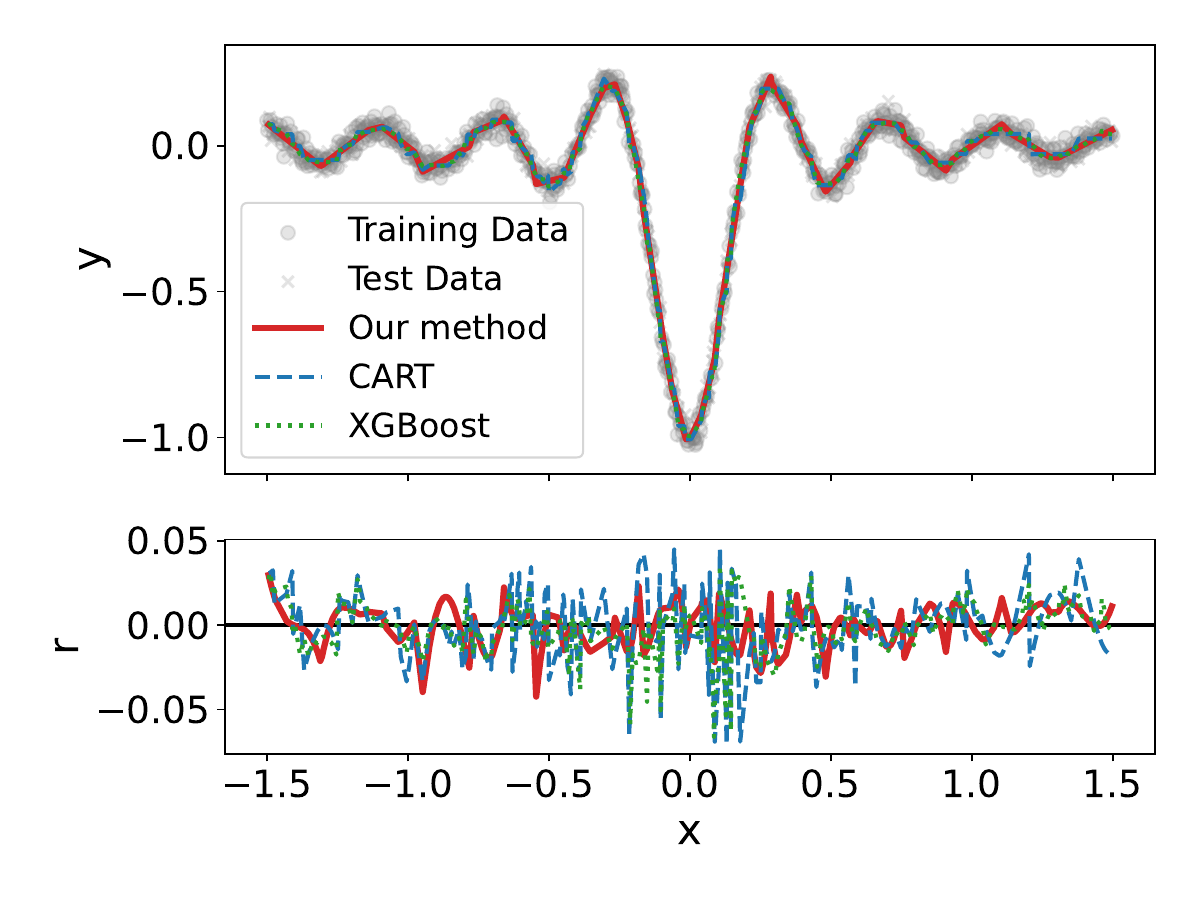}
        \caption{ \(y = -\frac{\sin(5\pi x)}{5\pi x} + 0.025\epsilon\)}
        \label{sinc}
    \end{subfigure}
    \hfill
    \begin{subfigure}{0.48\textwidth}
        \centering
        \includegraphics[width=\textwidth]{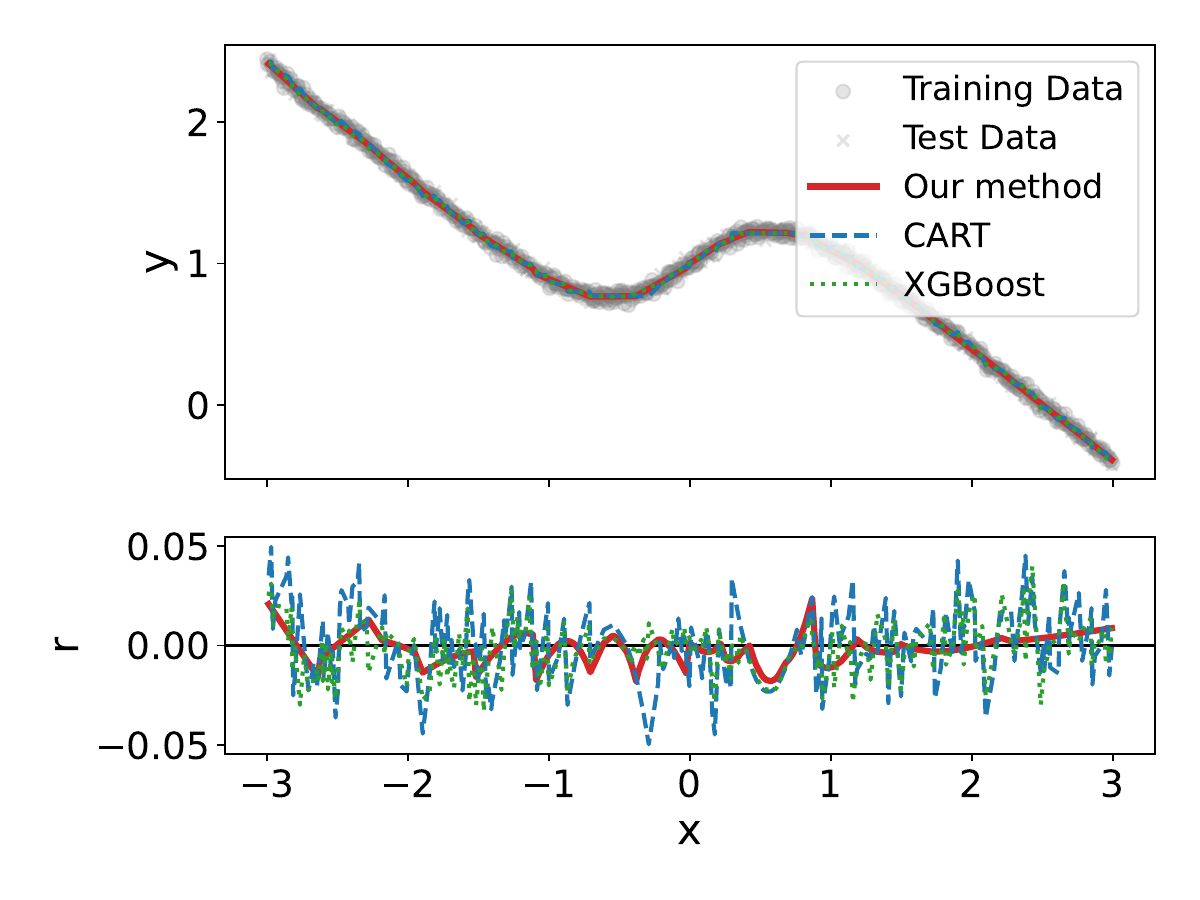}
        \caption{  \(y = \frac{2}{1+e^{-3x}} - 0.8x + 0.025\epsilon\)}
        \label{twist}
    \end{subfigure}
    \caption{Top: Approximation performance of various methods on \texttt{sinc} and \texttt{twisted\_sigmoid} functions. 
    Bottom: Residuals \( r = y_{\text{pred}} - y_{\text{true}} \), representing the difference between predicted and true values for each method.}
    \label{fig:model_comparison}
\end{figure*}

\begin{figure*}[t]
\vskip -0.1in
    \centering
    \begin{subfigure}{0.48\textwidth}
        \centering
        \includegraphics[width=\textwidth]{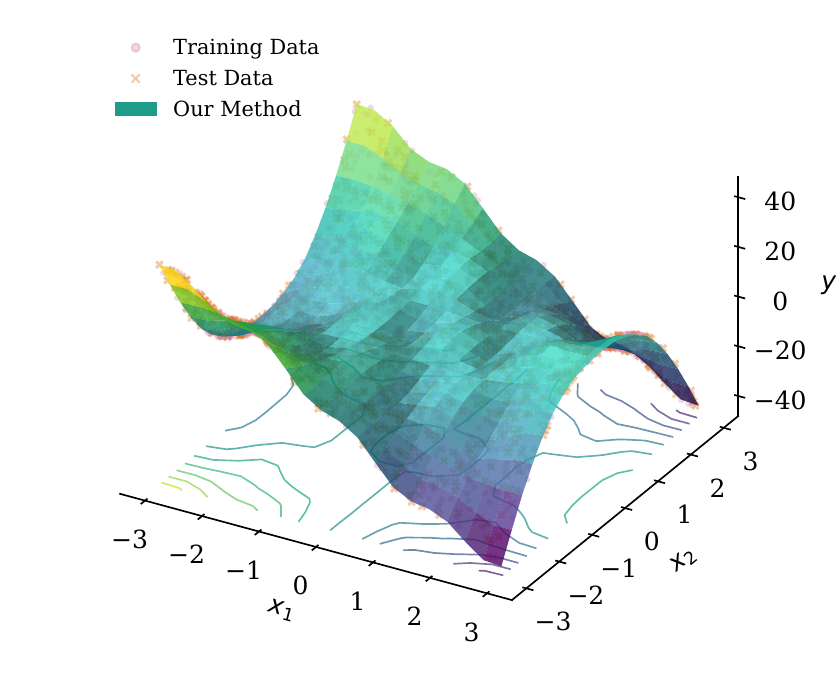}
        \caption{ Our Method approximating \( y=f_1(x_1,x_2)+0.05\epsilon\).}
    \end{subfigure}
    \hfill
    \begin{subfigure}{0.48\textwidth}
        \centering
        \includegraphics[width=\textwidth]{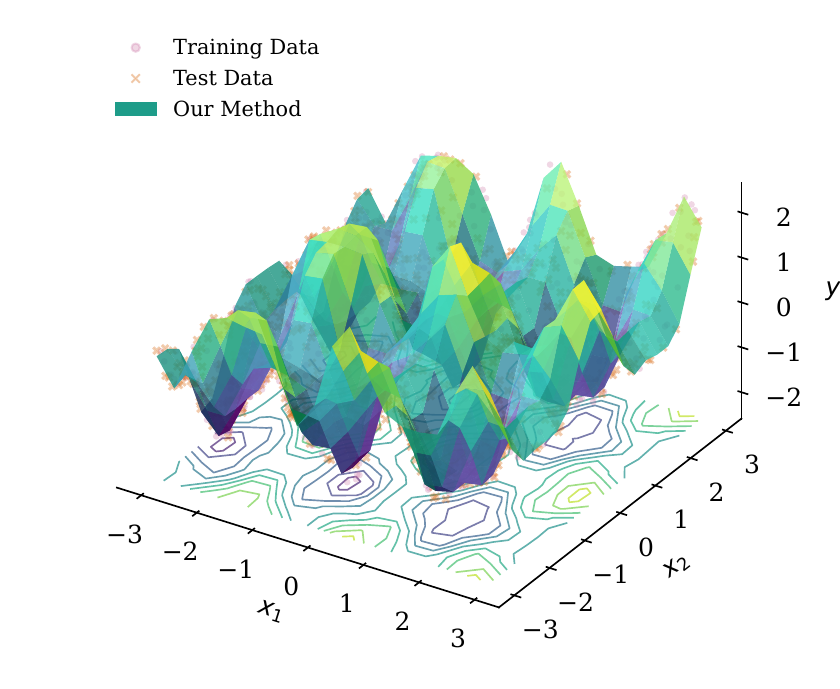}
        \caption{Our Method approximating \( y=f_2(x_1,x_2)+0.05\epsilon\).}
    \end{subfigure}
      \hfill
    \begin{subfigure}{0.48\textwidth}
        \centering
        \includegraphics[width=\textwidth]{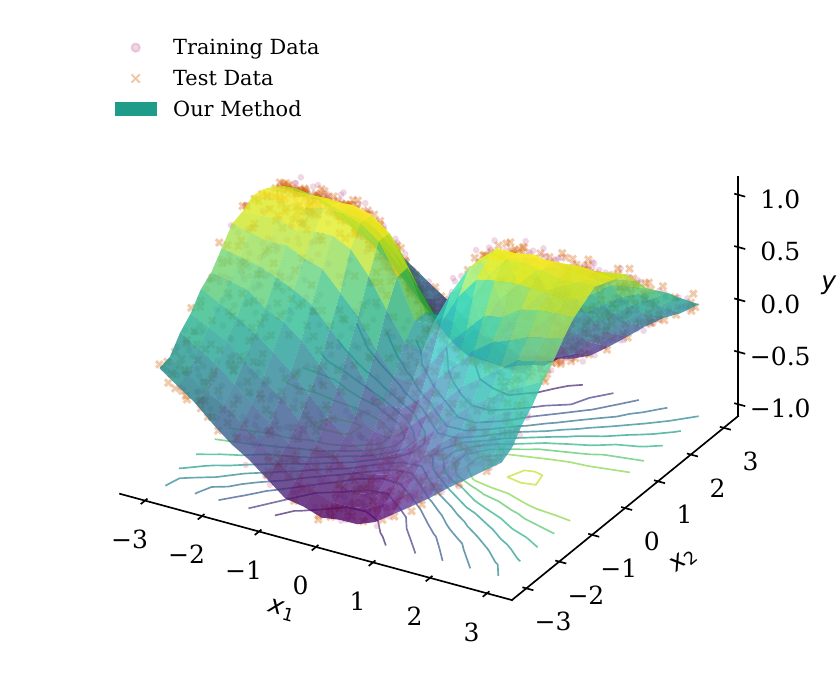}
        \caption{ Our Method approximating \( y=f_3(x_1,x_2)+0.05\epsilon\).}
        \label{f3_plot_appendix}
    \end{subfigure}
    \hfill
    \begin{subfigure}{0.48\textwidth}
        \centering
        \includegraphics[width=\textwidth]{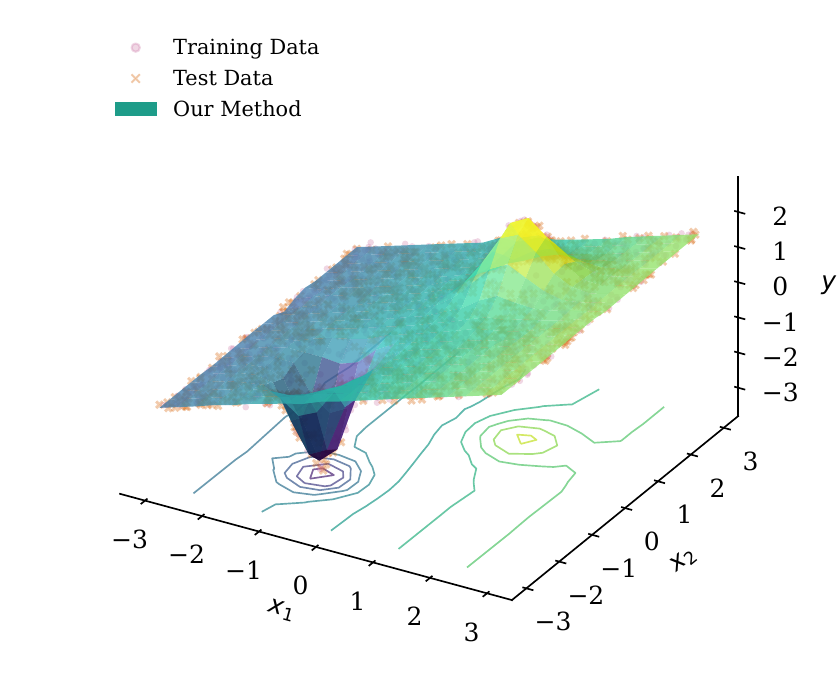}
        \caption{Our Method approximating \( y=f_4(x_1,x_2)+0.05\epsilon\).}
        \label{f4_plot_appendix}
    \end{subfigure}
    \caption{3D function approximation.
    We visualize the learned piecewise linear surface of our method along with training/testing points. Contours on the floor help read depth. For clarity, only the fitted surface of Our Method is shown; detailed quantitative comparisons with baselines are provided in Table~\ref{tab:synthetic_performance_combined}.
    }
    \label{fig:3d_model_comparison}.
    \vskip -0.3in
\end{figure*}

\clearpage

\section{Hyperparameter}
\label{sec:appendix_hyperparameters}

This appendix details the hyperparameter search grids used for tuning various models in 2D and 3D function approximation experiments. For each model, a grid search approach was employed to find the optimal hyperparameters that minimize the validation error. The specific ranges for each hyperparameter are listed in Table \ref{tab:hyperparameter_grids}. Hyperparameters are tuned via five-fold cross-validation on the training set, with final performance reported on the testing set.

\begin{table}[!ht]
\small
    \centering
    \caption{Hyperparameter Search Grids for 3D Function Approximation Models}
    \label{tab:hyperparameter_grids}
    \begin{tabular}{lll}
        \toprule
        Model & Hyperparameter & Search Grid \\
        \midrule
        \multirow{4}{*}{HRT} 
                                    & \texttt{max\_depth} & \( \{4, 6, 8, 10, 12\} \) \\
                                    & \texttt{threshold} & \( \{0.01, 0.03, 0.05\} \) \\
                                    & \texttt{step\_size} & \( \{0.01, 0.5, 1, \texttt{'auto'}\} \)\\
                                    & \texttt{ridge\_alpha} & \( \{0.001, 0, 1\} \) \\
        \midrule
        \multirow{3}{*}{CART}       & \texttt{max\_depth} & \( \{3, 5, 7, 9, 11\} \) \\
                                    & \texttt{min\_samples\_split} & \( \{2, 5, 10\} \) \\
                                    & \texttt{min\_samples\_leaf} & \( \{1, 2, 4\} \) \\
        \midrule
        \multirow{4}{*}{XGB}    & \texttt{n\_estimators} & \( \{50, 100, 200\} \) \\
                                    & \texttt{learning\_rate} & \( \{0.01, 0.1, 0.2\} \) \\
                                    & \texttt{max\_depth} & \( \{2, 3, 5\} \) \\
                                    & \texttt{subsample} & \( \{0.7, 1.0\} \) \\
        \bottomrule
    \end{tabular}
\end{table}

For HRT, we tuned the \texttt{threshold} parameter, which controls the splitting criterion, and \texttt{max\_depth}, which limits the maximum depth of the constructed tree structure. 
We also varied the \texttt{ridge\_alpha}, which adjusts the regularization strength in ridge regression to mitigate overfitting, and the \texttt{step\_size}, which determines the granularity of updates during optimization and affects convergence speed and stability.
For CART, we explored variations in \texttt{max\_depth}, the minimum number of samples required to split an internal node (\texttt{min\_samples\_split}), and the minimum number of samples required to be at a leaf node (\texttt{min\_samples\_leaf}).
For XGBoost, we optimized the number of boosting rounds (\texttt{n\_estimators}), the step size shrinkage (\texttt{learning\_rate}), the maximum depth of a tree (\texttt{max\_depth}), and the subsample ratio of the training instance (\texttt{subsample}). The objective function used for XGBoost was \texttt{'reg:squarederror'} with \texttt{'rmse'} as the evaluation metric.

Tables~\ref{tab:all_datasets_FHyperparameters} and \ref{tab:all_datasets_Hyperparameters} summarize the chosen hyperparameters via 5-fold cross-validation, covering both our proposed method and all reproduced baseline methods.
The tuned hyperparameters for DTSemNet and DGT are listed in Table~\ref{tab:hyperparameters of DGT and DTSemNet},.

\begin{table*}[htbp] 
%\small
%\footnotesize
%\scriptsize
\tiny
    \centering
    \caption{Optimal Hyperparameters Across Various Functions for Different Models} 
    \label{tab:all_datasets_FHyperparameters}
    \begin{tabular}{l l l} 
        \toprule
        \textbf{Function} & \textbf{Model} & \textbf{Best Hyperparameters} \\
        \midrule 
        \multirow{3}{*}{sinc} 
        & CART & \texttt{\{'max\_depth': 9, 'min\_samples\_leaf': 2, 'min\_samples\_split': 2\}} \\
        & XGB & \texttt{\{'learning\_rate': 0.1, 'max\_depth': 3, 'n\_estimators': 200, 'subsample': 0.7\}} \\
        & HRT & \texttt{\{'max\_depth': 6, 'ridge\_alpha': 0.001, 'step\_size': 0.01, 'threshold': 0.03\}} \\
                \midrule 
        \multirow{3}{*}{twisted\_sigmoid} 
        & CART & \texttt{\{'max\_depth': 9, 'min\_samples\_leaf': 2, 'min\_samples\_split': 5\}} \\
        & XGB & \texttt{\{'learning\_rate': 0.1, 'max\_depth': 3, 'n\_estimators': 100, 'subsample': 0.7\}} \\
        & HRT & \texttt{\{'max\_depth': 4, 'ridge\_alpha': 0.001, 'step\_size': 0.5, 'threshold': 0.01\}} \\
                \midrule 
        \multirow{3}{*}{$f_1(x_1,x_2)$} 
        & CART & \texttt{\{'max\_depth': 11, 'min\_samples\_leaf': 1, 'min\_samples\_split': 2\}} \\
        & XGB & \texttt{\{'learning\_rate': 0.2, 'max\_depth': 5, 'n\_estimators': 200, 'subsample': 0.7\}} \\
        & HRT & \texttt{\{'max\_depth': 12, 'threshold': 0.01, 'ridge\_alpha': 0, 'step\_size': 1\}} \\
                \midrule 
                \multirow{3}{*}{ $f_2(x_1,x_2)$} 
        & CART & \texttt{\{'max\_depth': 11, 'min\_samples\_leaf': 1, 'min\_samples\_split': 2\}} \\
        & XGB & \texttt{\{'learning\_rate': 0.2, 'max\_depth': 5, 'n\_estimators': 200, 'subsample': 0.7\}} \\
        & HRT & \texttt{\{'max\_depth': 12, 'threshold': 0.01, 'ridge\_alpha': 0, 'step\_size': 1\}} \\
                        \midrule 
        \multirow{3}{*}{$f_3(x_1,x_2)$} 
        & CART & \texttt{\{'max\_depth': 11, 'min\_samples\_leaf': 4, 'min\_samples\_split': 2\}} \\
        & XGB & \texttt{\{'learning\_rate': 0.1, 'max\_depth': 5, 'n\_estimators': 200, 'subsample': 0.7\}} \\
        & HRT & \texttt{\{'max\_depth': 8, 'threshold': 0.05, 'ridge\_alpha': 0, 'step\_size': 1\}} \\
                                \midrule 
        \multirow{3}{*}{$f_4(x_1,x_2)$} 
        & CART & \texttt{\{'max\_depth': 11, 'min\_samples\_leaf': 1, 'min\_samples\_split': 2\}} \\
        & XGB & \texttt{\{'learning\_rate': 0.1, 'max\_depth': 5, 'n\_estimators': 200, 'subsample': 0.7\}} \\
        & HRT & \texttt{\{'max\_depth': 12, 'threshold': 0.05, 'ridge\_alpha': 0, 'step\_size': 1\}} \\
        \bottomrule
    \end{tabular}
\end{table*}

\begin{table*}[htbp] 
%\small
%\footnotesize
%\scriptsize
\tiny
    \centering
    \caption{Optimal Hyperparameters Across Various Datasets for Different Models} 
    \label{tab:all_datasets_Hyperparameters}
    \begin{tabular}{l l l} 
        \toprule
        \textbf{Dataset } & \textbf{Model} & \textbf{Best Hyperparameters} \\
        \midrule 
        \multirow{3}{*}{ Abalone } 
        & M5 & \texttt{\{'M': 10.0\}} \\
        & Linear tree & \texttt{\{'max\_depth': 3, 'min\_samples\_leaf': 40\}} \\
        & HRT & \texttt{\{'max\_depth': 2, 'ridge\_alpha': 0, 'step\_size': 1, 'threshold': 1\}} \\
                \midrule 
        \multirow{3}{*}{CPUact } 
        & M5 & \texttt{\{'M': 4.0\}} \\
        & Linear tree & \texttt{\{'max\_depth': 3, 'min\_samples\_leaf': 10\}} \\
        & HRT & \texttt{\{'max\_depth': 2, 'ridge\_alpha': 0, 'step\_size': 0.5, 'threshold': 0.5\}} \\
                \midrule 
        \multirow{3}{*}{Ailerons } 
        & M5 & \texttt{\{'M': 40.0 \}} \\
        & Linear tree & \texttt{\{'max\_depth': 3, 'min\_samples\_leaf': 10\}} \\
        & HRT & \texttt{\{'max\_depth': 2, 'ridge\_alpha': 1, 'step\_size': 1, 'threshold': 5e-05\}} \\
         \midrule 
                \multirow{3}{*}{CTSlice } 
                & M5 & \texttt{\{'M': 20.0\}} \\
        & Linear tree & \texttt{\{'max\_depth': 5, 'min\_samples\_leaf': 10\}} \\
        & HRT & \texttt{\{'max\_depth': 7, 'ridge\_alpha': 10, 'step\_size': 1, 'threshold': 0.2\}} \\
                \midrule 
        \multirow{3}{*}{YearPred } 
        & M5 & \texttt{\{'M': 200.0\}} \\
        & Linear tree & \texttt{\{ max\_depth=5, min\_samples\_leaf=2000 \}} \\
        & HRT & \texttt{\{'max\_depth': 4, 'ridge\_alpha': 0, 'step\_size': ‘auto’, 'threshold': 0.5\}} \\
                \midrule 
        \multirow{4}{*}{Concrete } 
        & XGB & \texttt{\{'learning\_rate': 0.01, 'max\_depth': 7, 'n\_estimators': 200, 'subsample': 0.7\}} \\
        & M5 & \texttt{\{'M': 4.0\}} \\
        & Linear tree & \texttt{\{'max\_depth': 3, 'min\_samples\_leaf': 40\}} \\
        & HRT & \texttt{\{'max\_depth': 3, 'ridge\_alpha': 0.1, 'step\_size': 0.5, 'threshold': 6\}} \\
                \midrule 
        \multirow{4}{*}{Airfoil } 
        & XGB & \texttt{\{'learning\_rate': 0.01, 'max\_depth': 7, 'n\_estimators': 200, 'subsample': 0.7\}} \\
        & M5 & \texttt{\{ M': 4.0\}} \\
        & Linear tree & \texttt{\{max\_depth': 6, 'min\_samples\_leaf': 50 \}} \\
        & HRT & \texttt{\{'max\_depth': 5, 'ridge\_alpha': 0.01, 'step\_size': ‘auto’, 'threshold': 1.5\}} \\
                        \midrule 
        \multirow{5}{*}{Fried } 
        & CART & \texttt{\{'max\_depth': 11, 'min\_samples\_leaf': 4, 'min\_samples\_split': 10\}} \\
        & XGB & \texttt{\{'learning\_rate': 0.1, 'max\_depth': 5, 'n\_estimators': 200, 'subsample': 0.7\}} \\
        & M5 & \texttt{\{'M': 4.0\}} \\
        & Linear tree & \texttt{\{'max\_depth': 5, 'min\_samples\_leaf': 10\}} \\
        & HRT & \texttt{\{'max\_depth': 5, 'ridge\_alpha': 0.1, 'step\_size': 0.1, 'threshold': 0\}} \\
                                \midrule 
        \multirow{5}{*}{D-Elevators } 
        & CART & \texttt{\{'max\_depth': 5, 'min\_samples\_leaf': 2, 'min\_samples\_split': 10\}} \\
        & XGB & \texttt{\{'learning\_rate': 0.01, 'max\_depth': 5, 'n\_estimators': 200, 'subsample': 0.7\}} \\
        & M5 & \texttt{\{'M': 20.0\}} \\
        & Linear tree & \texttt{\{'max\_depth': 5, 'min\_samples\_leaf': 20\}} \\
        & HRT & \texttt{\{'max\_depth': 5, 'ridge\_alpha': 10, 'step\_size': 1, 'threshold': 0\}} \\
                                \midrule 
        \multirow{5}{*}{D-Ailerons } 
        & CART & \texttt{\{'max\_depth': 5, 'min\_samples\_leaf': 4, 'min\_samples\_split': 10\}} \\
        & XGB & \texttt{\{'learning\_rate': 0.1, 'max\_depth': 3, 'n\_estimators': 200, 'subsample': 0.7\}} \\
        & M5 & \texttt{\{'M': 20.0\}} \\
        & Linear tree & \texttt{\{'max\_depth': 3, 'min\_samples\_leaf': 20\}} \\
        & HRT & \texttt{\{'max\_depth': 3, 'ridge\_alpha': 10, 'step\_size': 'auto', 'threshold': 0\}} \\
                                \midrule 
        \multirow{5}{*}{Kinematics } 
        & CART & \texttt{\{'max\_depth': 9, 'min\_samples\_leaf': 4, 'min\_samples\_split': 10\}} \\
        & XGB & \texttt{\{'learning\_rate': 0.1, 'max\_depth': 7, 'n\_estimators': 200, 'subsample': 0.7\}} \\
        & M5 & \texttt{\{'M': 4.0\}} \\
        & Linear tree & \texttt{\{'max\_depth': 7, 'min\_samples\_leaf': 40\}} \\
        & HRT & \texttt{\{'max\_depth': 6, 'ridge\_alpha': 1, 'step\_size': 'auto', 'threshold': 0\}} \\
                                        \midrule 
        \multirow{5}{*}{C\&C } 
        & CART & \texttt{\{'max\_depth': 3, 'min\_samples\_leaf': 4, 'min\_samples\_split': 2\}} \\
        & XGB & \texttt{\{'learning\_rate': 0.01, 'max\_depth': 5, 'n\_estimators': 200, 'subsample': 0.7\}} \\
        & M5 & \texttt{\{'M': 40.0\}} \\
        & Linear tree & \texttt{\{'max\_depth': 9, 'min\_samples\_leaf': 10\}} \\
        & HRT & \texttt{\{'max\_depth': 1, 'ridge\_alpha': 300, 'step\_size': 0.01, 'threshold': 0.0\}} \\
        \bottomrule
                \multirow{4}{*}{Blog} 
        & CART & \texttt{\{'max\_depth': 5, 'min\_samples\_leaf': 1, 'min\_samples\_split': 2\}} \\
        & XGB & \texttt{\{'learning\_rate': 0.05, 'max\_depth': 3, 'n\_estimators': 400\}} \\
        & M5 & \texttt{\{'M': 20.0\}} \\
        & HRT & \texttt{\{'max\_depth': 2, 'ridge\_alpha': 100, 'step\_size': 0.01, 'threshold': 0\}} \\
        \bottomrule
    \end{tabular}
\end{table*}

\begin{table}[htbp]
\small
	\centering
        \caption{Hyperparameters for DGT and DTSemNet}
         \label{tab:hyperparameters of DGT and DTSemNet}  
         \begin{tabular}{lcccccccc} 
         \toprule
                Parameters & DTSemNet & DGT \\ 
         \midrule
            Height & 5 & 6 \\
            Learning Rate & 0.005 & 0.01\\
            Batch Size & 32 & 128\\
            Num Epochs & 80 & 200\\
            Optimizer & Adam & RMSprop\\
         \bottomrule
         \end{tabular}
\end{table}
\newpage

\section{Dataset Details}

$\bullet$ \textbf{Abalone:} This dataset is designed to predict the age of abalone, given eight features including categorical and numerical measurements describing physical characteristics.

$\bullet$ \textbf{CPUact:} This dataset is designed to predict the portion of time that CPUs run in user mode, given numerical features describing system performance measurements.

$\bullet$ \textbf{Ailerons:} This dataset is designed to predict the aileron control command, given numerical features describing the status of the aircraft.

$\bullet$ \textbf{CTSlice:} This dataset is designed to predict the relative location of CT slices on the axial axis of the human body, given numerical features describing bone and air distribution patterns extracted from CT images.

$\bullet$ \textbf{YearPred:} This dataset is designed to predict the release year of a song, given numerical features describing audio characteristics.

$\bullet$ \textbf{Concrete:} This dataset is designed to predict the compressive strength of concrete, given eight quantitative mixture components together with the age of the concrete as input features.

$\bullet$ \textbf{Airfoil:} This dataset is designed to predict the scaled sound pressure level of airfoils, given five numerical features describing aerodynamic and geometric properties.

$\bullet$ \textbf{Fried:} This dataset is designed to predict a continuous target variable, given ten numerical features generated independently and uniformly.

$\bullet$ \textbf{D-Elevators:} This dataset is designed to predict the variation of elevator control signals for an F16 aircraft, given six numerical features describing the aircraft state.

$\bullet$ \textbf{D-Ailerons:} This dataset is designed to predict the variation of aileron control signals for an F16 aircraft, given numerical features describing the aircraft state.

$\bullet$ \textbf{Kinematics:} This dataset is designed to predict the forward kinematics of an 8-link robot arm, given numerical features describing joint configurations. The variant used (8nm) is highly nonlinear and moderately noisy.

$\bullet$ \textbf{C\&C (Communities \& Crime):} This dataset combines census, law-enforcement, and crime statistics to predict community crime rates. It is high-dimensional and heterogeneous, serving as a standard benchmark for socio-demographic prediction.

$\bullet$ \textbf{Blog:} This dataset predicts the number of comments on blog posts using text-derived features. It features a skewed target distribution and nonlinear relationships, making it suitable for modeling online content influence.

\begin{table}[htbp]
\small
	\centering
        \caption{Dataset Details }
         \label{tab:data_details}  
         \begin{tabular}{lcccccccc} 
         \toprule
                Dataset  & features & Total sample number   & Source\\ 
         \midrule
            Abalone & 8 & 4177  &  UCI\footnotemark[1]\\
            CPUact  & 21 & 8192  & Delve\footnotemark[2]\\
            Ailerons  & 40 &13750 & LIACC\footnotemark[3] \\
            CTSlice  & 384 &53500 &  UCI\footnotemark[1]\\
            YearPred  & 90 & 515345 &  UCI\footnotemark[1]\\
                        Concrete  & 8 & 1030 &  UCI\footnotemark[1]\\
            Airfoil  & 5 & 1503  &  UCI\footnotemark[1]\\
            Fried & 10 & 40768 & LIACC\footnotemark[3]\\
            D-Elevators  & 6 & 9517 & LIACC\footnotemark[3]\\
           D-Ailerons  & 5 & 7129 & LIACC\footnotemark[3]\\
            Kinematics  & 8 & 8192 & LIACC\footnotemark[3]\\
            C\&C & 127 & 1994  &  UCI\footnotemark[1]\\
            Blog & 280 & 60021  &  UCI\footnotemark[1]\\
         \bottomrule
         \end{tabular}
\end{table}

\section{\textcolor{black}{Extending HRT to Binary Classification}}
\label{apdx:binary_classification}

\textcolor{black}{
Although HRT was originally developed for regression tasks, it naturally extends to binary classification. We treat the binary labels $\{0, 1\}$ directly as continuous regression targets, allowing the model to learn locally optimal linear decision functions at each leaf. At inference, we interpret the tree output $\hat{y}(x)$ as a real-valued score, clip it to $[0,1]$ to obtain a class-probability estimate, and predict the positive class if $\hat{y}(x) \ge 0.5$ and the negative class otherwise.}

\textcolor{black}{
 To assess the classification capability of HRT, we evaluated it on five public binary classification datasets (Breast-w, Credit-g, Banknote, Spambase, and Diabetes), comparing its performance with that of CART and XGBoost. Across these datasets, HRT achieves AUC, Accuracy, and F1 scores that are broadly competitive with—and in several cases exceed—those of XGBoost, while requiring substantially fewer leaf nodes. This reduction in model complexity contributes to improved interpretability without sacrificing predictive accuracy. The detailed results (mean ± standard deviation over 5 random repetitions) are reported in Table~\ref{tab:real_world_performance_mixed_std}.}

\textcolor{black}{
To ensure a fair and consistent comparison, we conducted independent hyperparameter tuning for each model on every dataset. Although XGBoost constitutes a strong ensemble baseline due to its use of multiple boosted trees, HRT employs only a single-tree structure. Despite this difference in model complexity, the empirical results indicate that HRT can reach competitive levels of predictive performance while maintaining a simpler and more transparent model form. The best hyperparameter configurations selected for all models are summarized in Table~\ref{tab:classification_hyperparameters}.}

\begin{table}[htbp]
\scriptsize
\caption{Performance comparison on five real-world datasets. Values for AUC, ACC, and F1 are reported as mean \( \pm \) standard deviation, while values for Leaves and Fit Time are reported as mean only. For AUC, ACC, and F1, higher is better (\(\uparrow\)). For Leaves and Fit Time, lower is better (\(\downarrow\)). The best result in each column for each dataset is bolded. Significant improvements over the best baseline are marked with \dag{} (\(p<0.05\)).}
\label{tab:real_world_performance_mixed_std}
\centering
\begin{tabular}{llccccc}
\toprule
Dataset (\(N_f, N_s\)) & Model & AUC(\(\uparrow\)) & ACC(\(\uparrow\)) & F1(\(\uparrow\)) & Leaves(\(\downarrow\)) & Fit Time (s)(\(\downarrow\)) \\
\midrule

%--- breast-w ---%
\multirow{3}{*}{Breast-w (9,699)}
& CART       & 0.960 \( \pm \) 0.006 & 0.931 \( \pm \) 0.013 & 0.899 \( \pm \) 0.022 & 11.2 & \textbf{0.006} \\
& XGBoost    & 0.986 \( \pm \) 0.006 & 0.949 \( \pm \) 0.006 & 0.927 \( \pm \) 0.010 & N/A  & 1.733 \\
& Our Method & \textbf{0.993 \( \pm \) 0.003\dag{}} & \textbf{0.956 \( \pm \) 0.007} & \textbf{0.935 \( \pm \) 0.010} & \textbf{2.0} & 0.044 \\
\midrule

%--- credit-g ---%
\multirow{3}{*}{Credit-g (20,1000)}
& CART       & 0.707 \( \pm \) 0.019 & 0.710 \( \pm \) 0.017 & 0.803 \( \pm \) 0.013 & 22.2 & \textbf{0.023} \\
& XGBoost    & 0.782 \( \pm \) 0.007 & 0.750 \( \pm \) 0.005 & \textbf{0.843 \( \pm \) 0.003} & N/A  & 1.155 \\
& Our Method & \textbf{0.791 \( \pm \) 0.009} & \textbf{0.760 \( \pm \) 0.004\dag{}} & 0.840 \( \pm \) 0.005 & \textbf{1.0} & 0.040 \\
\midrule

%--- banknote-authentication ---%
\multirow{3}{*}{Banknote (4,1372)}
& CART       & 0.983 \( \pm \) 0.008 & 0.983 \( \pm \) 0.007 & 0.981 \( \pm \) 0.008 & 19.8 & \textbf{0.005} \\
& XGBoost    & 0.999 \( \pm \) 0.000 & \textbf{0.987 \( \pm \) 0.007} & \textbf{0.986 \( \pm \) 0.008} & N/A  & 1.114 \\
& Our Method & \textbf{1.000 \( \pm \) 0.000} & 0.984 \( \pm \) 0.002 & 0.982 \( \pm \) 0.002 & \textbf{2.0} & 0.006 \\
\midrule

%--- spambase ---%
\multirow{3}{*}{Spambase (57,4601)}
& CART       & 0.924 \( \pm \) 0.010 & 0.900 \( \pm \) 0.009 & 0.868 \( \pm \) 0.011 & 22.2 & \textbf{0.030} \\
& XGBoost    & \textbf{0.981 \( \pm \) 0.003} & \textbf{0.940 \( \pm \) 0.008} & \textbf{0.923 \( \pm \) 0.010} & N/A  & 1.928 \\
& Our Method & 0.966 \( \pm \) 0.006 & 0.922 \( \pm \) 0.006 & 0.900 \( \pm \) 0.007 & \textbf{8.0} & 0.901 \\
\midrule

%--- diabetes ---%
\multirow{3}{*}{Diabetes (8,768)}
& CART       & 0.757 \( \pm \) 0.020 & 0.731 \( \pm \) 0.010 & 0.549 \( \pm \) 0.089 & 7.4  & \textbf{0.005} \\
& XGBoost    & \textbf{0.818 \( \pm \) 0.014} & 0.755 \( \pm \) 0.022 & 0.627 \( \pm \) 0.030 & N/A  & 0.306 \\
& Our Method & 0.817 \( \pm \) 0.012 & \textbf{0.762 \( \pm \) 0.017} & \textbf{0.629 \( \pm \) 0.025} & \textbf{2.0} & 0.031 \\
\bottomrule
\end{tabular}
\end{table}

\begin{table*}[htbp]
    \tiny
    \centering
    \caption{ Hyperparameters for Classification Datasets}
    \label{tab:classification_hyperparameters}
    \begin{tabular}{l l l}
        \toprule
        \textbf{Dataset } & \textbf{Model} & \textbf{Best Hyperparameters} \\
        \midrule
        \multirow{3}{*}{Breast-w} 
        & CART       & \texttt{\{'max\_depth': 5, 'min\_samples\_leaf': 4, 'min\_samples\_split': 2\}} \\
        & XGBoost    & \texttt{\{'learning\_rate': 0.05, 'max\_depth': 5, 'n\_estimators': 200, 'subsample': 0.7\}} \\
        & Our Method & \texttt{\{'max\_depth': 1, 'ridge\_alpha': 1, 'step\_size': 1, 'threshold': 0\}} \\
        \midrule
        \multirow{3}{*}{Credit-g} 
        & CART       & \texttt{\{'max\_depth': 5, 'min\_samples\_leaf': 2, 'min\_samples\_split': 10\}} \\
        & XGBoost    & \texttt{\{'learning\_rate': 0.01, 'max\_depth': 3, 'n\_estimators': 200, 'subsample': 0.7\}} \\
        & Our Method & \texttt{\{'max\_depth': 1, 'ridge\_alpha': 10, 'step\_size': 0.01, 'threshold': 0.5\}} \\
        \midrule
        \multirow{3}{*}{Banknote} 
        & CART       & \texttt{\{'max\_depth': 7, 'min\_samples\_leaf': 1, 'min\_samples\_split': 2\}} \\
        & XGBoost    & \texttt{\{'learning\_rate': 0.1, 'max\_depth': 3, 'n\_estimators': 200, 'subsample': 1.0\}} \\
        & Our Method & \texttt{\{'max\_depth': 1, 'ridge\_alpha': 0, 'step\_size': 0.01, 'threshold': 0\}} \\
        \midrule
        \multirow{3}{*}{Spambase} 
        & CART       & \texttt{\{'max\_depth': 5, 'min\_samples\_leaf': 4, 'min\_samples\_split': 10\}} \\
        & XGBoost    & \texttt{\{'learning\_rate': 0.05, 'max\_depth': 5, 'n\_estimators': 200, 'subsample': 0.7\}} \\
        & Our Method & \texttt{\{'max\_depth': 3, 'ridge\_alpha': 10, 'step\_size': 1, 'threshold': 0\}} \\
        \midrule
        \multirow{3}{*}{Diabetes} 
        & CART       & \texttt{\{'max\_depth': 3, 'min\_samples\_leaf': 2, 'min\_samples\_split': 2\}} \\
        & XGBoost    & \texttt{\{'learning\_rate': 0.05, 'max\_depth': 3, 'n\_estimators': 50, 'subsample': 0.7\}} \\
        & Our Method & \texttt{\{'max\_depth': 1, 'ridge\_alpha': 1, 'step\_size': 1, 'threshold': 0\}} \\
        \bottomrule
    \end{tabular}
\end{table*}

\footnotetext[1]{\url{https://archive.ics.uci.edu/}}
\footnotetext[2]{\url{https://www.cs.toronto.edu/~delve/data/comp-activ/desc.html}}
\footnotetext[3]{\url{https://www.dcc.fc.up.pt/~ltorgo/Regression/DataSets.html}}

\section{The Use of Large Language Models (LLMs)}
Following the ICLR 2026 guidelines regarding the use of Large Language Models (LLMs), we hereby disclose that LLMs were utilized in the preparation of this paper: 
\begin{itemize}
    \item \textbf{Text Drafting and Refinement:} Assisting in drafting and revising portions of the text, such as rephrasing existing sentences, checking grammar, and improving linguistic flow.
    \item \textbf{Content Clarification and Summarization:} Helping to clarify complex concepts or summarize lengthy explanations.
    \item \textbf{Formatting and Structural Suggestions:} Providing advice on LaTeX formatting and assisting in structuring certain sections.
\end{itemize}
All content generated by the LLM underwent rigorous review, editing, and validation by the human authors. The human authors bear full responsibility for the entirety of the paper's content. This disclosure is made to ensure transparency and comply with ICLR's policy.

\end{document}

%% file: math_commands.tex
\usepackage{amsmath,amsfonts,bm}

\def\eqref#1{equation~\ref{#1}}
\def\1{\bm{1}}

\def\vtheta{{\bm{\theta}}}

\def\vx{{\bm{x}}}
\def\vy{{\bm{y}}}

\DeclareMathAlphabet{\mathsfit}{\encodingdefault}{\sfdefault}{m}{sl}
\SetMathAlphabet{\mathsfit}{bold}{\encodingdefault}{\sfdefault}{bx}{n}

\newcommand{\R}{\mathbb{R}}

\DeclareMathOperator*{\argmin}{arg\,min}